\def\eqref#1{equation~\ref{#1}}
\def\1{\bm{1}}
\def\mB{{\bm{B}}}
\def\mP{{\bm{P}}}
\DeclareMathAlphabet{\mathsfit}{\encodingdefault}{\sfdefault}{m}{sl}
\SetMathAlphabet{\mathsfit}{bold}{\encodingdefault}{\sfdefault}{bx}{n}
\def\gB{{\mathcal{B}}}
\def\gF{{\mathcal{F}}}
\def\gG{{\mathcal{G}}}
\def\gN{{\mathcal{N}}}
\def\gP{{\mathcal{P}}}
\def\gW{{\mathcal{W}}}
\def\sE{{\mathbb{E}}}
\def\sR{{\mathbb{R}}}
\DeclareMathOperator*{\argmin}{arg\,min}
\newcommand{\mdsone}{\mathds{1}}
\newcommand{\msP}{\mathsf{P}}
\newcommand{\mcs}{\mathcal S}
\newcommand{\mca}{\mathcal A}
\newcommand{\ltwo}[1]{\left\|#1\right\|_2}
\newcommand{\lone}[1]{\left|#1\right|}
\newcommand{\linf}[1]{\left\|#1\right\|_\infty}
\newcommand{\lmupi}[1]{\left\|#1\right\|_{\mu_{\pi}}}
\newcommand{\ld}[1]{\left\|#1\right\|_{\mathcal{D}}}
\newcommand{\lmuW}[1]{\left\|#1\right\|_{\mu_{\pi_W}}}
\newcommand{\lmutauWt}[1]{\left\|#1\right\|_{\mu_{\pi_{\tau_{t} W_{t}}}}}
\newcommand{\lnutauWt}[1]{\left\|#1\right\|_{\nu_{\pi_{\tau_{t} W_{t}}}}}
\newtheorem{theorem}{Theorem}
\newtheorem{lemma}{Lemma}
\newtheorem{assumption}{Assumption}
\newtheorem{remark}{Remark}
\icmltitlerunning{CRPO: A New Approach for Safe Reinforcement Learning with Convergence Guarantee}
\begin{document}

\twocolumn[
\icmltitle{CRPO: A New Approach for Safe Reinforcement Learning with Convergence Guarantee}




\begin{icmlauthorlist}
\icmlauthor{Tengyu Xu}{to}
\icmlauthor{Yingbin Lang}{to}
\icmlauthor{Guanghui Lan}{ed}
\end{icmlauthorlist}

\icmlaffiliation{to}{Department of Electrical and Computer Engineering, The Ohio State University, OH, United States}
\icmlaffiliation{ed}{Industrial and Systems Engineering, Georgia Institute of Technology, GA, United States}

\icmlcorrespondingauthor{Tengyu Xu}{xu.3260@osu.edu}

\icmlkeywords{Reinforcement Learning, Constrained Markov Decision Process, Global Convergence}

\vskip 0.3in
]



\printAffiliationsAndNotice{}  

\begin{abstract}
In safe reinforcement learning (SRL) problems, an agent explores the environment to maximize an expected total reward and meanwhile avoids violation of certain constraints on a number of expected total costs. In general, such SRL problems have nonconvex objective functions subject to multiple nonconvex constraints, and hence are very challenging to solve, particularly to provide a globally optimal policy. Many popular SRL algorithms adopt a primal-dual structure which utilizes the updating of dual variables for satisfying the constraints. In contrast, we propose a primal approach, called constraint-rectified policy optimization (CRPO), which updates the policy alternatingly between objective improvement and constraint satisfaction. CRPO provides a primal-type algorithmic framework to solve SRL problems, where each policy update can take any variant of policy optimization step. To demonstrate the theoretical performance of CRPO, we adopt natural policy gradient (NPG) for each policy update step and show that CRPO achieves an $\mathcal{O}(1/\sqrt{T})$ convergence rate to the global optimal policy in the constrained policy set and an $\mathcal{O}(1/\sqrt{T})$ error bound on constraint satisfaction. This is the first finite-time analysis of primal SRL algorithms with global optimality guarantee. Our empirical results demonstrate that CRPO can outperform the existing primal-dual baseline algorithms significantly.
\end{abstract}

\vspace{-0.5cm}
\section{Introduction}
\vspace{-0.2cm}
Reinforcement learning (RL) has achieved great success in solving complex sequential decision-making and control problems such as Go \cite{silver2017mastering}, StarCraft \cite{deepmind2019mastering} and recommendation system \cite{zheng2018drn}, etc. In these settings, the agent is allowed to explore the entire state and action space to maximize the expected total reward. However, in safe RL (SRL), in addition to maximizing the reward, an agent needs to satisfy certain constraints. Examples include self-driving cars \cite{fisac2018general}, cellular network \cite{julian2002qos}, and robot control \cite{levine2016end}. 
The global optimal policy in SRL is the one that maximizes the reward and at the same time satisfies the cost constraints. 

The current safe RL algorithms can be generally categorized into the {\bf primal} and {\bf primal-dual} approaches. The {\bf primal-dual} approaches \cite{tessler2018reward,ding2020provably,stooke2020responsive,yu2019convergent,achiam2017constrained,yang2019projection,altman1999constrained,borkar2005actor,bhatnagar2012online,liang2018accelerated,paternain2019safe}
are most commonly used, which convert the constrained problem into an unconstrained one by augmenting the objective with a sum of constraints weighted by their corresponding Lagrange multipliers (i.e., dual variables). Generally, primal-dual algorithms apply a certain policy optimization update such as policy gradient alternatively with a gradient descent type update for the dual variables.
Theoretically, \cite{tessler2018reward} has provided an asymptotic convergence analysis for primal-dual method and established a local convergence guarantee. \cite{paternain2019constrained} showed that the primal-dual method achieves zero duality gap.
Recently, \cite{ding2020provably} proposed a primal-dual type proximal policy optimization (PPO) and established the regret bound for linear constrained MDP. The convergence rate of primal-dual method based on a natural policy gradient algorithm was characterized in \cite{ding2020natural}.

The {\bf primal} type of approaches \cite{liu2019ipo,chow2018lyapunov,chow2019lyapunov,dalal2018safe} enforce constraints via various designs of the objective function or the update process without an introduction of dual variables. The {\bf primal} algorithms are much less studied than the primal-dual approach. Notably, \cite{liu2019ipo} developed an interior point method, which applies logarithmic barrier functions for SRL. \cite{chow2018lyapunov,chow2019lyapunov} leveraged Lyapunov functions to handle constraints. \cite{dalal2018safe} introduced a safety layer to the policy network to enforce constraints. None of the existing primal algorithms are shown to have provable convergence guarantee to a globally optimal feasible policy. 

Comparing between the primal-dual and primal approaches, the primal-dual approach can be sensitive to the initialization of Lagrange multipliers and the learning rate, and can thus incur extensive cost in hyperparameter tuning \cite{achiam2017constrained,chow2019lyapunov}. In contrast, the primal approach does not introduce additional dual variables to optimize and involves less hyperparamter tuning, and hence holds the potential to be much easier to implement than the primal-dual approach. However, the existing {\bf primal} algorithms are not yet popular in practice so far, because of no guaranteed global convergence and no strong demonstrations to have competing performance as the primal-dual algorithms. Thus, in order to take the advantage of the primal approach which is by nature easier to implement, we need to answer the following fundamental questions.
\begin{list}{$\rhd$}{\topsep=0.ex \leftmargin=0.2in \rightmargin=0.in \itemsep =0.0in}
\item Can we design a primal algorithm for SRL, and demonstrate that it achieves competing performance or outperforms the baseline primal-dual approach?
\item If so, can we establish global optimality guarantee and the finite-time convergence rate for the proposed primal algorithm?
\end{list}

In this paper, we will provide the affirmative answers to the above questions, thus establishing appealing advantages of the primal approach for SRL.

\subsection{Main Contributions}

{\bf A New Algorithm:} We propose a novel primal approach called {\bf Constraint-Rectified Policy Optimization (CRPO)} for SRL, where all updates are taken in the primal domain. 
CRPO applies {\em unconstrained} policy maximization update w.r.t.\ the reward on the one hand, and if any constraint is violated, momentarily rectifies the policy back to the constraint set along the descent direction of the violated constraint also by applying {\em unconstrained} policy minimization update w.r.t.\ the constraint function. From the implementation perspective, CRPO can be implemented as easy as unconstrained policy optimization algorithms. Without introduction of dual variables, it does not suffer from hyperparameter tuning of the learning rates to which the dual variables are sensitive, nor does it require initialization to be feasible. Further, CRPO involves only policy gradient descent for both objective and constraints, whereas the primal-dual approach typically requires {\em projected} gradient descent, where the projection causes higher complexity to implementation as well as hyperparameter tuning due to the projection thresholds.

To further explain the advantage of CRPO over the primal-dual approach, CRPO features {\bf immediate switches} between optimizing the objective and reducing the constraints whenever constraints are violated. However, the primal-dual approach can respond much slower because the control is based on dual variables. If a dual variable is nonzero, then the policy update will descend along the corresponding constraint function. As a result, even if a constraint is already satisfied, there can often be a significant delay for the dual variable to iteratively reduce to zero to release the constraint, which slows down the algorithm. Our experiments in \Cref{sec:experiment} validates such a performance advantage of CRPO over the primal-dual approach.




{\bf Theoretical Guarantee:} To provide the theoretical guarantee for CRPO, we adopt NPG as a representative policy optimizer and investigate the convergence of CRPO in two settings: tabular and function approximation, where in the function approximation setting the state space can be infinite. For both settings, we show that CRPO converges to a global optimum at a convergence rate of $\mathcal{O}(1/\sqrt{T})$. Furthermore, the constraint violation also converges to zero at a rate of $\mathcal{O}(1/\sqrt{T})$. 
To the best of our knowledge, we establish the first provably global optimality guarantee for a primal SRL algorithm of CRPO. 

To compare with the primal-dual approach in the function approximation setting, the value function gap of CRPO achieves the same convergence rate as the primal-dual approach, but the constraint violation of CRPO decays at a rate of $\mathcal{O}(1/\sqrt{T})$, which is much faster than the rate $\mathcal{O}(1/T^{\frac{1}{4}})$ of the primal-dual approach \cite{ding2020natural}.

Technically, our analysis has the following novel developments. (a) We develop a new technique to analyze a stochastic approximation (SA) that randomly and dynamically switches  between the target objectives of the reward and the constraint. Such an SA by nature is different from the analysis of a typical policy optimization algorithm, which has a fixed target objective to optimize. Our analysis constructs novel concentration events for capturing the impact of such a dynamic process on the update of the reward and cost functions in order to establish the high probability convergence guarantee. (b) We also develop new tools to handle multiple constraints, which is particularly non-trivial for our algorithm that involves stochastic selection of a constraint if multiple constraints are violated. 



\subsection{Related Work}
\textbf{Safe RL: }Algorithms based on primal-dual methods have been widely adopted for solving constrained RL problems, such as PDO \cite{chow2017risk}, RCPO \cite{tessler2018reward}, OPDOP \cite{ding2020provably} and CPPO \cite{stooke2020responsive}. Constrained policy optimization (CPO) \cite{achiam2017constrained} extends TRPO to handle constraints, and is later modified with a two-step projection method \cite{yang2019projection}. 
The effectiveness of primal-dual methods is justified in \cite{paternain2019constrained}, in which zero duality gap is guaranteed under certain assumptions. 
A recent work \cite{ding2020natural} established the convergence rate of the primal-dual method under Slater's condition assumption.
Other methods have also been proposed. For example, \cite{chow2018lyapunov,chow2019lyapunov} leveraged Lyapunov functions to handle constraints. \cite{yu2019convergent} proposed a constrained policy gradient algorithm with convergence guarantee by solving a sequence of sub-problems. \cite{dalal2018safe} proposed to add a safety layer to the policy network so that constraints can be satisfied at each state. \cite{liu2019ipo} developed an interior point method for safe RL, which augments the objective with logarithmic barrier functions. 
Our work proposes a CRPO algorithm, which can be implemented as easy as unconstrained policy optimization methods and has global optimality guarantee under general constrained MDP. Our result is the first convergence rate characterization of primal-type algorithms for SRL.

\textbf{Finite-Time Analysis of Policy Optimization: }The finite-time analysis of various policy optimization algorithms under unconstrained MDPs have been well studied. The convergence rate of policy gradient (PG) and actor-critic (AC) algorithms have been established in \cite{shen2019hessian,papini2017adaptive,papini2018stochastic,xu2019sample,xu2019improved,xiong2020non,zhang2019global} and \cite{xu2020improving,wang2019neural,yang2019provably,kumar2019sample,qiu2019finite}, respectively, in which PG or AC algorithm is shown to converge to a local optimal. In some special settings such as tabular and LQR, PG and AC can be shown to convergence to the global optimal \cite{agarwal2019optimality,yang2019provably,fazel2018global,malik2018derivative,tu2018gap,bhandari2019global,bhandari2020note}. Algorithms such as NPG, NAC, TRPO and PPO explore the second order information, and achieve great success in practice. These algorithms have been shown to converge to a global optimum in various settings, where the convergence rate has been established in \cite{agarwal2019optimality,shani2019adaptive,liu2019neural,wang2019neural,cen2020fast,xu2020non}. 

\section{Problem Formulation and Preliminaries}
\subsection{Markov Decision Process}

A discounted Markov decision process (MDP) is a tuple $(\mcs, \mca,c_0,\msP,\xi,\gamma)$, where $\mcs$ and $\mca$ are state and action spaces; $c_0: \mcs \times \mca \times \mcs \rightarrow \sR$ is the reward function; $\msP: \mcs \times \mca \times \mcs \rightarrow [0,1]$ is the transition kernel, with $\msP(s^\prime |s,a)$ denoting the probability of transitioning to state $s^\prime$ from previous state $s$ given action $a$; $\xi: \mcs \rightarrow [0,1]$ is the initial state distribution; and $\gamma\in(0,1)$ is the discount factor. A policy $\pi: \mcs \rightarrow \gP(\mca)$ is a mapping from the state space to the space of probability distributions over the actions, with $\pi(\cdot|s)$ denoting the probability of selecting action $a$ in state $s$. When the associated Markov chain $\msP(s^\prime|s)=\sum_\mca P(s^\prime|s,a)\pi(a|s)$ is ergodic, we denote $\mu_\pi$ as the stationary distribution of this MDP, i.e. $\int_{\mcs}\msP(s^\prime|s)\mu_\pi(ds)=\mu_\pi(s^\prime)$.
Moreover, we define the visitation measure induced by the police $\pi$ as $\nu_\pi(s,a)=(1-\gamma)\sum_{t=0}^{\infty}\gamma^t \msP(s_t=s,a_t=a)$.

For a given policy $\pi$, we define the state value function as
$V^0_\pi(s)=\sE[\sum_{t=0}^{\infty}\gamma^t c_0(s_t,a_t, s_{t+1})|s_0=s,\pi]$, the state-action value function as $Q^0_\pi(s,a)=\sE[\sum_{t=0}^{\infty}\gamma^t c_0(s_t,a_t, s_{t+1})|s_0=s,a_0=a,\pi]$, and the advantage function as $A^0_\pi(s,a)=Q^0_\pi(s,a)-V^0_\pi(s)$. In reinforcement learning, we aim to find an optimal policy that maximizes the expected total reward function defined as $J_0(\pi)=\sE[\sum_{t=0}^{\infty}\gamma^t c_0(s_t,a_t,s_{t+1})]=\sE_\xi[V^0_\pi(s)]=\sE_{\xi\cdot\pi}[Q^0_\pi(s,a)]$.
\subsection{Safe Reinforcement Learning (SRL) Problem}

The SRL problem is formulated as an MDP with additional constraints that restrict the set of allowable policies. Specifically, when taking action at some state, the agent can incur a number of costs denoted by $c_1,\cdots,c_p$, where each cost function $c_i: \mcs\times\mca\times\mcs\rightarrow \sR$ maps a tuple $(s,a,s^\prime)$ to a cost value. Let $J_i(\pi)$ denotes the expected total cost function with respect to $c_i$ as $J_i(\pi)=\sE[\sum_{t=0}^{\infty}\gamma^t c_i(s_t,a_t,s_{t+1})]$. The goal of the agent in SRL is to solve the following constrained problem
\begin{flalign}
	\max\limits_{\pi}J_0(\pi), \,\,\, \text{s.t.}\,\,\, J_i(\pi)\leq d_i,\,\, \forall i=1,\cdots,p,\label{eq: 13}
\end{flalign}
where $d_i$ is a fixed limit for the $i$-th constraint. We denote the set of feasible policies as $\Omega_C\equiv \{\pi: \forall i, J_i(\pi)\leq d_i\}$, and define the optimal policy for SRL as $\pi^*=\argmin_{\pi\in\Omega_C} J_0(\pi)$. For each cost $c_i$, we define its corresponding state value function $V^i_\pi$, state-action value function $Q^i_\pi$, and advantage function $A^i_\pi$ analogously to $V^0_\pi$, $Q^0_\pi$, and $A^0_\pi$, with $c_i$ replacing $c_0$, respectively.
\subsection{Policy Parameterization and Policy Gradient}
In practice, a convenient way to solve the problem \cref{eq: 13} is to parameterize the policy and then optimize the policy over the parameter space. Let $\{\pi_w: \mcs\rightarrow \gP(\mca)|w\in \gW\}$ be a parameterized policy class, where $\gW$ is the parameter space. Then, the problem in \cref{eq: 13} becomes
\begin{flalign}
	\max\limits_{w\in\gW}J_0(\pi_w), \,\,\,\text{s.t.}\,\,\, J_i(\pi_w)\leq d_i,\, \forall i=1,\cdots,p.\label{eq: 1}
\end{flalign}
The policy gradient of the function $J_i(\pi_w)$ has been derived by \cite{sutton2000policy} as $\nabla J_i(\pi_w)=\sE[Q^i_{\pi_w}(s,a)\phi_w(s,a)]$, where $\phi_w(s,a)\coloneqq \nabla_w\log\pi_w(a|s)$ is the score function. Furthermore, the natural policy gradient was defined by \cite{kakade2002natural} as $\Delta_i(w)=F(w)^\dagger\nabla J_i(\pi_w)$, where $F(w)$ is the Fisher information matrix defined as $F(w)=\sE_{\nu_{\pi_w}}[\phi_w(s,a)\phi_w(s,a)^\top]$.

\section{Constraint-Rectified Policy Optimization (CRPO) Algorithm}
\begin{algorithm}[tb]
	\null
	\caption{Constraint-Rectified Policy Optimization (CRPO)}
	\label{algorithm_cpg}
	\small
	\begin{algorithmic}[1]
		\STATE {\bfseries Initialize:} initial parameter $w_0$, empty set $\gN_0$
		\FOR{$t=0,\cdots,T-1$}
		\STATE Policy evaluation under $\pi_{w_t}$: $\bar{Q}^i_t(s,a)\approx Q^i_{\pi_{w_t}}(s,a)$
		\STATE Sample $(s_j,a_j)\in \gB_t\sim \xi\cdot\pi_{w_t}$, compute constrain estimation $\bar{J}_{i,\gB_t}=\sum_{j\in \gB_t}\rho_{j,t}\bar{Q}^i_t(s_j,a_j)$ for $i=0,\cdots,p$,  ($\rho_{j,t}$ is the weight)
		\IF{$\bar{J}_{i,\gB_t}\leq d_i+\eta $ for all $i=1,\cdots,p$,}
		\STATE Add $w_t$ into set $\gN_0$
        \STATE Take one-step policy update towards maximize $J_0(w_t)$: $w_t\rightarrow w_{t+1}$
		\ELSE
		\STATE Choose any $i_t\in\{1,\cdots,p\}$ such that $\bar{J}_{i_t,\gB_t}> d_{i_t}+\eta$
        \STATE Take one-step policy update towards minimize $J_{i_t}(w_t)$: $w_t\rightarrow w_{t+1}$
		\ENDIF
		\ENDFOR
		\STATE {\bfseries Output:} $w_{\text{out}}$ uniformly chosen from $\gN_0$
	\end{algorithmic}
\end{algorithm}
In this section, we propose the CRPO approach (see \Cref{algorithm_cpg}) for solving the SRL problem in \cref{eq: 1}. The idea of CRPO lies in updating the policy to maximize the unconstrained objective function $J_0(\pi_{w_t})$ of the reward, alternatingly with rectifying the policy to reduce a constraint function $J_i(\pi_{w_t})$ $(i\geq1)$ (along the descent direction of this constraint) if it is violated. 
Each iteration of CRPO consists of the following three steps.

\textbf{Policy Evaluation:} At the beginning of each iteration, we estimate the state-action value function $\bar{Q}^i_{\pi_{t}}(s,a)\approx{Q}^i_{\pi_{w_t}}(s,a)$ ($i=\{0,\cdots,p\}$) for both reward and costs under current policy $\pi_{w_t}$. 

\textbf{Constraint Estimation:} After obtaining $\bar{Q}^i_{\pi_{t}}$, the constraint function ${J}_i(w_t)=\sE_{\xi\cdot\pi_{w_t}}[Q^i_{w_t}(s,a)]$ can then be approximated via a weighted sum of approximated state-action value function: $\bar{J}_{i,\gB_t}=\sum_{j\in \gB_t}\rho_{j,t}\bar{Q}^i_t(s_j,a_j)$. Note this step does not take additional sampling cost, as the generation of samples $(s_j,a_j)\in \gB_t$ from distribution $\xi\cdot\pi_{w_t}$ does not require the agent to interact with the environment. 

\textbf{Policy Optimization:} We then check whether there exists an $i_t\in\{1,\cdots,p\}$ such that the approximated constraint $\bar{J}_{i_t,\gB_t}$ violates the condition $\bar{J}_{i_t,\gB_t}\leq d_i+\eta$, where $\eta$ is the tolerance. If so, we take \textbf{one-step} update of the policy towards minimizing the corresponding constraint function $J_{i_t}(\pi_{w_t})$ to enforce the constraint. If multiple constraints are violated, we can choose to minimize any one of them. 
If all constraints are satisfied, we take \textbf{one-step} update of the policy towards maximizing the objective function $J_{0}(\pi_{w_t})$.
To apply CRPO in practice, we can use any policy optimization update such as natural policy gradient (NPG) \cite{kakade2002natural}, trust region policy optimization (TRPO) \cite{schulman2015trust}, proximal policy optimization (PPO) \cite{schulman2017proximal}, ACKTR \cite{wu2017scalable}, DDPG \cite{lillicrap2015continuous} and SAC \cite{haarnoja2018soft}, etc, in the policy optimization step (line 7 and line 10).

The advantage of CRPO over the primal-dual approach can be readily seen from its design. CRPO features {\bf immediate} switches between optimizing the objective and reducing the constraints whenever they are violated. However, the primal-dual approach can respond much slower because the control is based on dual variables. If a dual variable is nonzero, then the policy update will descend along the corresponding constraint function. As a result, even if a constraint is already satisfied, there can still be a delay (sometimes a significant delay) for the dual variable to iteratively reduce to zero to release the constraint, which yields unnecessary sampling cost and slows down the algorithm. Our experiments in \Cref{sec:experiment} validates such a performance advantage of CRPO over the primal-dual approach.

From the implementation perspective, CRPO can be implemented as easy as unconstrained policy optimization such as {\em unconstrained} policy gradient algorithms, whereas the primal-dual approach typically requires the {\em projected} gradient descent to update the dual variables, which is more complex to implement. Further, without introduction of the dual variables, CRPO does not suffer from hyperparameter tuning of the learning rates and projection threshold of the dual variables, whereas the primal-dual approach can be very sensitive to these hyperparamters. Nor does CRPO require initialization to be feasible, whereas the primal-dual approach can suffer significantly from bad initialization. We also empirically verify that the performance of CRPO is robust to the value of $\eta$ over a wide range, which does not cause additional tuning effort compared to unconstrained algorithms. More discussions can be referred to \Cref{sec:experiment}.

CRPO algorithm is inspired by, yet very different from the cooperative stochastic approximation (CSA) method \cite{lan2016algorithms} in optimization literature. First, CSA is designed for convex optimization subject to convex constraint, and is not readily capable of handling the more challenging SRL problems \cref{eq: 1}, which are nonconvex optimization subject to nonconvex constraints. Second, CSA is designed to handle only a single constraint, whereas CRPO can handle multiple constraints with guaranteed constraint satisfaction and global optimality. Thus, the finite-time analysis for CSA and CRPO feature different approaches due to the aforementioned differences in their designs.

\section{Convergence Analysis of CRPO}

In this section, we take NPG as a representative optimizer in CRPO, and establish the global convergence rate of CRPO in both the tabular and function approximation settings. Note that TRPO and ACKTR update can be viewed as the NPG approach with adaptive stepsize. Thus, the convergence we establish for NPG implies similar results for CRPO that takes TRPO or ACKTR as the optimizer.
\subsection{Tabular Setting}
In the tabular setting, we consider the softmax parameterization. For any $w\in \sR^{\lone{\mcs}\times\lone{\mca}}$, the corresponding softmax policy $\pi_w$ is defined as
{\small \begin{flalign}
	\pi_w(a|s)\coloneqq \frac{\exp(w(s,a))}{\sum_{a^\prime \in \mca}\exp(w(s,a^\prime))},\quad\forall(s,a)\in \mcs\times\mca.\label{eq: 2}
\end{flalign}}
Clearly, the policy class defined in \cref{eq: 2} is complete, as any stochastic policy in the tabular setting can be represented in this class.

\textbf{Policy Evaluation:} To perform the policy evaluation in \Cref{algorithm_cpg} (line 3), we adopt the temporal difference (TD) learning, in which a vector $\theta^i\in \sR^{\lone{\mcs}\times\lone{\mca}}$ is used to estimate the state-action value function $Q^i_{\pi_w}$ for all $i=0,\cdots,p$. Specifically, each iteration of TD learning takes the form of
\begin{align}
\theta^i_{k+1}&(s,a) =\theta^i_{k}(s,a) \nonumber \\
&+ \beta_k[c_i(s,a,s^\prime) + \gamma \theta^i_{k}(s^\prime,a^\prime) - \theta^i_{k}(s,a)],\label{eq: 3}
\end{align}
where $s\sim \mu_{\pi_w}$, $a\sim \pi_w(\cdot|s)$, $s^\prime\sim \msP(\cdot|s,a)$, $a^\prime\sim\pi_w(\cdot|s^\prime)$, and $\beta_k$ is the learning rate. In line 3 of \Cref{algorithm_cpg}, we perform the TD update in \cref{eq: 3} for $K_{\text{in}}$ iterations. It has been shown in \cite{sutton1988learning,bhandari2018finite,dalal2018finite} that the iteration in \cref{eq: 3} of TD learning converges to a fixed point $\theta^i_*(\pi_w)\in \sR^{\lone{\mcs}\times\lone{\mca}}$, where each component of the fixed point is the corresponding state-action value: $\theta^i_*(\pi_w)(s,a)=Q^i_{\pi_w}(s,a)$. 
After performing $K_{\text{in}}$ iterations of TD learning as \cref{eq: 3}, we let $\bar{Q}^i_t(s,a)=\theta^i_{K_{\text{in}}}(s,a)$ for all $(s,a)\in\mcs\times\mca$ and all $i=\{0,\cdots,p\}$. 

\textbf{Constraint Estimation:} In the tabular setting, we let the sample set $\gB_t$ include all state-action pairs, i.e., $\gB_t=\mcs\times\mca$, and the weight factor be $\rho_{j,t}=\xi(s_j)\pi_{w_t}(a_j|s_j)$ for all $t=0,\cdots,T-1$. Then, the estimation error of the constraints can be upper bounded as $|\bar{J}_i(\theta^i_t) - J_i(w_t)|=|\sE[\bar{Q}^i_t(s,a)]-\sE[Q^i_{\pi_{w_t}}(s,a)]|\leq ||\bar{Q}^i(\theta^i_t) - Q^i_{\pi_w}||^2$.
Thus, our approximation of constraints is accurate when the approximated value function $\bar{Q}^i_t(s,a)$ is accurate.

\textbf{Policy Optimization:} In the tabular setting, it can be checked that the natural policy gradient of $J_i(\pi_w)$ is $\Delta_i(w)_{s,a}=(1-\gamma)^{-1}Q^i_{\pi_w}(s,a)$ (see \Cref{app: thm1}). Once we obtain an approximation $\bar{Q}^i_t(s,a)\approx Q^i_{\pi_w}(s,a)$, we can use it to update the policy in the upcoming policy optimization step:
\begin{flalign}\label{eq: 11}
& w_{t+1} = w_t + \alpha \bar{\Delta}_t,\,\,\text{(line 7)}\nonumber \\
 \text{or}\quad & w_{t+1} =w_{t} - \alpha \bar{\Delta}_t \,\,(\text{line 10}),
\end{flalign}
 where $\alpha>0$ is the stepsize and $ \bar{\Delta}_t(s,a)=(1-\gamma)^{-1}\bar{Q}^{0}_t(s,a)$ (line 7) or $(1-\gamma)^{-1}\bar{Q}^{i_t}_t(s,a)$ (line 10). 

Our {\bf main technical challenge} lies in the analysis of policy optimization, which runs as a stochastic approximation (SA) process with {\bf random and dynamical switches} between optimization objectives of the reward and cost targets. Moreover, since critics estimate the constraints and help actor to estimate the policy update, the interaction error between actor and critics affects how the algorithm switches between objective and constraints. The typical analysis technique for NPG \cite{agarwal2019optimality} is not applicable here, because NPG has a fixed objective to optimize, and its analysis technique does not capture the overall convergence performance of an SA with dynamically switching optimization objective. Furthermore, the updates with respect to the constraint functions involve the stochastic selection of a constraint if multiple constraints are violated, which further complicates the random events to analyze. To handle these issues, we develop a {\bf novel analysis approach}, in which we focus on the event in which critic returns almost accurate value function estimation. Such an event greatly facilitates us to capture how CRPO switches between objective and multiple constraints and establish the convergence rate.

The following theorem characterizes the convergence rate of CRPO in terms of the objective function and constraint error bound.
\begin{theorem}\label{thm1}
	Consider \Cref{algorithm_cpg} in the tabular setting with softmax policy parameterization defined in \cref{eq: 2} and any initialization $w_0\in \sR^{\lone{\mcs}\times\lone{\mca}}$. Suppose the policy evaluation update in \cref{eq: 3} takes $K_{\text{in}}=\Theta(T^{1/\sigma}(1-\gamma)^{-2/\sigma}\log^{2/\sigma}(T^{1+2/\sigma}/\delta))$ iterations. Let the tolerance $\eta = \Theta(\sqrt{\lone{\mcs}\lone{\mca}}/((1-\gamma)^{1.5}\sqrt{T}))$ and perform the NPG update defined in \cref{eq: 11} with $\alpha = (1-\gamma)^{1.5}/\sqrt{\lone{\mcs}\lone{\mca}T}$. Then, with probability at least $1-\delta$, we have
	\begin{flalign*}
		&J_0(\pi^*)-\sE[J_0(w_{\text{out}})]\leq\Theta\left( \frac{\sqrt{\lone{\mcs}\lone{\mca}}}{(1-\gamma)^{1.5}\sqrt{T}} \right), \\
		&\sE[J_i(w_{\text{out}})] - d_i \leq\Theta\left( \frac{\sqrt{\lone{\mcs}\lone{\mca}}}{(1-\gamma)^{1.5}\sqrt{T}} \right)
	\end{flalign*}
    for all $i=\{1,\cdots,p\}$, where the expectation is taken with respect to selecting $w_{\text{out}}$ from $\gN_0$.
\end{theorem}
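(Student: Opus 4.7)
The plan is to combine three ingredients: (a) a finite-time concentration bound for the TD-learning critic, (b) a softmax NPG one-step descent lemma in the style of Agarwal et al., and (c) a telescoping argument that couples the objective regret on $\gN_0$ with the certified constraint violation on $\gN_0^c$ via the feasibility of $\pi^*$.

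First I would invoke the standard finite-time analysis of tabular TD learning for the update in \cref{eq: 3} to conclude $\linf{\bar Q^i_t - Q^i_{\pi_{w_t}}} \leq \epsilon_{\mathrm{TD}}$ with probability at least $1-\delta/(T(p+1))$ per pair $(t,i)$. Plugging in the prescribed $K_{\text{in}}$ and taking a union bound over the $T(p+1)$ such pairs yields an event $\gE$ of probability at least $1-\delta$ on which $\epsilon_{\mathrm{TD}} = \Theta\bigl(\sqrt{\lone{\mcs}\lone{\mca}}/((1-\gamma)^{1.5}\sqrt T)\bigr)$ uniformly in $(t,i)$. Since the weighted-sum estimator inherits the same accuracy, $|\bar J_{i,\gB_t} - J_i(\pi_{w_t})| \leq \epsilon_{\mathrm{TD}}$ on $\gE$, and the CRPO branching rule then certifies two clean dichotomies on $\gE$: (i) for every $t \in \gN_0$ and every $i$, $J_i(\pi_{w_t}) \leq d_i + \eta + \epsilon_{\mathrm{TD}}$; (ii) for every $t \in \gN_0^c$ with chosen index $i_t$, $J_{i_t}(\pi_{w_t}) \geq d_{i_t} + \eta - \epsilon_{\mathrm{TD}}$. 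Since $\eta > \epsilon_{\mathrm{TD}}$ and $J_{i_t}(\pi^*) \leq d_{i_t}$ by feasibility, (ii) supplies the strict-violation gap $J_{i_t}(\pi_{w_t}) - J_{i_t}(\pi^*) \geq \eta - \epsilon_{\mathrm{TD}} > 0$ on every violation iteration.

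Next I would adapt the softmax NPG descent lemma to the dynamically chosen target $g_t \in \{0,i_t\}$, establishing a one-step inequality of the form
\begin{align*}
\sigma_t\bigl[J_{g_t}(\pi^*) - J_{g_t}(\pi_{w_t})\bigr] \leq \tfrac{1}{\alpha}\sE_{s\sim \nu_{\pi^*}}\!\bigl[\mathrm{KL}(\pi^*\|\pi_{w_t})(s) - \mathrm{KL}(\pi^*\|\pi_{w_{t+1}})(s)\bigr] + \tfrac{C_1 \alpha}{(1-\gamma)^{3}} + \tfrac{C_2 \epsilon_{\mathrm{TD}}}{1-\gamma},
\end{align*}
with $\sigma_t = +1$ on $\gN_0$ (ascent on $J_0$) and $\sigma_t = -1$ on $\gN_0^c$ (descent on $J_{i_t}$, equivalently ascent on $-J_{i_t}$). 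The critic-substitution term $C_2\epsilon_{\mathrm{TD}}/(1-\gamma)$ captures the mismatch between the true $Q^i_{\pi_{w_t}}$ and $\bar Q^i_t$ used inside the NPG direction. Telescoping from $t=0$ to $T-1$, using $\mathrm{KL}(\pi^*\|\pi_0)\leq\log\lone{\mca}$ and the strict-violation gap on the second sum, yields
\begin{align*}
\sum_{t\in\gN_0}\bigl[J_0(\pi^*) - J_0(\pi_{w_t})\bigr] + (\eta - \epsilon_{\mathrm{TD}})\lone{\gN_0^c} \leq \tfrac{\log\lone{\mca}}{\alpha} + \tfrac{C_1 T \alpha}{(1-\gamma)^{3}} + \tfrac{C_2 T \epsilon_{\mathrm{TD}}}{1-\gamma}.
\end{align*}
With the prescribed $\alpha$, $\eta$ and $\epsilon_{\mathrm{TD}}$, every term on the right is $\Theta\bigl(\sqrt{T}\sqrt{\lone{\mcs}\lone{\mca}}/(1-\gamma)^{1.5}\bigr)$; choosing the absolute constant in $\eta$ sufficiently larger than those in the right-hand side forces $\lone{\gN_0^c}\leq T/2$, hence $\lone{\gN_0}\geq T/2$. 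Dividing the objective sum by $\lone{\gN_0}$ then gives the claimed rate for $J_0(\pi^*) - \sE[J_0(\pi_{w_{\text{out}}})]$, while the constraint bound $\sE[J_i(\pi_{w_{\text{out}}})] - d_i \leq \eta + \epsilon_{\mathrm{TD}}$ is immediate from dichotomy (i) since $w_{\text{out}}$ is drawn uniformly from $\gN_0$.

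The main obstacle is the telescoping step. Because the rectification target $i_t$ is chosen adaptively and varies across iterations, the telescoped left-hand side is not the ordinary regret of a fixed policy against a fixed objective; the "wrong-direction" NPG steps on violation iterations must be charged against the KL budget toward $\pi^*$, which succeeds only because feasibility of $\pi^*$ provides $J_i(\pi^*) \leq d_i$ simultaneously for every $i$, and because the margin $\eta$ is chosen strictly larger than the critic error $\epsilon_{\mathrm{TD}}$. The algebraic heart of the argument is jointly balancing $K_{\text{in}}$, $\alpha$, $\eta$ and $\epsilon_{\mathrm{TD}}$ so that the three $\Theta(1/\sqrt T)$ rates emerge with absolute constants guaranteeing $\lone{\gN_0} = \Omega(T)$; handling the stochastic choice of $i_t$ when multiple constraints are violated costs no extra work because the strict-violation gap $\eta - \epsilon_{\mathrm{TD}}$ is uniform in the chosen index.
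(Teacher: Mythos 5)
Your proposal follows essentially the same route as the paper's proof: a high-probability TD concentration bound with a union bound over all $(t,i)$ pairs, a one-step NPG regret inequality in the style of Agarwal et al.\ applied to the dynamically switched target, a telescoping step that charges the violation iterations against the KL budget via the margin $\eta-\epsilon_{\mathrm{TD}}$ and the feasibility of $\pi^*$, and the final balancing of $K_{\text{in}},\alpha,\eta$. The one step that does not hold as written is the claim that choosing the constant in $\eta$ large enough \emph{forces} $\lone{\gN_0^c}\le T/2$ unconditionally. Your telescoped inequality has the form $\sum_{t\in\gN_0}\bigl[J_0(\pi^*)-J_0(\pi_{w_t})\bigr]+(\eta-\epsilon_{\mathrm{TD}})\lone{\gN_0^c}\le B$, and since the iterates indexed by $\gN_0$ need not lie in $\Omega_C$, the first sum can be negative (of order $-T/(1-\gamma)$ in the worst case, because an infeasible policy may have reward exceeding $J_0(\pi^*)$), so it cannot simply be dropped to bound $\lone{\gN_0^c}$. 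The paper resolves this with an explicit dichotomy (\Cref{lemma_m1}): either $\sum_{t\in\gN_0}\bigl[J_0(\pi^*)-J_0(\pi_{w_t})\bigr]\le 0$, in which case the objective bound holds trivially for the uniform output, or the sum is nonnegative, in which case dropping it does yield $\lone{\gN_0}\ge T/2$ and the averaging argument applies; nonemptiness of $\gN_0$ follows from the same inequality because an empty $\gN_0$ makes the first sum vanish and produces a contradiction. With that case split inserted your argument is complete, and your pointwise certification $J_i(\pi_{w_t})\le d_i+\eta+\epsilon_{\mathrm{TD}}$ on $\gN_0$ even slightly streamlines the paper's final constraint-violation step, which instead averages the estimation errors over $\gN_0$ and therefore needs $\lone{\gN_0}\ge T/2$ at that point as well.
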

As shown in \Cref{thm1}, starting from an arbitrary initialization, CRPO algorithm is guaranteed to converge to the globally optimal policy $\pi^*$ in the feasible set $\Omega_C$ at a sublinear rate $\mathcal{O}(1/\sqrt{T})$, and the constraint violation of the output policy also converges to zero also at a sublinear rate $\mathcal{O}(1/\sqrt{T})$. 
Thus, to attain a $w_{\text{out}}$ that satisfies $J_0(\pi^*)-\sE[J_0(w_{\text{out}})]\leq \epsilon$ and $\sE[J_i(w_{\text{out}})] - d_i\leq\epsilon$ for all $1\leq i\leq p$, CRPO needs at most $T=\mathcal{O}(\epsilon^{-2})$ iterations, with each policy evaluation step consists of approximately $K_{\text{in}}=\mathcal{O}(T)$ iterations when $\sigma$ is close to 1.
\Cref{thm1} is the first global convergence for a primal-type algorithm even under the nonconcave objective with nonconcave constraints. 
\begin{proof}[Outline of Proof Idea]
We briefly explain the idea of the proof of \Cref{thm1}, and the detailed proof can be referred to \Cref{app: thm1}. The key challenge here is to analyze an SA process that randomly and dynamically switches between the target objectives of the reward and the constraint. To this end, we construct novel concentration events for capturing the impact of such a dynamic process on the update of the reward and cost functions in order to establish the high probability convergence guarantee. 

More specifically, we focus on the event in which all policy evaluation step returns an estimation with high accuracy. Then we show that under the parameter setting specified in \Cref{thm1}, either the size of the approximated feasible policy set $\gN_0$ is large, or the average policies in the set $\gN_0$ is at least as good as $\pi^*$. In the first case we have enough candidate policies in the set $\gN_0$, which guarantees the convergence of CRPO within the set $\gN_0$.
In the second case we can directly conclude that $J(w_{\text{out}})\geq J(\pi^*)$. To establish the convergence rate of the constraint violation, note that $w_{\text{out}}$ is selected from the set $\gN_0$, and thus the violation cost is not worse than the summation of constraint estimation error and the tolerance. 
\end{proof}

\subsection{Function Approximation Setting}
In the function approximation setting, we parameterize the policy by a two-layer neural network together with the softmax policy. We assign a feature vector $\psi(s,a)\in\sR^d$ with $d\geq 2$ for each state-action pair $(s,a)$. Without loss of generality, we assume that $\ltwo{\psi(s,a)}\leq 1$ for all $(s,a)\in \mcs\times\mca$. A two-layer neural network $f((s,a);W,b)$ with input $\psi(s,a)$ and width $m$ takes the form of
\begin{flalign}
	f((s,a);W,b)=\frac{1}{\sqrt{m}}\sum_{r=1}^{m}b_r\cdot\text{ReLU}(W_r^\top \psi(s,a)),\label{eq: 4}
\end{flalign}
for any $(s,a)\in \mcs\times\mca$, where $\text{ReLU}(x)=\mdsone(x>0)\cdot x$, $b=[b_1,\cdots,b_m]^\top\in \sR^m$, and $W=[W^\top_1,\cdots,W^\top_m]^\top\in \sR^{md}$ are the parameters. When training the two-layer neural network, we initialize the parameter via $[W_0]_r\sim D_w$ and $b_r\sim \text{Unif}[-1,1]$ independently, where $D_w$ is a distribution that satisfies $d_1\leq\ltwo{[W_0]_r}\leq d_2$ (where $d_1$ and $d_2$ are positive constants), for all $[W_0]_r$ in the support of $D_w$. During training, we only update $W$ and keep $b$ fixed, which is widely adopted in the convergence analysis of neural networks \cite{cai2019neural,du2018gradient}. For notational simplicity, we write $f((s,a);W,b)$ as $f((s,a);W)$ in the sequel. Using the neural network in \cref{eq: 4}, we define the softmax policy
\begin{flalign}
	\pi^\tau_W(a|s)\coloneqq \frac{\exp(\tau\cdot f((s,a);W))}{\sum_{a^\prime \mca}\exp(\tau\cdot f((s,a^\prime);W))},\label{eq: 5}
\end{flalign}
for all $(s,a)\in \mcs\times\mca$, where $\tau$ is the temperature parameter, and it can be verified that $\pi^\tau_W(a|s)=\pi_{\tau W}(a|s)$. We define the feature mapping $\phi_W(s,a)=[\phi^1_W(s,a)^\top,\cdots,\phi^m_W(s,a)^\top]^\top$: $\sR^d\rightarrow\sR^{md}$ as
\begin{flalign*}
	\phi^r_W(s,a)^\top=\frac{b_r}{\sqrt{m}}\mdsone(W_r^\top\psi(s,a)>0)\cdot\psi(s,a), 
\end{flalign*}
for all $(s,a)\in\mcs\times\mca$ and for all $ r\in\{1,\cdots,m\}$.

\textbf{Policy Evaluation:} To estimate the state-action value function in \Cref{algorithm_cpg} (line 3), we adopt another neural network $f((s,a);\theta^i)$ as an approximator, where $f((s,a);\theta^i)$ has the same structure as $f((s,a);W)$, with $W$ replaced by $\theta\in\sR^{md}$ in \cref{eq: 5}. To perform the policy evaluation step, we adopt the TD learning with neural network parametrization, which has also been used for the policy evaluation step in \cite{cai2019neural,wang2019neural,zhang2020generative}. Specifically, we choose the same initialization as the policy neural work, i.e., $\theta^i_0=W_0$, and perform the TD iteration as
\begin{flalign}
	\theta^i_{k+1/2}=&\theta^i_k + \beta(c_i(s,a,s^\prime) + \gamma f((s^\prime,a^\prime);\theta^i_k) \nonumber \\
	&- f((s,a);\theta^i_k))\nabla_\theta f((s,a);\theta^i_k),\label{eq: 6}\\
	\theta^i_{k+1}=&\argmin_{\theta\in\mB}\ltwo{\theta - \theta^i_{k+1/2}}\label{eq: 7},
\end{flalign}
where $s\sim \mu_{\pi_{W}}$, $a\sim \pi_{W}(\cdot|s)$, $s^\prime\sim \msP(\cdot|s,a)$, $a^\prime\sim\pi_{ W}(\cdot|s^\prime)$, $\beta$ is the learning rate, and $\mB$ is a compact space defined as $\mB=\{ \theta\in\sR^{md}: \ltwo{\theta-\theta^i_0}\leq R\}$. For simplicity, we denote the state-action pair as $x=(s,a)$ and $x^\prime=(s^\prime,a^\prime)$ in the sequel. We define the temporal difference error as $\delta_k(x,x^\prime,\theta^i_k)=f(x^\prime_k,\theta^i_k)-\gamma f(x_k,\theta^i_k) - c_i(x_k,x^\prime_k)$, stochastic semi-gradient as $g_k(\theta^i_k)=\delta_k(x_k,x^\prime_k.\theta^i_k)\nabla_\theta f(x_k,\theta^i_k)$, and full semi-gradient as $\bar{g}_k(\theta^i_k)=\sE_{\mu_{\pi_W}}[\delta_k(x,x^\prime,\theta^i_k)\nabla_\theta f(x,\theta^i_k)]$.
We then describe the following regularity conditions on the stationary distribution $\mu_{\pi_{ W}}$, state-action value function $Q^i_{\pi_W}$, and variance, which have been adopted widely in the analysis of TD learning with function approximation and stochastic approximation (SA) \cite{cai2019neural,wang2019neural,zhang2020generative,fu2020single}.
\begin{assumption}\label{ass1}
	There exists a constant $C_0>0$ such that for any $\tau\geq0$, $x\in\sR^d$ with $\ltwo{x}=1$ and $\pi_{W}$, it holds that $\mP\left(\lone{x^\top\psi(s,a)} \leq \tau\right)\leq C_0\cdot\tau$, where $(s,a)\sim\mu_{\pi_{W}}$.
\end{assumption}
\begin{assumption}\label{ass2}
We define the following function class:
\begin{flalign*}
	\gF_{R,\infty}&=\big\{ f((s,a);\theta)=f((s,a);\theta_0)  \nonumber \\
	&+\int\mdsone(\theta^\top\psi(s,a)>0)\cdot\lambda(\theta)^\top\psi(s,a)dp(\theta)  \big\}
\end{flalign*}
	where $f((s,a);\theta_0)$ is the two-layer neural network corresponding to the initial parameter $\theta_0=W_0$, $\lambda(\theta):$ $\sR^{d}\rightarrow\sR^d$ is a weighted function satisfying $\linf{\lambda(w)}\leq R/\sqrt{d}$, and $p(\cdot):$ $\sR^d\rightarrow\sR$ is the density $D_w$. We assume that $Q^i_{\pi_W}\in \gF_{R,\infty}$ for all $\pi_W$ and $i=\{0,\cdots,p \}$.
\end{assumption}
\begin{assumption}\label{ass3}
	For any parameterized policy $\pi_W$, there exists a constant $C_\zeta>0$ such that for all $k\geq 0$, $\sE_{\mu_{\pi_W}}\left[\exp\left(\ltwo{\bar{g}_k(\theta^i_k) - g_k(\theta^i_k)}^2/C^2_\zeta\right)\right]\leq 1$.
\end{assumption}
Assumption \ref{ass1} implies that the distribution of $\psi(s,a)$ has a uniformly upper bounded probability density over the unit sphere, which can be satisfied for most of the ergodic Markov chain. Assumption \ref{ass2} is a mild regularity condition on $Q^i_{\pi_W}$, as $\gF_{R,\infty}$ is a function class of neural networks with infinite width, which captures a sufficiently general family of functions.
Assumption \ref{ass3} on the variance bound is standard, which has been widely adopted in stochastic optimization literature \cite{ghadimi2013stochastic, nemirovski2009robust,lan2012optimal,ghadimi2016accelerated}.


In the following lemma, we characterize the convergence rate of neural TD {\em in high probability}, which is needed for our the analysis. Such a result is stronger than the convergence {\em in expectation} provided in \cite{bhandari2018finite,cai2019neural,wang2019neural,zhang2020generative,srikant2019finite}, which is not sufficient for our need later on.
\begin{lemma}[Convergence rate of TD in high probability]\label{lemma: neuralTD}
Consider the TD iteration with neural network approximation defined in \cref{eq: 6}. Let $\bar{\theta}_K=\frac{1}{K}\sum_{k=0}^{K-1}\theta_k$ be the average of the output from $k=0$ to $K-1$. Let $\bar{Q}^i_t(s,a)=f((s,a),\theta^i_{K_{\text{in}}})$ be an estimator of $Q^i_{\pi_{\tau_tW_t}}(s,a)$. Suppose Assumptions \ref{ass1}-\ref{ass3} hold, assume that the stationary distribution $\mu_{\pi_W}$ is not degenerate for all $W\in\mB$, and let the stepsize $\beta=\min\{1/\sqrt{K},(1-\gamma)/12\}$. Then, with probability at least $1-\delta$, we have
{\small \begin{flalign*}
&\lmupi{\bar{Q}^i_t(s,a) - Q^i_{\pi_{\tau_t W_t}}(s,a)}^2 \leq  \Theta\big( \frac{1}{(1-\gamma)^2\sqrt{K}}\sqrt{\log\left(\frac{1}{\delta}\right)} \big) \\
&\hspace{2cm} + \Theta\Big(\frac{1}{(1-\gamma)^3m^{1/4}} \sqrt{\log\left(\frac{K}{\delta}\right)} \Big).
\end{flalign*}}
\end{lemma}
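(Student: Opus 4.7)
The plan is to decompose the error $\lmupi{\bar{Q}^i_t - Q^i_{\pi_{\tau_t W_t}}}^2$ into a \emph{linearization bias} term coming from representing $Q^i_{\pi}$ by a finite-width neural network, and a \emph{stochastic optimization} term coming from the TD iterates. The first contributes the $\Theta((1-\gamma)^{-3} m^{-1/4})$ term and is essentially a Neural Tangent Kernel (NTK) style estimate: near initialization $\theta_0 = W_0$, the neural network $f(\cdot;\theta)$ is uniformly close to its linearization $\hat{f}(\cdot;\theta) = f(\cdot;\theta_0) + \langle \nabla_\theta f(\cdot;\theta_0), \theta-\theta_0\rangle$ for all $\theta\in\mB$, with approximation error of order $R^{3/2}/m^{1/4}$ in $L^2(\mu_{\pi_W})$ under Assumption \ref{ass1}. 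Combined with Assumption \ref{ass2}, this places the Bellman fixed point in (a slight enlargement of) the linear span of $\{\nabla_\theta f(\cdot;\theta_0)\}$, so that analyzing TD on $f(\cdot;\theta)$ reduces to analyzing linear TD up to the stated bias.

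The second, stochastic term comes from analyzing the projected TD recursion $\theta^i_{k+1} = \Pi_\mB(\theta^i_k + \beta g_k(\theta^i_k))$. Using non-expansiveness of $\Pi_\mB$, the contraction of the Bellman operator under $\mu_{\pi_W}$, and the first-order expansion around the linearized fixed point $\theta^*_i$, one derives the per-step inequality
\begin{align*}
\ltwo{\theta^i_{k+1}-\theta^*_i}^2 &\leq \ltwo{\theta^i_k-\theta^*_i}^2 - 2\beta(1-\gamma)\lmupi{\hat{f}(\cdot;\theta^i_k)-Q^i_\pi}^2 \\
&\quad + 2\beta\langle \theta^*_i - \theta^i_k, g_k(\theta^i_k)-\bar{g}_k(\theta^i_k)\rangle + \beta^2 \ltwo{g_k(\theta^i_k)}^2 + \text{bias}_k,
\end{align*}
where the bias term absorbs the $m^{-1/4}$ error between $f$ and $\hat f$. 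Summing from $k=0$ to $K-1$, dividing by $K$, and using convexity of $\lmupi{\cdot}^2$ together with the averaging $\bar\theta^i_K = K^{-1}\sum_k \theta^i_k$ turns the left-hand side into $\lmupi{\hat f(\cdot;\bar\theta^i_K) - Q^i_\pi}^2$, which is $O(1/((1-\gamma)\sqrt{K}))$ in expectation once the martingale and squared-gradient terms are controlled.

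The crucial and most delicate step is converting the expectation bound into a \emph{high-probability} bound, which the paper needs downstream but which is missing from prior neural-TD analyses. The two stochastic objects to control are $S_K := \sum_{k=0}^{K-1}\langle \theta^*_i - \theta^i_k, g_k-\bar{g}_k\rangle$ and $T_K := \sum_{k=0}^{K-1}\ltwo{g_k}^2$. Since $\theta^i_k\in\mB$ is bounded by $R$ and Assumption \ref{ass3} gives sub-Gaussian $\ltwo{g_k-\bar g_k}$ with parameter $C_\zeta$, the conditional MGF of each summand of $S_K$ is bounded, so $S_K$ is a martingale with sub-Gaussian increments. Applying Azuma-Hoeffding (or a Freedman-type bound) yields $S_K \leq \Theta(R C_\zeta \sqrt{K\log(1/\delta)})$ with probability $\geq 1-\delta/2$. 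A similar sub-exponential tail bound on $T_K$ handles the quadratic noise. Combining these with the choice $\beta=\min\{1/\sqrt{K},(1-\gamma)/12\}$ gives the $\Theta((1-\gamma)^{-2}K^{-1/2}\sqrt{\log(1/\delta)})$ term. Finally, translating the bound on $\lmupi{\hat f(\cdot;\bar\theta^i_K) - Q^i_\pi}^2$ back to $\lmupi{f(\cdot;\bar\theta^i_K) - Q^i_\pi}^2$ reintroduces the NTK bias, at which point an additional union bound over $K$ (to handle the linearization estimate uniformly in $k$) absorbs the $\log(K/\delta)$ factor.

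The main obstacle is the high-probability step: the per-iterate analysis involves cross terms of the form $\langle \theta^i_k - \theta^*_i, g_k-\bar g_k\rangle$ whose conditional variance itself depends on $\ltwo{\theta^i_k-\theta^*_i}$, so a naive Azuma bound is wasteful. The right move is to use the projection onto $\mB$ to get a deterministic envelope $\ltwo{\theta^i_k-\theta^*_i}\leq 2R$ that decouples the variance from the optimization progress, and then apply a sub-Gaussian martingale concentration with this envelope. Handling the interplay between this bound and the $m^{-1/4}$ linearization bias uniformly across $k\in\{0,\dots,K-1\}$ via a single union bound, without degrading the $(1-\gamma)^{-3}$ dependence, is the most delicate bookkeeping part of the argument.
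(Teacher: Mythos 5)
Your proposal is correct and follows essentially the same route as the paper's proof: linearize the network at initialization to isolate an $m^{-1/4}$ NTK-type bias (with the Rahimi--Recht projection bound supplying the representation error under Assumption \ref{ass2}), run the standard projected-SA recursion against the linearized fixed point to extract the $(1-\gamma)$ contraction, and convert to a high-probability statement by bounding the martingale cross term with a Bernstein/Azuma-type inequality using the deterministic envelope $\ltwo{\theta_k-\theta^*}\leq 2R$ from the projection onto $\mB$, plus a Markov-on-exponential-moment bound for the squared-noise sum and a union bound over $k$ for the linearization terms. The only cosmetic difference is that the paper splits $\ltwo{g_k(\theta_k)-\bar g_0(\theta^*)}^2$ into three pieces and absorbs the $\ltwo{\bar g_0(\theta_k)-\bar g_0(\theta^*)}^2$ piece into the contraction (whence the stepsize constraint $\beta\leq(1-\gamma)/12$), rather than bounding $\sum_k\ltwo{g_k}^2$ wholesale as you suggest, but this is bookkeeping rather than a different idea.
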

\Cref{lemma: neuralTD} implies that after performing the neural TD learning in \cref{eq: 6}-\cref{eq: 7} for $\Theta(\sqrt{m})$ iterations, we can obtain an approximation $\bar{Q}^i_t$ such that $||\bar{Q}^i_t - Q^i_{\pi_{\tau_tW_t}}||_{\mu_\pi}=\mathcal{O}(1/m^{1/8})$ with high probability. 

\textbf{Constraint Estimation:} Since the state space is usually very large or even infinite in the function approximation setting, we cannot include all state-action pairs to estimate the constraints as for the tabular setting. Instead, we sample a batch of state-action pairs $(s_j,a_j)\in \gB_t$ from the distribution $\xi(\cdot)\pi_{W_t}(\cdot|\cdot)$, and let the weight factor $\rho_j=1/\lone{\gB_t}$ for all $j$. In this case, the estimation error of the constrains $\lone{\bar{J}_i(\theta^i_t) - J_i(w_t)}$ is small when the policy evaluation $\bar{Q}^i_t$ is accurate and the batch size $\lone{\gB_t}$ is large. We assume the following concentration property for the sampling process in the constraint estimation step. Similar assumptions have also been taken in \cite{ghadimi2013stochastic, nemirovski2009robust,lan2012optimal,ghadimi2016accelerated}.
\begin{assumption}\label{ass4}
	For any parameterized policy $\pi_W$, there exists a constant $C_f>0$ such that  for all $k\geq 0$, $\sE_{\xi\cdot\pi_{W}}\left[ \exp( {[\bar{Q}^i_t(s,a) - \sE_{\xi\cdot\mu_{\pi_{\tau_{t} W_{t}}}}[\bar{Q}^i_t(s,a)]^2}/{C^2_f} ) \right]\leq 1$.
\end{assumption}

\textbf{Policy Optimization:} In the neural softmax approximation setting, at each iteration $t$, an approximation of the natural policy gradient can be obtained by solving the following linear regression problem \cite{agarwal2019optimality,wang2019neural,xu2019two}:
\begin{flalign}
	&\Delta_i(W_t)\approx\bar{\Delta}_t\nonumber\\
	&= \argmin_{\theta\in\mB}\sE_{\nu_{\pi_{\tau_tW_t}}}[(\bar{Q}^i_t(s,a)-\phi_{W_t}(s,a)^\top\theta)^2].\label{eq: 9}
\end{flalign}
Given the approximated natural policy gradient $\bar{\Delta}_t$, the policy update takes the form of
\begin{flalign}
	&\tau_{t+1}=\tau_t + \alpha,\,\,\, \tau_{t+1}\cdot w_{t+1}=\tau_{t}\cdot w_{t} + \alpha \bar{\Delta}_t \,\,\text{(line 7)}\nonumber \\
	\text{or}\quad &\tau_{t+1}\cdot w_{t+1}=\tau_{t}\cdot w_{t} - \alpha \bar{\Delta}_t \,\,(\text{line 10}).\label{eq: 10}
\end{flalign}
Note that in \cref{eq: 10} we also update the temperature parameter by $\tau_{t+1}=\tau_t + \alpha$ simultaneously, which ensures $w_t\in\mB$ for all $t$. The following theorem characterizes the convergence rate of \Cref{algorithm_cpg} in terms of both the objective function and the constraint violation.

\begin{theorem}\label{thm2}
	Consider \Cref{algorithm_cpg} in the function approximation setting with neural softmax policy parameterization defined in \cref{eq: 5}. Suppose Assumptions \ref{ass1}-\ref{ass4} hold. Suppose the same setting of policy evaluation step stated in \Cref{lemma: neuralTD} holds, and consider performing the neural TD in \cref{eq: 6} and \cref{eq: 7} with $K_{\text{in}}=\Theta((1-\gamma)^2\sqrt{m})$ at each iteration. Let the tolerance $\eta=\Theta(m(1-\gamma)^{-1}/\sqrt{T} + (1-\gamma)^{-2.5}m^{-1/8})$ and perform the NPG update defined in \cref{eq: 10} with $\alpha=\Theta(1/\sqrt{T})$. Then with probability at least $1-\delta$, we have
	\begin{flalign*}
	J_0(\pi^*)-&\sE[J_0(\pi_{\tau_{\text{out}}W_{\text{out}}})]\leq \Theta\left( \frac{1}{(1-\gamma)\sqrt{T}} \right) \\
	&+ \Theta\left(\frac{1}{(1-\gamma)^{2.5}m^{1/8}} \log^{\frac{1}{4}}\left(\frac{(1-\gamma)^2T\sqrt{m}}{\delta}\right) \right),
	\end{flalign*}
    and for all $i=1,\cdots,p$, we have
    \begin{flalign*}
   & \sE[J_i(\pi_{\tau_{\text{out}}W_{\text{out}}})]-d_i\leq \Theta\left( \frac{1}{(1-\gamma)\sqrt{T}} \right) \\
    &\qquad+ \Theta\left(\frac{1}{(1-\gamma)^{2.5}m^{1/8}} \log^{\frac{1}{4}}\left(\frac{(1-\gamma)^2T\sqrt{m}}{\delta}\right) \right).
    \end{flalign*}
where the expectation is taken only with respect to the randomness of selecting $W_{\text{out}}$ from $\gN_0$.
\end{theorem}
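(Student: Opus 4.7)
The plan is to mirror the proof outline sketched for \Cref{thm1} but with the additional care needed for the neural softmax parameterization, the compatible function approximation step in \cref{eq: 9}, and the batch-based constraint estimator. First I would construct a single high-probability ``good event'' $\mathcal{E}$ on which, simultaneously for all $t \in \{0,\ldots,T-1\}$ and all $i \in \{0,\ldots,p\}$, (i) the neural TD estimator satisfies $\lmupi{\bar{Q}^i_t - Q^i_{\pi_{\tau_t W_t}}}^2 \le \epsilon_{\mathrm{TD}}$ with $\epsilon_{\mathrm{TD}} = \Theta((1-\gamma)^{-2}K_{\text{in}}^{-1/2} + (1-\gamma)^{-3} m^{-1/4})$ up to logarithmic factors (by \Cref{lemma: neuralTD} and a union bound over the $(p+1)T$ calls), and (ii) the batch estimator satisfies $\lone{\bar{J}_{i,\gB_t} - J_i(\pi_{\tau_t W_t})} \le \epsilon_{\gB}$ (by Assumption \ref{ass4}, a sub-Gaussian tail bound on the average of $\lone{\gB_t}$ samples, and another union bound). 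Choosing $K_{\text{in}} = \Theta((1-\gamma)^2\sqrt{m})$ and $\lone{\gB_t}$ appropriately yields $\mathbb{P}(\mathcal{E}) \ge 1 - \delta$ while keeping $\epsilon_{\mathrm{TD}} + \epsilon_{\gB}$ of order $(1-\gamma)^{-2.5} m^{-1/8}$.

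Second, I would establish a one-step KL-based descent inequality along the lines of the neural NPG analysis in Wang et al.\ and Agarwal et al.: for each $t$, if $i^\star$ denotes either the reward target (when $t \in \gN_0$, so $i^\star = 0$) or the violated constraint $i_t$ (when $t \notin \gN_0$), then a careful expansion of $\KL(\pi^\star \| \pi_{\tau_{t+1}W_{t+1}}) - \KL(\pi^\star \| \pi_{\tau_tW_t})$ combined with the NPG update in \cref{eq: 10} gives, on $\mathcal{E}$, an inequality of the form $J_{i^\star}(\pi_{\tau_tW_t}) - J_{i^\star}(\pi^\star) \le \alpha^{-1}(\KL_t - \KL_{t+1})/(1-\gamma) + \Theta(\alpha) + \epsilon_{\mathrm{approx}}$, where $\epsilon_{\mathrm{approx}}$ collects the TD error $\epsilon_{\mathrm{TD}}$, the linear-regression compatible function approximation error from \cref{eq: 9}, and the neural linearization error of order $m^{-1/4}$ (bounded via Assumption \ref{ass2} and the restriction to $\mB$). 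For the constraint step I would take $\pi^\star$ to be any feasible policy so that $J_{i^\star}(\pi^\star) \le d_{i^\star}$; combined with the fact that on $\mathcal{E}$ the trigger $\bar{J}_{i_t,\gB_t} > d_{i_t} + \eta$ implies $J_{i_t}(\pi_{\tau_tW_t}) > d_{i_t} + \eta - \epsilon_{\gB}$, this yields a genuine decrease of $J_{i_t}$ by roughly $\eta - \epsilon_{\gB}$ on bad steps.

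Third, I would telescope and split into two cases. Summing the descent inequality over $t \in \gN_0$ gives $\frac{1}{\lone{\gN_0}}\sum_{t \in \gN_0}(J_0(\pi^\star) - J_0(\pi_{\tau_tW_t})) \le \Theta\bigl(\frac{1}{\alpha \lone{\gN_0}(1-\gamma)}\bigr) + \Theta(\alpha) + \epsilon_{\mathrm{approx}}$, while summing over $t \notin \gN_0$ with the constraint side of the inequality and using that $J_{i}$ is uniformly bounded forces $\lone{\{0,\ldots,T-1\}\setminus\gN_0\} } \le \Theta(T/(\eta-\epsilon_{\gB}))$ per constraint, hence $\lone{\gN_0} = \Omega(T)$ once $\eta$ is chosen of the stated order. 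With $\alpha = \Theta(1/\sqrt{T})$ and $\eta$ as in the statement, this delivers $J_0(\pi^\star) - \sE[J_0(\pi_{\tau_{\text{out}}W_{\text{out}}})] \le \Theta((1-\gamma)^{-1}/\sqrt{T}) + \Theta((1-\gamma)^{-2.5} m^{-1/8})$ after randomizing $W_{\text{out}}$ uniformly over $\gN_0$. For the constraint violation, since $W_{\text{out}} \in \gN_0$ ensures $\bar{J}_{i,\gB_t} \le d_i + \eta$, the event $\mathcal{E}$ immediately gives $J_i(\pi_{\tau_{\text{out}}W_{\text{out}}}) - d_i \le \eta + \epsilon_{\gB}$, matching the claimed rate by the choice of $\eta$.

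The hardest part, I expect, will be making the neural NPG descent inequality robust to the fact that the target $i^\star$ changes from step to step: the standard telescoping assumes a fixed objective, and here the KL potential $\KL(\pi^\star \| \pi_{\tau_tW_t})$ must be coordinated across all $p+1$ possible targets. I would handle this by isolating the KL telescoping only over indices in $\gN_0$ for the reward bound, while separately bounding the increase of $\KL(\pi^\star \| \pi_{\tau_tW_t})$ on bad steps using the boundedness of $\bar{\Delta}_t$ over $\mB$ and the update in \cref{eq: 10}, so that the cumulative drift is absorbed into the $\Theta(\alpha T)$ term. A secondary technical hurdle is propagating the high-probability TD guarantee of \Cref{lemma: neuralTD} through the compatible function approximation regression in \cref{eq: 9} without losing the $m^{-1/8}$ rate, which requires combining Assumption \ref{ass2} with the linearization error bounds from the overparameterized-network literature.
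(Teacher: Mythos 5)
Your overall architecture matches the paper's: a high-probability good event for the TD and batch estimators, a per-step KL(``$\pi^*$''$\|\pi_{\tau_tW_t}$)-based NPG descent inequality applied to whichever objective is active at step $t$ (the paper's \Cref{lemma_c2} and its constraint-indexed analogue), the observation that the trigger $\bar{J}_{i_t,\gB_t}>d_{i_t}+\eta$ plus feasibility of $\pi^*$ forces $J_{i_t}(\pi_{\tau_tW_t})-J_{i_t}(\pi^*)\geq \eta-\epsilon$ on rectification steps, a dichotomy between $\lone{\gN_0}\geq T/2$ and $\sum_{t\in\gN_0}(J_0(\pi^*)-J_0)\leq 0$ (\Cref{lemma_n2}), and the constraint violation bounded by $\eta$ plus estimation error since $w_{\text{out}}$ is drawn from $\gN_0$.

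There is, however, one step in your plan that would fail as stated: your handling of the switching objective. You propose to telescope the KL potential only over $t\in\gN_0$ and to control its increase on rectification steps ``using the boundedness of $\bar{\Delta}_t$ over $\mB$.'' Since $\ltwo{\bar{\theta}_t}$ can be as large as $R+\sqrt{m}d_2$ and $\nabla_W\log\pi_W$ has $O(1)$ norm, the crude Lipschitz bound on the per-step KL drift is $O(\alpha\sqrt{m})$; accumulated over up to $T$ rectification steps and divided by $\alpha(1-\gamma)\lone{\gN_0}=\Theta(\alpha T)$, this leaves a non-vanishing $O(\sqrt{m}/(1-\gamma))$ residue in the final bound, destroying the $O(1/\sqrt{T})+O(m^{-1/8})$ rate. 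The correct move — which the paper takes in \Cref{lemma_n1} and for which you already have every ingredient — is to sum the per-step descent inequality over \emph{all} $t=0,\dots,T-1$ so the KL telescopes exactly to $\sE_{\nu^*}D_{\mathrm{KL}}(\pi^*\|\pi_{w_0})$, with each rectification step contributing $\alpha(1-\gamma)(J_{i_t}(\pi_{\tau_tW_t})-J_{i_t}(\pi^*))\geq\alpha(1-\gamma)(\eta-\epsilon)$ to the left-hand side rather than an uncontrolled drift to the right-hand side; on rectification steps the KL in fact decreases up to the $O(\alpha m^{-1/4}+\alpha^2 m+\alpha\epsilon_{\mathrm{TD}})$ error terms. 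Relatedly, your claim that $\lone{\gN_0}=\Omega(T)$ follows unconditionally from boundedness of $J_i$ is not quite right — the reward-gap sum over $\gN_0$ can be negative, so one genuinely needs the two-case dichotomy rather than an unconditional lower bound on $\lone{\gN_0}$. With those two repairs your argument coincides with the paper's proof.
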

\vspace{-0.2cm}
\Cref{thm2} guarantees that CRPO converges to the global optimal policy $\pi^*$ in the feasible set at a sublinear rate $\mathcal{O}(1/\sqrt{T})$ with a approximation error $\mathcal{O}(m^{-1/8})$ vanishes as the network width $m$ increases. The constraint violation bound also converges to zero at a sublinear rate $\mathcal{O}(1/\sqrt{T})$ with a vanishing error $\mathcal{O}(m^{-1/8})$ decreases as $m$ increase. The approximation error arises from both the policy evaluation and policy optimization due to the limited expressive power of neural networks. 

To compare with the primal-dual approach in the function approximation setting, \Cref{thm2} shows that while the value function gap of CRPO achieves the same convergence rate as the primal-dual approach, the constraint violation of CRPO decays at a convergence rate of $\mathcal{O}(1/\sqrt{T})$, which substantially outperforms the rate $\mathcal{O}(1/T^{\frac{1}{4}})$ of the primal-dual approach \cite{ding2020natural}. Such an advantage of CRPO is further validated by our experiments in \Cref{sec:experiment}, which show that the constraint violation of CRPO vanishes much faster than that of the primal-dual approach.




\begin{remark}
Our convergence analysis for \Cref{thm2} can still go through without Assumptions \ref{ass3} and \ref{ass4}. As a result, the convergence rate of CRPO would have polynomial dependence on $\delta$ rather than logarithmic dependence.
\end{remark}



\section{Experiments}\label{sec:experiment}
In this section, we conduct simulation experiments on different SRL tasks to compare our CRPO with the primal-dual optimization (PDO) approach.
We consider two tasks based on OpenAI gym \cite{1606.01540} with each having multiple constraints given as follows:

%


\textbf{Cartpole: }The agent is rewarded for keeping the pole upright, but is penalized with cost if (1) entering into some specific areas, or (2) having the angle of pole being large.

\textbf{Acrobot: }The agent is rewarded for swing the end-effector at a specific height, but is penalized with cost if (1) applying torque on the joint when the first link swings in a prohibited direction, or (2) when the the second link swings in a prohibited direction with respect to the first link.

The detailed experimental setting is described in \Cref{sc: expsetting}. For both experiments, we use neural softmax policy with two hidden layers of size $(128, 128)$. 
For fair comparison, we adopt TRPO as the optimizer for both CRPO and PDO. In CRPO, we let the tolerance $\eta=0.5$ in both tasks. In PDP, we initialize the Lagrange multiplier as zero, and select the best tuned stepsize for dual variable update in both tasks.
We find that the performance of CRPO is robust to the value of $\eta$ over a wide range, while in PDO method the convergence performance is very sensitive to the stepsize of the dual variable (see additional experiments of hyperparameters comparison in \Cref{sc: expsetting}). Thus, in contrast to the difficulty of tuning the PDO method, CRPO is much less sensitive to hyper-parameters and is hence much easier to tune.


The learning curves for CRPO and PDO are provided in \Cref{fig: 1}. At each step we evaluate the performance based on two metrics: the return reward and constraint value of the output policy. We show the learning curve of unconstrained TRPO (the green line), which although achieves the best reward, does not satisfy the constraints.
\begin{figure}[h]
	\vspace{-0.4cm}
	\centering 
	\begin{tabular}{c}
	\subfigure[Cartpole]{\includegraphics[width=1.5in]{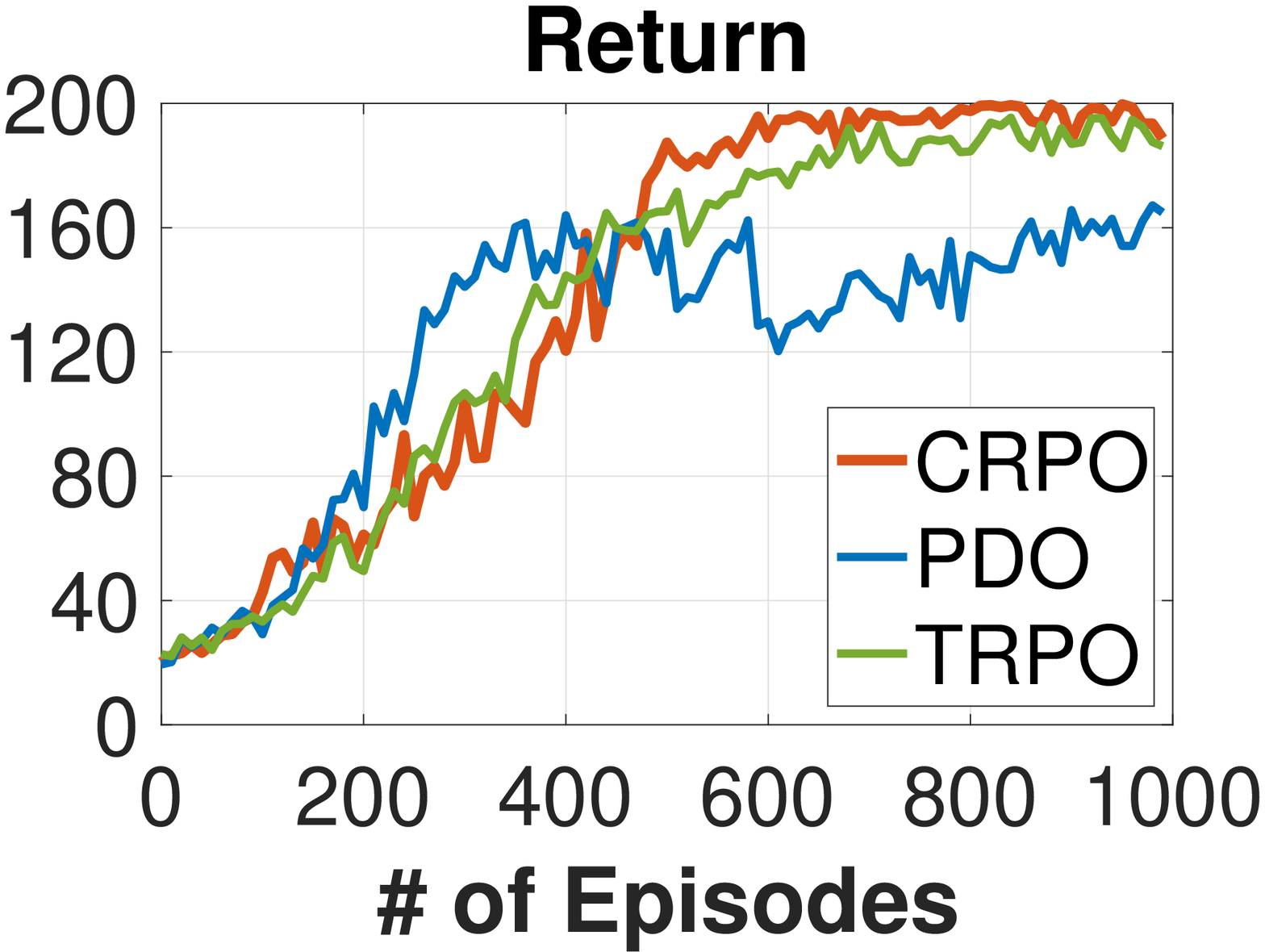}\includegraphics[width=1.5in]{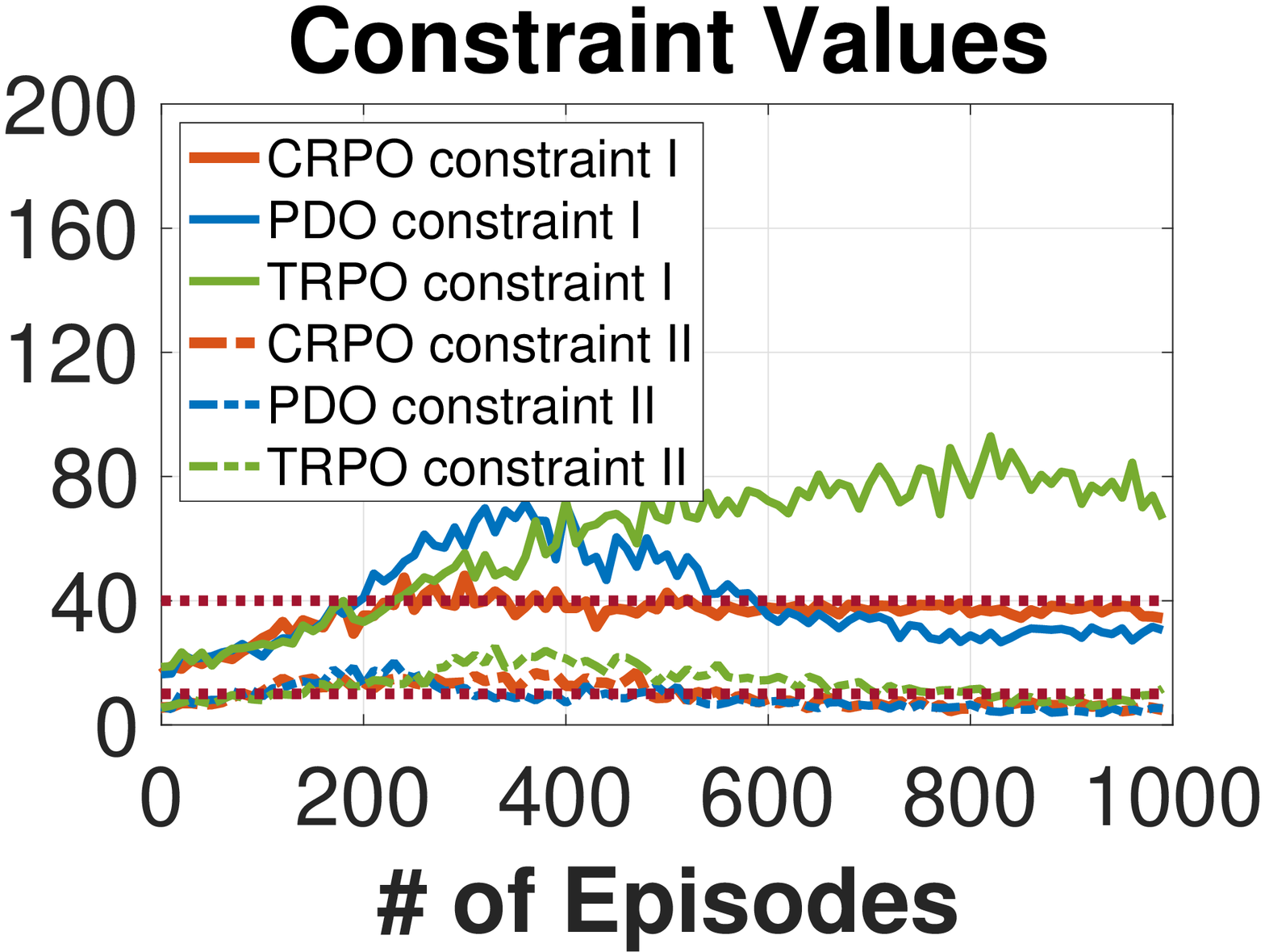}} \\
	\subfigure[Acrobot]{\includegraphics[width=1.5in]{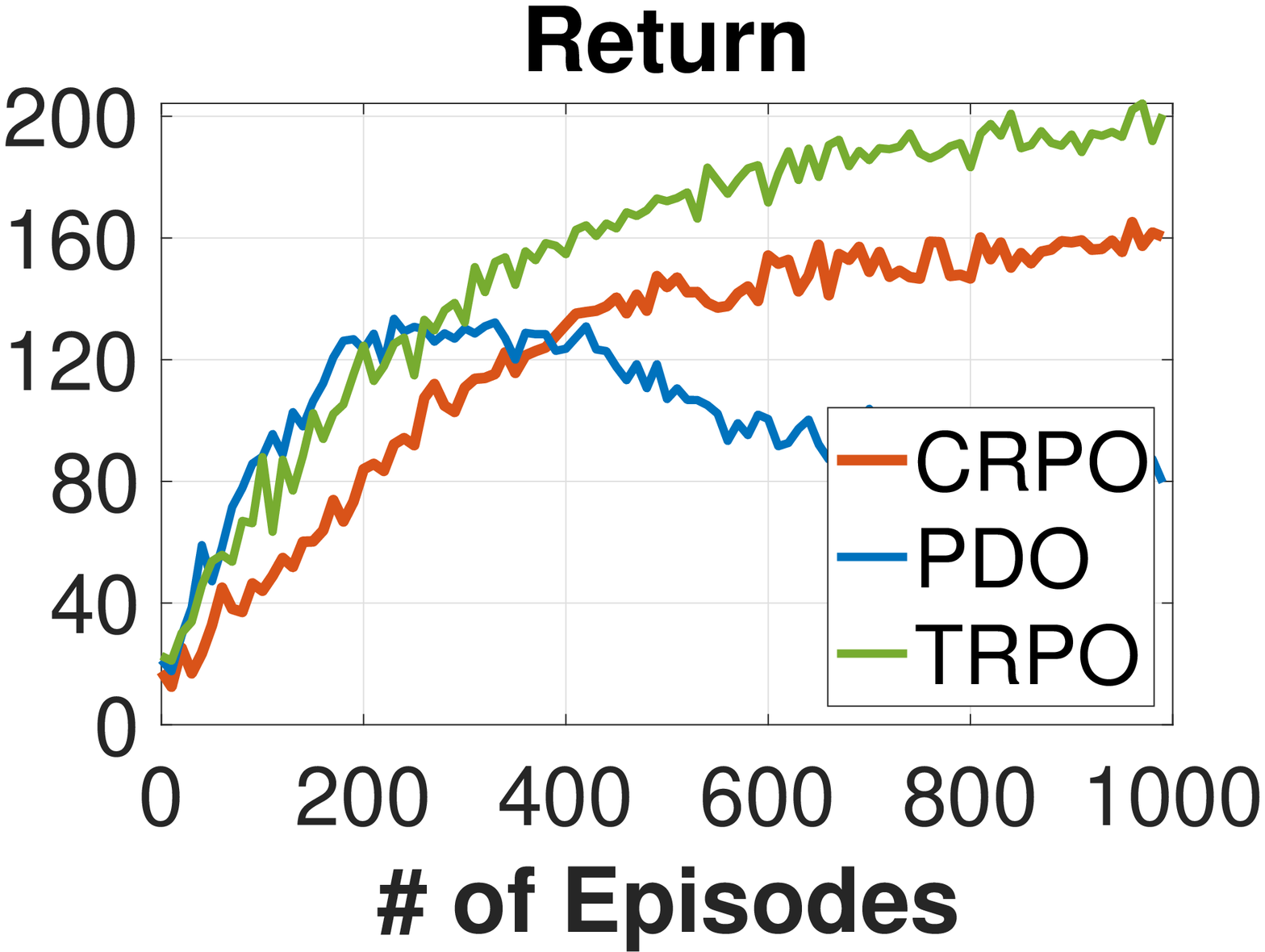}\includegraphics[width=1.5in]{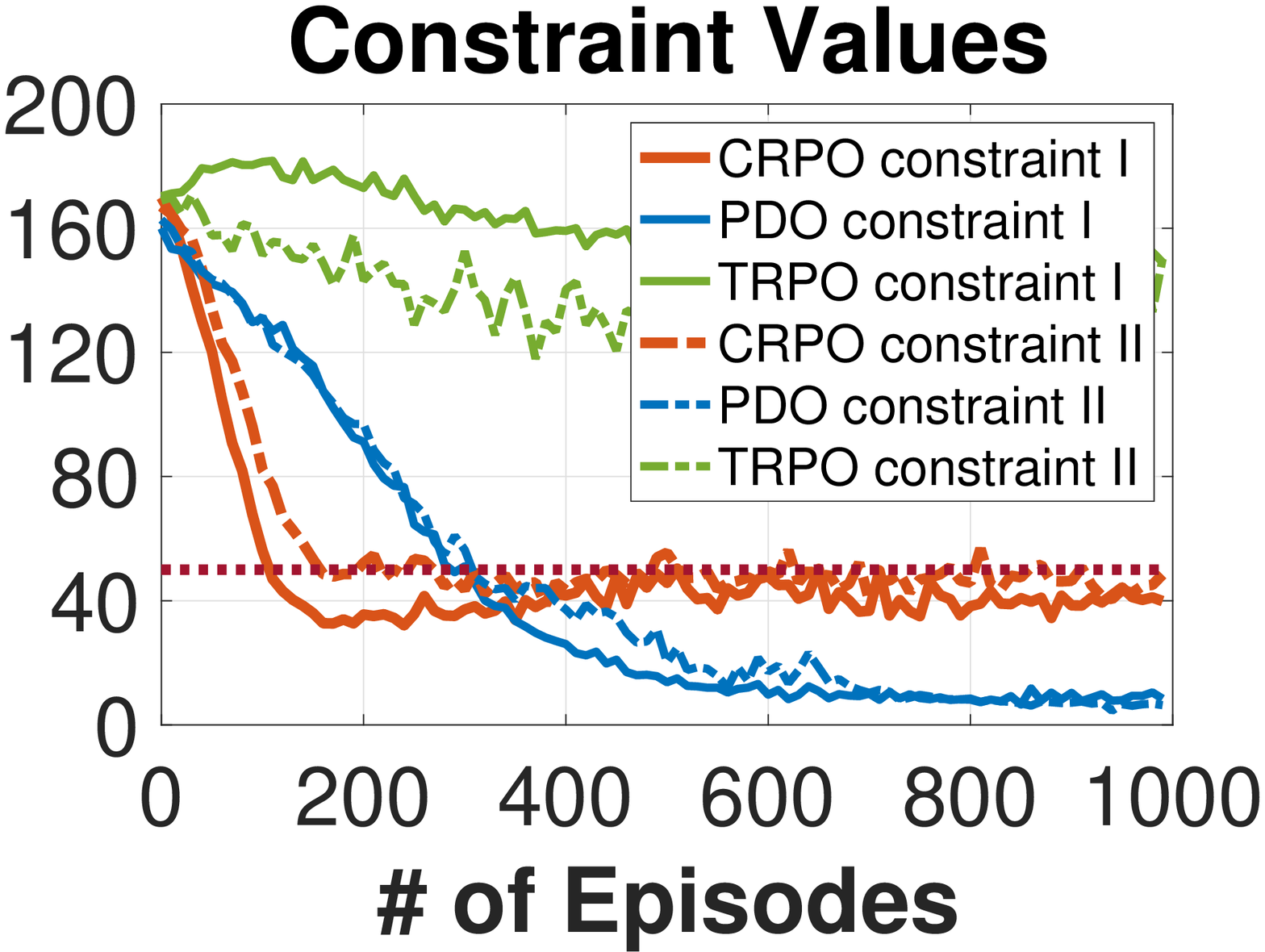}}
	\end{tabular}
	\vspace{-0.4cm}
	\caption{\small\em Average performance for CRPO, PDO, and unconstrained TRPO over 10 seeds. The red dot lines in (a) and (b) represent the limits of the constraints.} 
	\label{fig: 1}
	\vspace{-0.4cm}
\end{figure}

In both tasks, CRPO tracks the constraint returns almost exactly to the limit, indicating that CRPO sufficiently explores the boundary of the feasible set, which results in an optimal return reward. In contrast, although PDO also outputs a constraints-satisfying policy in the end, it tends to over- or under-enforce the constraints, which results in lower return reward and unstable constraint satisfaction performance. In terms of the convergence, the constraints of CRPO drop below the thresholds (and thus satisfy the constraints) much faster than that of PDO, corroborating our theoretical comparison that the constraint violation of CRPO (given in \Cref{thm2}) converges much faster than that of PDO given in \cite{ding2020natural}.

\vspace{-0.3cm}
\section{Conclusion}
In this paper, we propose a novel CRPO approach for policy optimization for SRL, which is easy to implement and has provable global optimality guarantee. We show that CRPO achieves an $\mathcal{O}(1/\sqrt{T})$ convergence rate to the global optimum and an $\mathcal{O}(1/\sqrt{T})$ rate of vanishing constraint error when NPG update is adopted as the optimizer. This is the first primal SRL algorithm that has a provable convergence guarantee to a global optimum. In the future, it is interesting to incorporate various momentum schemes to CRPO to improve its convergence performance. 




\bibliography{ref}
\bibliographystyle{icml2021}

\onecolumn
\newpage
\appendix
\noindent {\Large \textbf{Supplementary Materials}}

\section{Experimental Setting}\label{sc: expsetting}
In our constrained Cartpole environment, the cart is restricted in the area $[-2.4,2.4]$. Each episode length is no longer than 200 and terminated when the angle of the pole is larger than 12 degree. During the training, the agent receives a reward $+1$ for every step taken, but is penalized with cost $+1$ if (1) entering the area $[-2.4, -2.2]$, $[-1.3, -1.1]$, $[-0.1, 0.1]$, $[1.1, 1.3]$, and $[2.2, 2.4]$; or (2) having the angle of pole larger than 6 degree.

In our constrained Acrobot environment, each episode has length 500. 
During the training, the agent receives a reward $+1$ when the end-effector is at a height of 0.5, but is penalized with cost $+1$ when 
(1) a torque with value $+1$ is applied when the first pendulum swings along an anticlockwise direction; or
(2) a torque with value $+1$ is applied when the second pendulum swings along an anticlockwise direction with respect to the first pendulum.

For details about the update of PD, please refer to \cite{achiam2017constrained}[Section 10.3.3]. The performance of PD is very sensitive to the stepsize of the dual variable's update. If the stepsize is too small, then the dual variable will not update quickly to enforce the constraints. If the stepsize is too large, then the algorithm will behave conservatively and have low return reward.
To appropriately select the stepsize for the dual variable, we conduct the experiments with the learning rates $\{0.0001, 0.0005, 0.001,0.005,0.01,0.05\}$ for both tasks. The learning rate $0.005$ performs the best in the first task, and the learning rate $0.0005$ performs the best in the second task. Thus, our reported result of Cartpole is with the stepsize $0.005$ and our reported result of Acrobot is with the stepsize $0.0005$.

Next, we investigate the robustness of CRPO with respect to the tolerance parameter $\eta$. We conduct the experiments under the following values of $\eta$ $\{10, 5, 2, 1, 0.5\}$ for the Acrobot environment. It can be seen from \Cref{fig: 2} that the learning curves of CRPO with the tolerance parameter $\eta$ taking different values are almost the same, which indicates that the convergence performance of CRPO is robust to the value of $\eta$ over a wide range. Thus, the tolerance parameter $\eta$ does not cause much parameter tuning cost for CRPO.

\begin{figure*}[ht]  
	\centering 
	\subfigure[Return Reward]{\includegraphics[width=55mm]{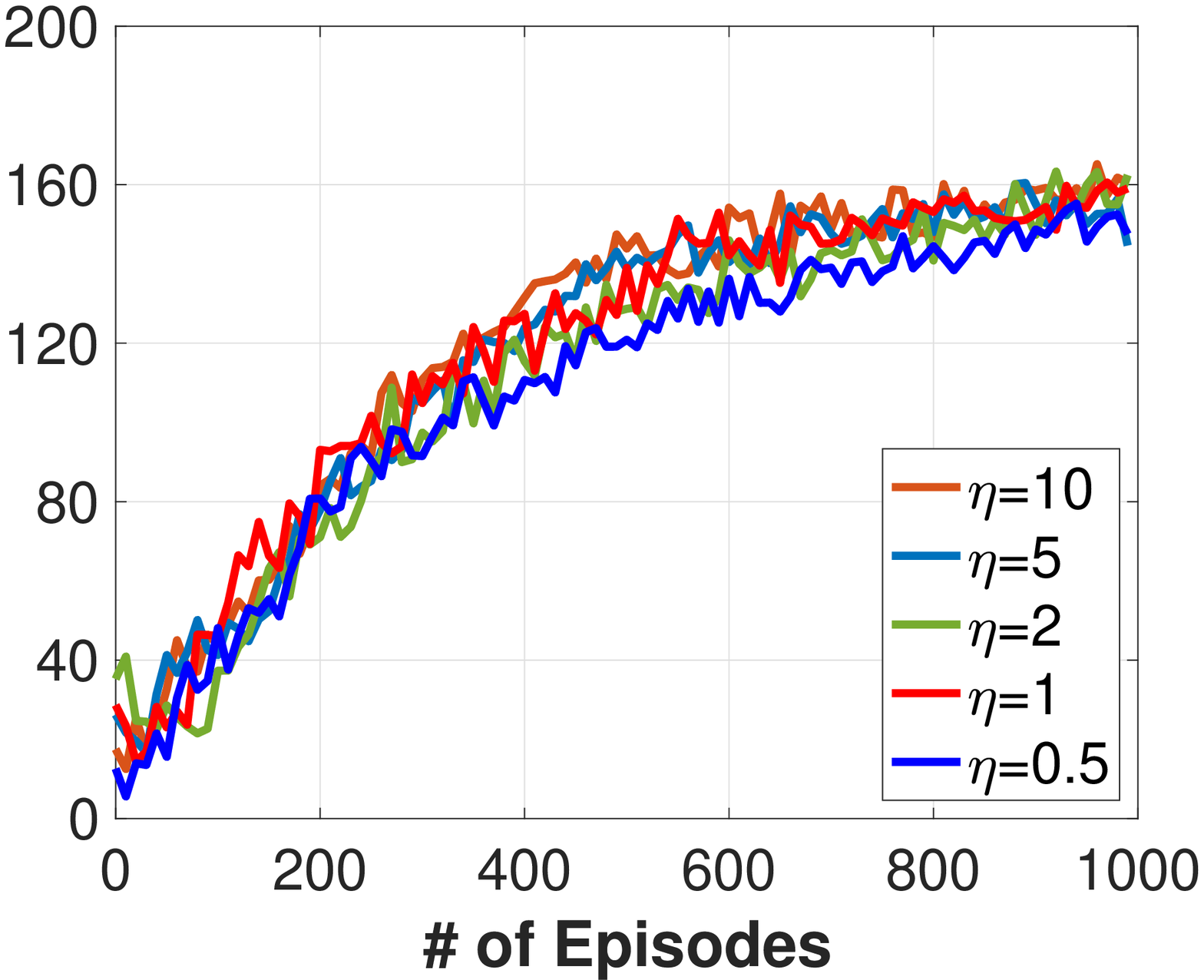}}
	\subfigure[Constraint Value I]{\includegraphics[width=55mm]{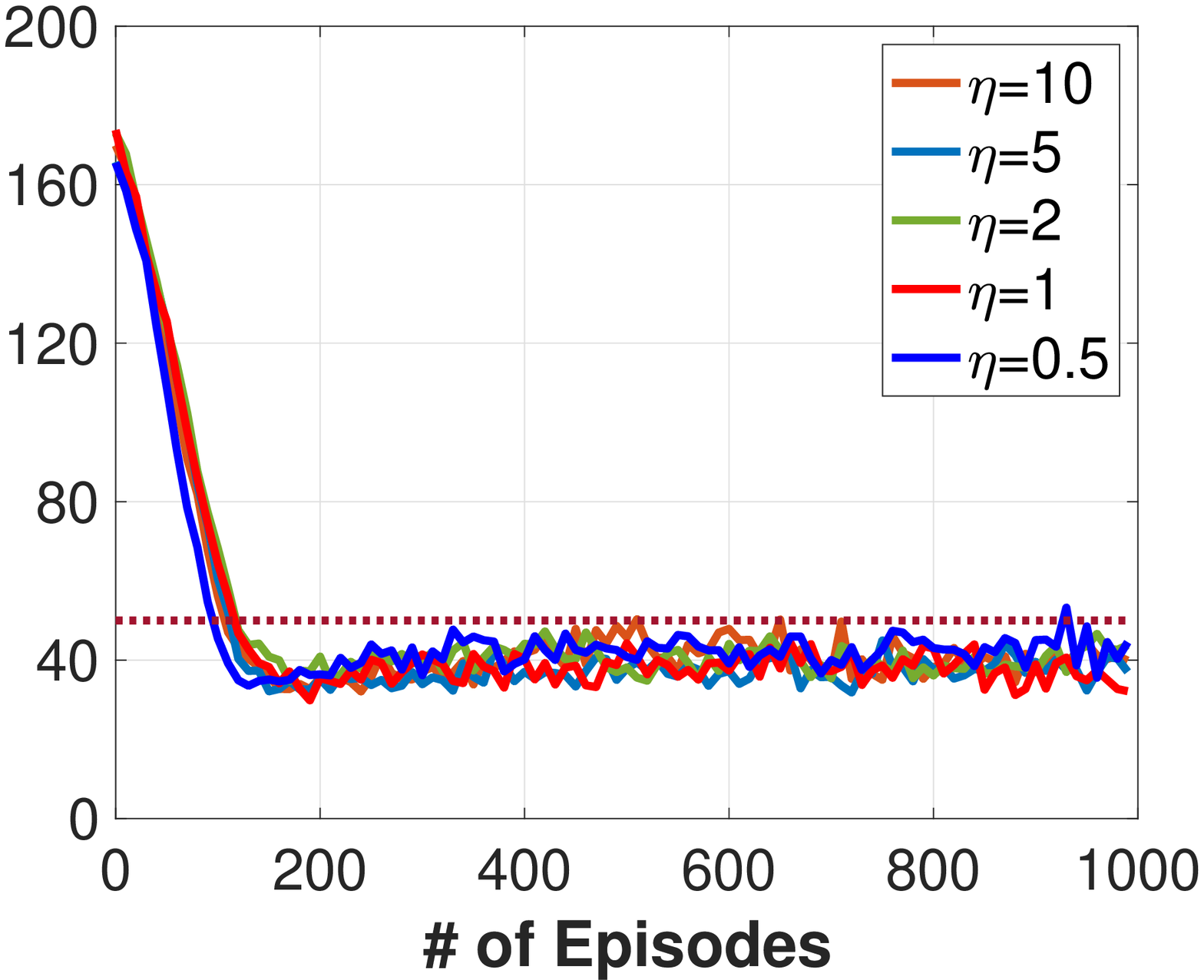}}
	\subfigure[Constraint Value II]{\includegraphics[width=55mm]{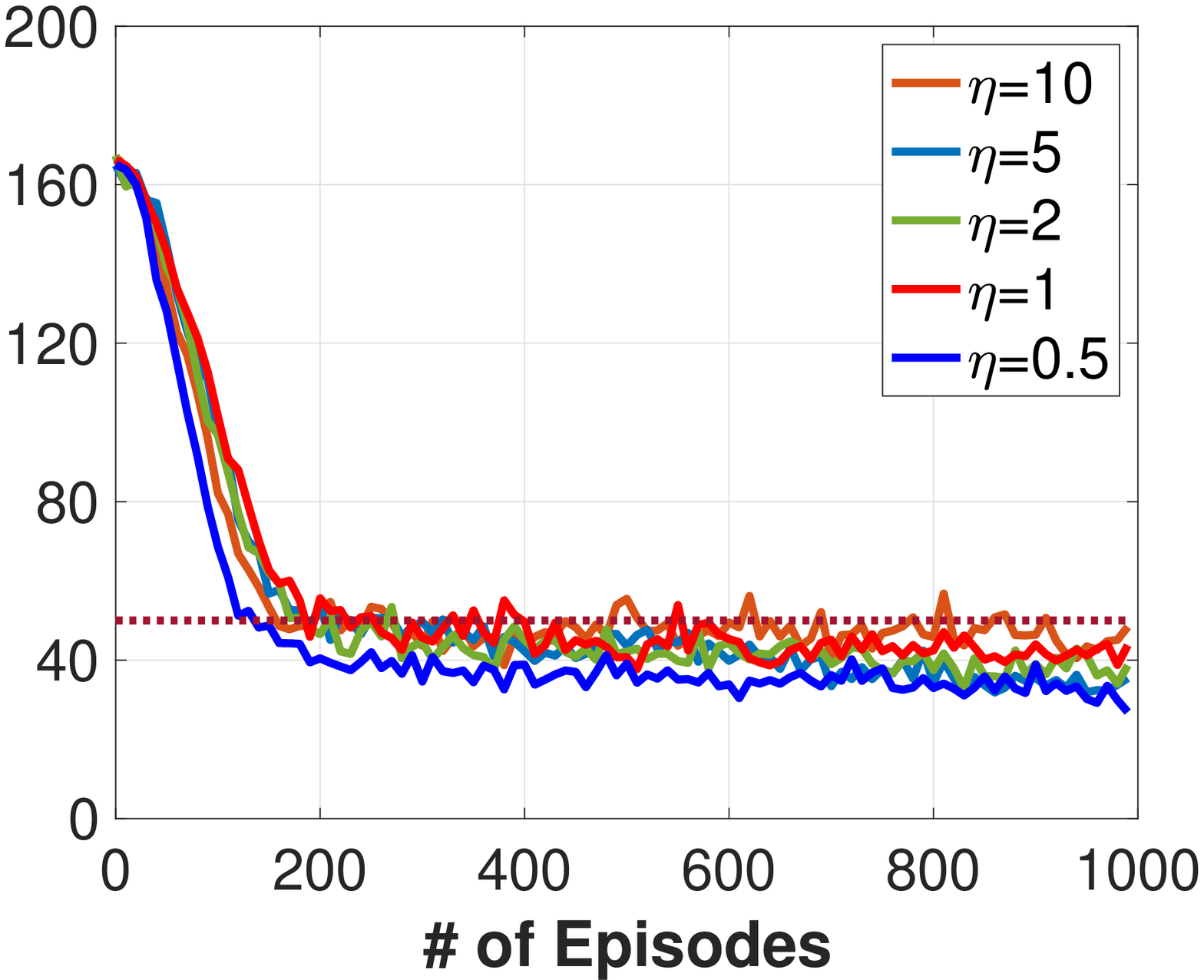}}
	\caption{Comparison of CRPO in Acrobot with tolerance parameter $\eta$ taking different values.}   \label{fig: 2}
\end{figure*}

\section{Proof of \Cref{thm1}: Tabular Setting}\label{app: thm1}

\subsection{Supporting Lemmas for Poof of \Cref{thm1}}
The following lemma characterizes the convergence rate of TD learning in the tabular setting.
\begin{lemma}[\cite{dalal2019tale}]\label{lemma: tabularTD}
	Consider the iteration given in \cref{eq: 3} with arbitrary initialization $\theta^i_0$. Assume that the stationary distribution $\mu_{\pi_w}$ is not degenerate for all $w\in \sR^{\lone{\mcs}\times\lone{\mca}}$. Let stepsize $\beta_k=\Theta(\frac{1}{t^\sigma})\,(0<\sigma<1)$. Then, with probability at least $1-\delta$, we have
	{\small\begin{flalign*}
		\ltwo{\theta^i_K-\theta^i_*(\pi_w)}=\mathcal{O}\left(\frac{\log(\lone{\mcs}^2\lone{\mca}^2K^2/\delta)}{(1-\gamma)K^{\sigma/2}}\right).
		\end{flalign*}}
\end{lemma}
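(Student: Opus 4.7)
The plan is to follow the stochastic approximation analysis of asynchronous tabular TD in \eqref{eq: 3}, and then upgrade an in-expectation rate to a high-probability bound by combining martingale concentration with a union bound over coordinates and iterations. First I would set $\epsilon^i_k = \theta^i_k - \theta^i_*(\pi_w)$. Because the fixed point satisfies the Bellman equation $\theta^i_*(s,a) = \E[c_i(s,a,s') + \gamma\theta^i_*(s',a') \mid s,a]$, subtracting this from one step of the update rewrites the per-coordinate recursion as a stochastic perturbation of a contraction: when the sample is $(s_k,a_k) = (s,a)$,
\begin{flalign*}
\epsilon^i_{k+1}(s,a) = (1-\beta_k)\epsilon^i_k(s,a) + \gamma\beta_k\,\epsilon^i_k(s'_k,a'_k) + \beta_k\,\xi_k(s,a),
\end{flalign*}
and $\epsilon^i_{k+1}(s,a) = \epsilon^i_k(s,a)$ otherwise, where $\xi_k$ is a martingale difference with respect to the natural filtration $\gF_k$ with uniformly bounded increments (since rewards are bounded and the iterates can be shown to stay in a bounded region by a bootstrap argument).

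Next I would build a Lyapunov recursion. Taking conditional expectation over the draw $(s_k,a_k)\sim\mu_{\pi_w}$ and using the non-degeneracy assumption on $\mu_{\pi_w}$ (so that $\mu_{\min}:=\min_{s,a}\mu_{\pi_w}(s,a)>0$), the deterministic part contracts with factor at least $1-c_w\beta_k$ with $c_w=\Theta(\mu_{\min}(1-\gamma))$, leading to a one-step bound of the form $\E[\ltwo{\epsilon^i_{k+1}}^2\mid\gF_k]\leq (1-c_w\beta_k)\ltwo{\epsilon^i_k}^2 + C\beta_k^2$, where $C$ captures the uniform variance bound on $\xi_k$. Unrolling this recursion with $\beta_k=\Theta(1/k^\sigma)$, $0<\sigma<1$, using the standard SA bookkeeping (split the sum at $K/2$ and telescope the decaying factor), produces the in-expectation rate $\E\ltwo{\epsilon^i_K}=\mathcal{O}(1/((1-\gamma)K^{\sigma/2}))$; this portion is classical.

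The main obstacle is sharpening the in-expectation rate into the claimed high-probability statement carrying the $\log(\lone{\mcs}^2\lone{\mca}^2K^2/\delta)$ prefactor. The approach is a bootstrap coupled with scalar martingale concentration: assume inductively that $\ltwo{\epsilon^i_k}$ stays in the deterministic envelope $r_k=\Theta(\log(\lone{\mcs}^2\lone{\mca}^2K^2/\delta)/((1-\gamma)k^{\sigma/2}))$; under this envelope each $\xi_k(s,a)$ is uniformly bounded, so for each fixed coordinate $(s,a)$ I can apply an Azuma--Hoeffding (or Freedman) inequality to the scalar martingale $\sum_{j<k}\beta_j\xi_j(s,a)$ to obtain a deviation bound holding with probability at least $1-\delta/(\lone{\mcs}\lone{\mca}K^2)$. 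Taking a union bound over all $(s,a)\in\mcs\times\mca$ and all $k\leq K$ consumes a factor of $\lone{\mcs}\lone{\mca}K^2$, which is precisely the source of the $\log(\lone{\mcs}^2\lone{\mca}^2K^2/\delta)$ in the lemma. Plugging this concentration control of the noise back into the contraction recursion closes the induction and delivers the stated rate.

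The delicate point I expect to have to handle carefully is the self-consistency of the bootstrap envelope: the noise magnitude feeds back into $\ltwo{\epsilon_k}$ itself, so the inductive hypothesis has to be propagated across the entire trajectory rather than just verified at a single step. This is usually addressed by introducing a stopping time at which the envelope is first violated and then showing, via the concentration event, that the stopping time exceeds $K$ with the desired probability — at which point the union bound over iterations is what forces the $K^2$ (rather than $K$) inside the logarithm. Once this self-referential step is in place, the pieces snap together into the bound in the form claimed.
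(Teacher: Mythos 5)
The paper does not prove this lemma at all: it is imported verbatim from \cite{dalal2019tale} and used as a black box, so there is no in-paper argument to compare against. Your sketch is a faithful reconstruction of the standard analysis behind results of this type (and of the cited reference in particular): write the error recursion for the asynchronous update around the fixed point, extract a contraction with rate $\Theta(\mu_{\min}(1-\gamma))$ from the non-degeneracy of $\mu_{\pi_w}$, control the martingale noise by Azuma/Freedman, take a union bound over the $\lone{\mcs}\lone{\mca}$ coordinates and the $K$ iterations (which is exactly where the $\log(\lone{\mcs}^2\lone{\mca}^2K^2/\delta)$ factor comes from), and close the self-referential envelope with a stopping-time/induction argument. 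You also correctly identify the one genuinely delicate point — that the bootstrap envelope must be propagated along the whole trajectory rather than checked pointwise — and propose the standard stopping-time fix. Two minor remarks: (i) since \cref{eq: 3} samples $(s,a)\sim\mu_{\pi_w}\cdot\pi_w$ i.i.d.\ at each step, the martingale-difference structure of $\xi_k$ is immediate and no mixing-time argument is needed, which your sketch implicitly assumes and is consistent with; (ii) the resulting constant carries a $\mu_{\min}^{-1}$ dependence that the lemma's $\mathcal{O}(\cdot)$ silently absorbs, as does yours. As an outline your plan is sound and matches the technique of the cited source; it is not, of course, a complete proof, but nothing in it would fail when executed.
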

Note that $\sigma$ can be arbitrarily close to $1$. 
\Cref{lemma: tabularTD} implies that we can obtain an approximation $\bar{Q}^i_t$ such that $\ltwo{\bar{Q}^i_t - Q^i_{\pi_w}}=\mathcal{O}(1/\sqrt{K_{\text{in}}})$ with high probability.

\begin{lemma}[Performance difference lemma \cite{kakade2002approximately} ]\label{lemma_s3}
	For all policies $\pi$, $\pi^\prime$ and initial distribution $\rho$, we have
	\begin{flalign*}
		J_i^\rho(\pi)-J^\rho_i(\pi^\prime)=\frac{1}{1-\gamma}\sE_{s\sim\nu_\rho}\sE_{a\sim\pi(\cdot|s)}[A^i_{\pi^\prime}(s,a)]
	\end{flalign*}
    where $J^\rho_i(\pi)$ and $\nu_\rho$ denote the accumulated reward (cost) function and visitation distribution under policy $\pi$ when the initial state distribution is $\rho$.
\end{lemma}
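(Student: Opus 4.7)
The plan is to follow the classical telescoping argument of Kakade \& Langford cited in the statement. Fix any starting state $s_0$ and expand $V^i_\pi(s_0)=\sE_\pi[\sum_{t\ge 0}\gamma^t c_i(s_t,a_t,s_{t+1})\mid s_0]$ along a trajectory generated by $\pi$. The key observation is the telescoping identity
\[
V^i_{\pi^\prime}(s_0)=\sE_\pi\Big[\sum_{t\ge 0}\gamma^t\bigl(V^i_{\pi^\prime}(s_t)-\gamma V^i_{\pi^\prime}(s_{t+1})\bigr)\,\Big|\,s_0\Big],
\]
valid by absolute convergence since $c_i$ (hence $V^i_{\pi^\prime}$) is bounded and $\gamma\in(0,1)$; the trajectory law on the right is arbitrary, and I will take it under $\pi$ to set up the next step.

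Subtracting this from the expansion of $V^i_\pi(s_0)$ gives
\[
V^i_\pi(s_0)-V^i_{\pi^\prime}(s_0)=\sE_\pi\Big[\sum_{t\ge 0}\gamma^t\bigl(c_i(s_t,a_t,s_{t+1})+\gamma V^i_{\pi^\prime}(s_{t+1})-V^i_{\pi^\prime}(s_t)\bigr)\,\Big|\,s_0\Big].
\]
Conditioning on $(s_t,a_t)$ and applying the tower property together with the Bellman identity $Q^i_{\pi^\prime}(s,a)=\sE[c_i(s,a,s^\prime)+\gamma V^i_{\pi^\prime}(s^\prime)\mid s,a]$ collapses each summand to $Q^i_{\pi^\prime}(s_t,a_t)-V^i_{\pi^\prime}(s_t)=A^i_{\pi^\prime}(s_t,a_t)$, yielding $V^i_\pi(s_0)-V^i_{\pi^\prime}(s_0)=\sum_{t\ge 0}\gamma^t\sE_\pi[A^i_{\pi^\prime}(s_t,a_t)\mid s_0]$.

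Finally, take the outer expectation with respect to $s_0\sim\rho$; the definition of $\nu_\rho$ recalled in Section~2 of the paper, namely $\nu_\rho(s,a)=(1-\gamma)\sum_{t\ge 0}\gamma^t\Pr_\pi(s_t=s,a_t=a\mid s_0\sim\rho)$, converts the discounted sum into $(1-\gamma)^{-1}\sE_{(s,a)\sim\nu_\rho}[A^i_{\pi^\prime}(s,a)]$, which factorizes as $(1-\gamma)^{-1}\sE_{s\sim\nu_\rho}\sE_{a\sim\pi(\cdot\mid s)}[A^i_{\pi^\prime}(s,a)]$ because the action marginal of the $\pi$-induced occupancy is exactly $\pi(\cdot\mid s)$. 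The only delicate step is interchanging the infinite summation with the expectation, which is legitimate by Fubini since $|c_i|$ and $|V^i_{\pi^\prime}|$ are uniformly bounded; beyond this routine bookkeeping I do not anticipate any obstacle.
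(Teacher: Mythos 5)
Your proof is correct. The paper does not actually prove this lemma—it imports it directly from the cited reference \cite{kakade2002approximately}—and your telescoping argument (expanding $V^i_{\pi'}(s_0)$ as a telescoping sum along $\pi$-trajectories, collapsing each term to $A^i_{\pi'}(s_t,a_t)$ via the tower property and the Bellman identity, then converting the discounted sum into an expectation under the visitation measure $\nu_\rho$) is exactly the standard derivation of that cited result, with the boundedness/Fubini points handled appropriately.
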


\begin{lemma}[Lemma 5.6. \cite{agarwal2019optimality}]\label{lemma_s1}
	Considering the approximated NPG update in line 7 of \Cref{algorithm_cpg} in the tabular setting and $i=0$, the NPG update takes the form:
	\begin{flalign*}
		w_{t+1}=w_t + \frac{\alpha}{1-\gamma}\bar{Q}^i_t,\quad\text{and}\quad \pi_{w_{t+1}}(a|s)=\pi_{w_{t}}(a|s)\frac{\exp(\alpha \bar{Q}^i_t(s,a)/(1-\gamma))}{Z_t(s)},
	\end{flalign*}
    where
    \begin{flalign*}
    	Z_t(s)=\sum_{a\in\mca}\pi_{w_t}(a|s)\exp\left( \frac{\alpha\bar{Q}^i_t(s,a) }{1-\gamma} \right).
    \end{flalign*}
\end{lemma}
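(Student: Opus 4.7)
The plan is to prove the lemma in two clean stages: first derive the closed-form expression for the (approximated) natural policy gradient under softmax parametrization to obtain the additive update on $w_t$, and then substitute this additive update into the softmax definition in \cref{eq: 2} to read off the multiplicative form of $\pi_{w_{t+1}}(a|s)$.

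For the first stage, I would start from the definition $\Delta_i(w) = F(w)^\dagger \nabla J_i(\pi_w)$ and exploit the block structure that softmax parametrization confers. For softmax in \cref{eq: 2}, the score function is $\phi_w(s,a) = e_{s,a} - \sum_{a'}\pi_w(a'|s)\, e_{s,a'}$, where $e_{s,a}\in\sR^{|\mcs||\mca|}$ is the standard basis vector; consequently the Fisher information matrix is block-diagonal over states, with the block at $s$ equal to $\nu_{\pi_w}(s)\,(\mathrm{diag}(\pi_w(\cdot|s)) - \pi_w(\cdot|s)\pi_w(\cdot|s)^\top)$. The policy gradient, via the standard policy-gradient theorem, has coordinates $\nabla J_i(\pi_w)_{s,a} = \frac{1}{1-\gamma}\nu_{\pi_w}(s,a)\, A^i_{\pi_w}(s,a)$. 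Multiplying $F(w)$ against the candidate vector $\Delta_i(w)_{s,a} = \frac{1}{1-\gamma}Q^i_{\pi_w}(s,a)$ and using $\sum_{a}\pi_w(a|s)Q^i_{\pi_w}(s,a)=V^i_{\pi_w}(s)$ shows that $F(w)\Delta_i(w)$ exactly matches $\nabla J_i(\pi_w)$ up to the null space of $F(w)$. Since the null space is spanned by directions that are constant across $a$ for each $s$ (and such directions leave the softmax policy unchanged), the identification $\Delta_i(w)_{s,a} = \frac{1}{1-\gamma}Q^i_{\pi_w}(s,a)$ is valid in the policy-equivalence sense. Replacing $Q^i_{\pi_w}$ by its TD estimate $\bar{Q}^i_t$ gives $\bar{\Delta}_t = \frac{1}{1-\gamma}\bar{Q}^i_t$, so the update $w_{t+1}=w_t+\alpha\bar{\Delta}_t$ in line~7 of \Cref{algorithm_cpg} becomes $w_{t+1}=w_t+\frac{\alpha}{1-\gamma}\bar{Q}^i_t$, establishing the first claim.

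For the second stage, the argument is purely algebraic. Inserting the additive update into \cref{eq: 2} gives
\begin{align*}
\pi_{w_{t+1}}(a|s) &= \frac{\exp\!\bigl(w_t(s,a)+\tfrac{\alpha}{1-\gamma}\bar{Q}^i_t(s,a)\bigr)}{\sum_{a'\in\mca}\exp\!\bigl(w_t(s,a')+\tfrac{\alpha}{1-\gamma}\bar{Q}^i_t(s,a')\bigr)} \\
&= \pi_{w_t}(a|s)\cdot\frac{\exp\!\bigl(\alpha\bar{Q}^i_t(s,a)/(1-\gamma)\bigr)}{\sum_{a'}\pi_{w_t}(a'|s)\exp\!\bigl(\alpha\bar{Q}^i_t(s,a')/(1-\gamma)\bigr)},
\end{align*}
where the second equality is obtained by multiplying numerator and denominator by $1/\bigl(\sum_{a'}\exp(w_t(s,a'))\bigr)$. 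Identifying the denominator as $Z_t(s)$ yields exactly the stated multiplicative form.

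The main obstacle, and the only part that requires care, is the first stage: handling the singularity of $F(w)$. The Fisher matrix is not invertible because adding any state-dependent constant to $w(s,\cdot)$ leaves $\pi_w$ unchanged, so I need to argue that the pseudo-inverse expression $F(w)^\dagger\nabla J_i(\pi_w)$ is well-defined as a policy update. The cleanest way to do this is to verify that both $\frac{1}{1-\gamma}Q^i_{\pi_w}$ and $F(w)^\dagger \nabla J_i(\pi_w)$ solve the compatible-function-approximation least-squares problem $\min_\theta \mathbb{E}_{\nu_{\pi_w}}[(A^i_{\pi_w}(s,a)-\phi_w(s,a)^\top\theta)^2]$ up to the null space of $F(w)$, and then observe that any two such solutions induce the same softmax policy after the exponential/normalization step. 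Once this policy-equivalence subtlety is addressed, the remaining computation is straightforward bookkeeping.
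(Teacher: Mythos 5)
Your proposal is correct. The paper does not prove this lemma at all---it is imported verbatim as Lemma 5.6 of \cite{agarwal2019optimality}---and your two-stage derivation (identifying $\Delta_i(w)_{s,a}=(1-\gamma)^{-1}Q^i_{\pi_w}(s,a)$ modulo the null space of $F(w)$, then substituting the additive update into the softmax to read off the multiplicative form with $Z_t(s)$) is essentially the standard argument from that reference. One minor bookkeeping point: $F(w)^\dagger\nabla J_i(\pi_w)$ is $(1-\gamma)^{-1}$ times a minimizer of the compatible least-squares problem $\min_\theta \mathbb{E}_{\nu_{\pi_w}}[(A^i_{\pi_w}-\phi_w^\top\theta)^2]$ rather than a minimizer itself, but your direct verification that $F(w)\Delta_i(w)=\nabla J_i(\pi_w)$ already settles the identification, so nothing hinges on this.
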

Note that if we follow the update in line 10 of \Cref{algorithm_cpg}, we can obtain similar results for the case $i\in\{1,\cdots,p\}$ as stated in \Cref{lemma_s1}.

\begin{lemma}[Policy gradient property of softmax parameterization]\label{lemma_s4}
	Considering the softmax policy in the tabular setting (\cref{eq: 2}). For any initial state distribution $\rho$, we have
	\begin{flalign*}
		\nabla_w J^\rho_i(w)=\sE_{s\sim\nu_\rho}\sE_{a\sim\pi_w(\cdot|s)}\left[\left(\mdsone_{as}-\sum_{a^\prime\in\mca}\pi_w(a^\prime|s)\mdsone_{a^\prime s}\right)Q^i_{\pi_w}(s,a)\right],
	\end{flalign*}
    and
    \begin{flalign*}
    	\ltwo{\nabla_w J^\rho_i(w)}\leq \frac{2c_{\max}}{1-\gamma},
    \end{flalign*}
    where $\mdsone_{as}$ is an $\lone{\mcs}\times\lone{\mca}$-dimension vector, with $(a,s)$-th element being one, and the rest elements being zero.
\end{lemma}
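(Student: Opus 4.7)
The plan is to handle the two assertions separately: first derive the softmax score function by direct differentiation of the log-policy, feed it into the standard policy gradient theorem to obtain the claimed expression, and then bound the $\ell_2$ norm by bounding the score and $Q^i_{\pi_w}$ individually.

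For the gradient formula, I would start from $\log\pi_w(a|s) = w(s,a) - \log\sum_{a'\in\mca}\exp(w(s,a'))$. Differentiating with respect to the coordinate $w(\tilde s,\tilde a)$ is zero whenever $\tilde s\neq s$, and when $\tilde s = s$ it evaluates to $\mdsone\{\tilde a = a\} - \pi_w(\tilde a|s)$. Collecting this over all coordinates of $\sR^{|\mcs|\times|\mca|}$ gives exactly $\nabla_w \log\pi_w(a|s) = \mdsone_{as} - \sum_{a'\in\mca}\pi_w(a'|s)\mdsone_{a's}$. Substituting this score into the usual policy gradient identity $\nabla_w J^\rho_i(w) = \sE_{s\sim\nu_\rho}\sE_{a\sim\pi_w(\cdot|s)}[\nabla_w\log\pi_w(a|s)\,Q^i_{\pi_w}(s,a)]$ (stated earlier in the excerpt, and absorbing the $1/(1-\gamma)$ normalization into the definition of $\nu_\rho$ used here) yields the first claim verbatim.

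For the norm bound, I would first control the score function pointwise. The vector $\phi_w(s,a) := \mdsone_{as} - \sum_{a'}\pi_w(a'|s)\mdsone_{a's}$ is supported only on the $|\mca|$ coordinates indexed by state $s$, taking value $1-\pi_w(a|s)$ at $(s,a)$ and $-\pi_w(a'|s)$ at $(s,a')$ for $a'\neq a$. A direct expansion gives
\begin{align*}
\ltwo{\phi_w(s,a)}^2 &= (1-\pi_w(a|s))^2 + \sum_{a'\neq a}\pi_w(a'|s)^2 \\
&= 1 - 2\pi_w(a|s) + \sum_{a'}\pi_w(a'|s)^2 \le 2(1 - \pi_w(a|s)) \le 2,
\end{align*}
using $\sum_{a'}\pi_w(a'|s)^2 \le \sum_{a'}\pi_w(a'|s) = 1$. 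Combined with the standard geometric-series bound $|Q^i_{\pi_w}(s,a)| \le c_{\max}/(1-\gamma)$ from the definition of the discounted return, Jensen's inequality yields $\ltwo{\nabla_w J^\rho_i(w)} \le \sE[\ltwo{\phi_w(s,a)}\cdot |Q^i_{\pi_w}(s,a)|] \le \sqrt{2}\,c_{\max}/(1-\gamma) \le 2c_{\max}/(1-\gamma)$.

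I do not expect any real obstacle here: the score identity is a one-line softmax computation, the gradient expression follows from quoting the policy gradient theorem, and the norm bound is a Jensen/Cauchy–Schwarz argument on top of two elementary bounds. The only thing I would verify carefully is the normalization convention for $\nu_\rho$ used inside the lemma statement (as opposed to the factored form in Lemma~\ref{lemma_s3}), so that the $(1-\gamma)$ factors balance correctly with the policy gradient theorem as invoked.
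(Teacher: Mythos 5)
Your proposal is correct and follows essentially the same route as the paper: the gradient identity is the softmax score function plugged into the policy gradient theorem (the paper simply cites Lemma C.1 of Agarwal et al.\ rather than differentiating the log-policy explicitly), and the norm bound is obtained by bounding the score vector and $Q^i_{\pi_w}$ separately. Your pointwise computation of the score norm even gives the slightly sharper constant $\sqrt{2}$ where the paper uses the triangle inequality to get $2$; both yield the stated bound $2c_{\max}/(1-\gamma)$.
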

\begin{proof}
	The first result can follows directly from Lemma C.1 in \cite{agarwal2019optimality}. We now proceed to prove the second result.
	\begin{flalign*}
		\ltwo{\nabla_w J^\rho_i(w)}&=\ltwo{\sE\left[\left(\mdsone_{as}-\sum_{a^\prime\in\mca}\pi_w(a^\prime|s)\mdsone_{a^\prime s}\right)Q^i_{\pi_w}(s,a)\right]}\nonumber\\
		&\leq \sE\left[\ltwo{\left(\mdsone_{as}-\sum_{a^\prime\in\mca}\pi_w(a^\prime|s)\mdsone_{a^\prime s}\right)Q^i_{\pi_w}(s,a)}\right]\nonumber\\
		&\leq \sE\left[\ltwo{\mdsone_{as}-\sum_{a^\prime\in\mca}\pi_w(a^\prime|s)\mdsone_{a^\prime s}}Q^i_{\pi_w}(s,a)\right]\nonumber\\
		&\leq 2\sE\left[Q^i_{\pi_w}(s,a)\right]\leq \frac{2c_{\max}}{1-\gamma}.
	\end{flalign*}
\end{proof}

\begin{lemma}[Performance improvement bound for approximated NPG]\label{lemma_s2}
	For the iterates $\pi_{w_t}$ generated by the approximated NPG updates in line 7 of \Cref{algorithm_cpg} in the tabular setting, we have for all initial state distribution $\rho$ and when $i=0$, the following holds
	\begin{flalign*}
		&J^\rho_0(w_{t+1})-J^\rho_0(w_{t})\nonumber\\
		&\geq\frac{1-\gamma}{\alpha}\sE_{s\sim \rho}\left(\log Z_t(s) - \frac{\alpha}{1-\gamma}V^0_{\pi_{w_t}}(s) +\frac{\alpha}{1-\gamma}\sum_{a\in\mca}\pi_{w_t}(a|s)\lone{\bar{Q}^0_t(s,a)-Q^0_{\pi_{w_t}}(s,a)} \right) \nonumber\\
		&\quad - \frac{1}{1-\gamma} \sE_{s\sim\nu_\rho}\sum_{a\in\mca}\pi_{w_t}(a|s)\lone{\bar{Q}^0_t(s,a)-Q^0_{\pi_{w_t}}(s,a)} \nonumber\\
		&\quad - \frac{1}{1-\gamma}\sE_{s\sim\nu_\rho}\sum_{a\in\mca}\pi_{w_{t+1}}(a|s) \lone{Q^0_{\pi_{w_t}}(s,a)-\bar{Q}^0_t(s,a)} \nonumber.
	\end{flalign*}
\end{lemma}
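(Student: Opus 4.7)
The plan is to start from the Performance Difference Lemma (\Cref{lemma_s3}) applied at $i=0$, which gives
\begin{flalign*}
J^\rho_0(w_{t+1}) - J^\rho_0(w_t) = \frac{1}{1-\gamma}\sE_{s\sim\nu_\rho}\sE_{a\sim \pi_{w_{t+1}}(\cdot|s)}[A^0_{\pi_{w_t}}(s,a)].
\end{flalign*}
Next I would use \Cref{lemma_s1} to solve for the approximated state-action value function in terms of the log policy ratio:
\begin{flalign*}
\bar{Q}^0_t(s,a) = \frac{1-\gamma}{\alpha}\left(\log\frac{\pi_{w_{t+1}}(a|s)}{\pi_{w_t}(a|s)} + \log Z_t(s)\right).
\end{flalign*}
Writing $A^0_{\pi_{w_t}}(s,a) = Q^0_{\pi_{w_t}}(s,a) - V^0_{\pi_{w_t}}(s) = \bar{Q}^0_t(s,a) - V^0_{\pi_{w_t}}(s) + (Q^0_{\pi_{w_t}}(s,a) - \bar{Q}^0_t(s,a))$ and substituting the display above yields an identity of four terms: (i) a KL divergence $\frac{1}{\alpha}\sE_{s\sim\nu_\rho}[\mathrm{KL}(\pi_{w_{t+1}}\|\pi_{w_t})]$, (ii) $\frac{1}{\alpha}\sE_{s\sim\nu_\rho}[\log Z_t(s)]$, (iii) $-\frac{1}{1-\gamma}\sE_{s\sim\nu_\rho}[V^0_{\pi_{w_t}}(s)]$, and (iv) $\frac{1}{1-\gamma}\sE_{s\sim\nu_\rho}\sE_{a\sim\pi_{w_{t+1}}}[Q^0_{\pi_{w_t}}(s,a)-\bar{Q}^0_t(s,a)]$. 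The KL term in (i) is non-negative and will simply be dropped.

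The crux of the proof is to convert the $\nu_\rho$-expectation in (ii)$+$(iii) into a $\rho$-expectation, which is what the statement requires in its first line. I would define
\begin{flalign*}
A(s) := \log Z_t(s) - \frac{\alpha}{1-\gamma}V^0_{\pi_{w_t}}(s) + \frac{\alpha}{1-\gamma}\sum_{a\in\mca}\pi_{w_t}(a|s)\lone{\bar{Q}^0_t(s,a) - Q^0_{\pi_{w_t}}(s,a)}
\end{flalign*}
and show $A(s)\geq 0$ by a Jensen-type argument: because $\log Z_t(s) \geq \sum_a \pi_{w_t}(a|s)\cdot(\alpha\bar{Q}^0_t(s,a)/(1-\gamma))$ and $\sum_a\pi_{w_t}(a|s)Q^0_{\pi_{w_t}}(s,a)=V^0_{\pi_{w_t}}(s)$, one obtains $\log Z_t(s) - (\alpha/(1-\gamma))V^0_{\pi_{w_t}}(s) \geq -(\alpha/(1-\gamma))\sum_a \pi_{w_t}(a|s)(Q^0_{\pi_{w_t}}(s,a)-\bar{Q}^0_t(s,a))$, and the absolute value only enlarges the right-hand side, giving $A(s)\geq 0$.

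With $A(s)\geq 0$ in hand, I would invoke the elementary bound $\nu_\rho(s)\geq (1-\gamma)\rho(s)$ that follows from the definition of the $\gamma$-discounted visitation measure, to get $\sE_{s\sim\nu_\rho}[A(s)] \geq (1-\gamma)\sE_{s\sim\rho}[A(s)]$. Rearranging yields
\begin{flalign*}
\frac{1}{\alpha}\sE_{s\sim\nu_\rho}[\log Z_t(s)] - \frac{1}{1-\gamma}\sE_{s\sim\nu_\rho}[V^0_{\pi_{w_t}}(s)] \geq \frac{1-\gamma}{\alpha}\sE_{s\sim\rho}[A(s)] - \frac{1}{1-\gamma}\sE_{s\sim\nu_\rho}\sum_{a\in\mca}\pi_{w_t}(a|s)\lone{\bar{Q}^0_t(s,a) - Q^0_{\pi_{w_t}}(s,a)}.
\end{flalign*}
Finally, bounding term (iv) via $Q^0_{\pi_{w_t}}(s,a) - \bar{Q}^0_t(s,a) \geq -\lone{Q^0_{\pi_{w_t}}(s,a) - \bar{Q}^0_t(s,a)}$ and collecting all pieces produces exactly the three lines of the claimed inequality.

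The main obstacle will be the bookkeeping in the switch from $\sE_{s\sim\nu_\rho}$ to $\sE_{s\sim\rho}$: we need $A(s)\geq 0$ globally (rather than only in expectation) for the visitation-measure comparison to be valid, and this forces the introduction of the absolute value $|\bar{Q}^0_t - Q^0_{\pi_{w_t}}|$ inside $A(s)$ which then reappears as the first of the two correction terms. Careful sign-tracking between the $\pi_{w_t}$-weighted correction (created by the $A(s)\geq 0$ manipulation) and the $\pi_{w_{t+1}}$-weighted correction (created by term (iv)) is the one place where a mistake is easy to make; the rest is algebra.
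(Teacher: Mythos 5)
Your proposal is correct and follows essentially the same route as the paper's proof: the performance difference lemma combined with the update form from \Cref{lemma_s1} to produce the KL term, the Jensen-based nonnegativity of $\log Z_t(s)-\frac{\alpha}{1-\gamma}V^0_{\pi_{w_t}}(s)+\frac{\alpha}{1-\gamma}\sum_a\pi_{w_t}(a|s)\lone{\bar{Q}^0_t(s,a)-Q^0_{\pi_{w_t}}(s,a)}$, and the pointwise bound $\nu_\rho(s)\geq(1-\gamma)\rho(s)$ to pass from $\sE_{s\sim\nu_\rho}$ to $\sE_{s\sim\rho}$. The only cosmetic difference is ordering (the paper establishes the nonnegativity up front), and your explicit note that the nonnegativity must hold pointwise for the measure change to be valid is exactly the right caution.
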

\begin{proof}
We first provide the following lower bound.
\begin{flalign*}
	&\log Z_t(s)-\frac{\alpha}{1-\gamma}V^i_{\pi_{w_t}}(s) \nonumber\\
	&= \log \sum_{a\in\mca}\pi_{w_t}(a|s)\exp\left( \frac{\alpha\bar{Q}^i_t(s,a) }{1-\gamma} \right) - \frac{\alpha}{1-\gamma}V^i_{\pi_{w_t}}(s)\nonumber\\
	&\geq  \sum_{a\in\mca}\pi_{w_t}(a|s)\log\exp\left( \frac{\alpha\bar{Q}^i_t(s,a) }{1-\gamma} \right) - \frac{\alpha}{1-\gamma}V^i_{\pi_{w_t}}(s)\nonumber\\
	&=\frac{\alpha}{1-\gamma}\sum_{a\in\mca}\pi_{w_t}(a|s)(\bar{Q}^i_t(s,a)-Q^i_{\pi_{w_t}}(s,a))+\frac{\alpha}{1-\gamma}\sum_{a\in\mca}\pi_{w_t}(a|s)Q^i_{\pi_{w_t}}(s,a) - \frac{\alpha}{1-\gamma}V^i_{\pi_{w_t}}(s)\nonumber\\
	&=\frac{\alpha}{1-\gamma}\sum_{a\in\mca}\pi_{w_t}(a|s)(\bar{Q}^i_t(s,a)-Q^i_{\pi_{w_t}}(s,a))\nonumber\\
	&\geq \frac{-\alpha}{1-\gamma}\sum_{a\in\mca}\pi_{w_t}(a|s)\lone{\bar{Q}^i_t(s,a)-Q^i_{\pi_{w_t}}(s,a)}.\nonumber
\end{flalign*}
Thus, we conclude that
\begin{flalign*}
	\log Z_t(s)-\frac{\alpha}{1-\gamma}V^i_{\pi_{w_t}}(s) + \frac{\alpha}{1-\gamma}\sum_{a\in\mca}\pi_{w_t}(a|s)\lone{\bar{Q}^i_t(s,a)-Q^i_{\pi_{w_t}}(s,a)}\geq 0.
\end{flalign*}
    We then proceed to prove \Cref{lemma_s2}. The performance difference lemma (\Cref{lemma_s3}) implies:
    \begin{flalign}
    	&J^\rho_i(w_{t+1})-J^\rho_i(w_{t})\nonumber\\
    	&=\frac{1}{1-\gamma}\sE_{s\sim\nu_\rho}\sum_{a\in\mca}\pi_{w_{t+1}}(a|s)A^i_{\pi_{w_t}}(s,a)\nonumber\\
    	&=\frac{1}{1-\gamma}\sE_{s\sim\nu_\rho}\sum_{a\in\mca}\pi_{w_{t+1}}(a|s)Q^i_{\pi_{w_t}}(s,a) - \frac{1}{1-\gamma}\sE_{s\sim\nu_\rho}V^i_{\pi_{w_t}}(s)\nonumber\\
    	&=\frac{1}{1-\gamma}\sE_{s\sim\nu_\rho}\sum_{a\in\mca}\pi_{w_{t+1}}(a|s)\bar{Q}^i_t(s,a) + \frac{1}{1-\gamma}\sE_{s\sim\nu_\rho}\sum_{a\in\mca}\pi_{w_{t+1}}(a|s)(Q^i_{\pi_{w_t}}(s,a)-\bar{Q}^i_t(s,a))\nonumber\\
    	&\quad- \frac{1}{1-\gamma}\sE_{s\sim\nu_\rho}V^i_{\pi_{w_t}}(s)\nonumber\\
    	&\overset{(i)}{=}\frac{1}{\alpha}\sE_{s\sim\nu_\rho}\sum_{a\in\mca}\pi_{w_{t+1}}(a|s)\log\left( \frac{\pi_{w_{t+1}}(a|s)Z_t(s)}{\pi_{w_t}(a|s)} \right) \nonumber\\
    	&\quad + \frac{1}{1-\gamma}\sE_{s\sim\nu_\rho}\sum_{a\in\mca}\pi_{w_{t+1}}(a|s)(Q^i_{\pi_{w_t}}(s,a)-\bar{Q}^i_t(s,a)) - \frac{1}{1-\gamma}\sE_{s\sim\nu_\rho}V^i_{\pi_{w_t}}(s)\nonumber\\
    	&=\frac{1}{\alpha}\sE_{s\sim\nu_\rho}D_{\text{KL}}(\pi_{w_{t+1}}||\pi_{w_t}) + \frac{1}{\alpha}\sE_{s\sim\nu_\rho}\log Z_t(s) \nonumber\\
    	&\quad + \frac{1}{1-\gamma}\sE_{s\sim\nu_\rho}\sum_{a\in\mca}\pi_{w_{t+1}}(a|s)(Q^i_{\pi_{w_t}}(s,a)-\bar{Q}^i_t(s,a)) - \frac{1}{1-\gamma}\sE_{s\sim\nu_\rho}V^i_{\pi_{w_t}}(s)\nonumber\\
    	&\geq \frac{1}{\alpha}\sE_{s\sim\nu_\rho}\left(\log Z_t(s) - \frac{\alpha}{1-\gamma}V^i_{\pi_{w_t}}(s) +\frac{\alpha}{1-\gamma}\sum_{a\in\mca}\pi_{w_t}(a|s)\lone{\bar{Q}^i_t(s,a)-Q^i_{\pi_{w_t}}(s,a)} \right) \nonumber\\
    	&\quad - \frac{1}{1-\gamma} \sE_{s\sim\nu_\rho}\sum_{a\in\mca}\pi_{w_t}(a|s)\lone{\bar{Q}^i_t(s,a)-Q^i_{\pi_{w_t}}(s,a)} \nonumber\\
    	&\quad - \frac{1}{1-\gamma}\sE_{s\sim\nu_\rho}\sum_{a\in\mca}\pi_{w_{t+1}}(a|s) \lone{Q^i_{\pi_{w_t}}(s,a)-\bar{Q}^i_t(s,a)} \nonumber\\
    	&\overset{(ii)}{\geq} \frac{1-\gamma}{\alpha}\sE_{s\sim \rho}\left(\log Z_t(s) - \frac{\alpha}{1-\gamma}V^i_{\pi_{w_t}}(s) +\frac{\alpha}{1-\gamma}\sum_{a\in\mca}\pi_{w_t}(a|s)\lone{\bar{Q}^i_t(s,a)-Q^i_{\pi_{w_t}}(s,a)} \right) \nonumber\\
    	&\quad - \frac{1}{1-\gamma} \sE_{s\sim\nu_\rho}\sum_{a\in\mca}\pi_{w_t}(a|s)\lone{\bar{Q}^i_t(s,a)-Q^i_{\pi_{w_t}}(s,a)} \nonumber\\
    	&\quad - \frac{1}{1-\gamma}\sE_{s\sim\nu_\rho}\sum_{a\in\mca}\pi_{w_{t+1}}(a|s) \lone{Q^i_{\pi_{w_t}}(s,a)-\bar{Q}^i_t(s,a)} \nonumber
    \end{flalign}
where $(i)$ follows from the update rule in \Cref{lemma_s1} and $(ii)$ follows from the facts that $\linf{\nu_\rho/\rho}\geq 1-\gamma$ and $\log Z_t(s)-\frac{\alpha}{1-\gamma}V^i_{\pi_{w_t}}(s)+\frac{\alpha}{1-\gamma}\sum_{a\in\mca}\pi_{w_t}(a|s)\lone{\bar{Q}^i_t(s,a)-Q^i_{\pi_{w_t}}(s,a)}\geq 0$.
\end{proof}
Note that if we follow the update in line 10 of \Cref{algorithm_cpg}, we can obtain similar results for the case $i\in\{1,\cdots,p\}$ as stated in \Cref{lemma_s2}.

\begin{lemma}[Upper bound on optimality gap for approximated NPG]\label{lemma_s5}
	Consider the approximated NPG updates in line 7 of \Cref{algorithm_cpg} in the tabular setting when $i=0$. We have
	\begin{flalign*}
		&J_0(\pi^*)-J_0(\pi_{w_t})\nonumber\\
		&\leq \frac{1}{\alpha}\sE_{s\sim\nu^*} (D_{\text{KL}}(\pi^*||\pi_{w_t})-D_{\text{KL}}(\pi^*||\pi_{w_{t+1}})) + \frac{2\alpha c^2_{\max}\lone{\mcs}\lone{\mca}}{(1-\gamma)^3} + \frac{3(1+\alpha c_{\max})}{(1-\gamma)^2}\ltwo{Q^0_{\pi_{w_t}}-\bar{Q}^0_t}.
	\end{flalign*}
\end{lemma}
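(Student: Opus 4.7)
The plan is to adapt the soft-max NPG analysis of Agarwal et al.\ (2019) to account for the policy-evaluation error $\bar Q^0_t - Q^0_{\pi_{w_t}}$ produced by running TD for only finitely many steps. First, the performance difference lemma (\Cref{lemma_s3}) gives $J_0(\pi^*) - J_0(\pi_{w_t}) = (1-\gamma)^{-1}\sE_{s\sim\nu^*}\sum_a \pi^*(a|s) A^0_{\pi_{w_t}}(s,a)$, where $\nu^*$ denotes the visitation distribution induced by $\pi^*$. Next, the closed-form NPG update in \Cref{lemma_s1}, $\pi_{w_{t+1}}(a|s) = \pi_{w_t}(a|s)\exp(\alpha \bar Q^0_t(s,a)/(1-\gamma))/Z_t(s)$, yields upon taking logarithms, multiplying by $\pi^*(a|s)$, and summing over $a$ the identity $\KL(\pi^*\|\pi_{w_t}) - \KL(\pi^*\|\pi_{w_{t+1}}) = \frac{\alpha}{1-\gamma}\sum_a \pi^*(a|s) \bar Q^0_t(s,a) - \log Z_t(s)$.

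I then substitute this identity into the performance-difference expression, split $\bar Q^0_t = Q^0_{\pi_{w_t}} + (\bar Q^0_t - Q^0_{\pi_{w_t}})$ so that $A^0_{\pi_{w_t}} = Q^0_{\pi_{w_t}} - V^0_{\pi_{w_t}}$ reassembles, and take expectation over $s\sim\nu^*$. This decomposes the suboptimality into three pieces: (i) the telescoping KL gap $(1/\alpha)\sE_{\nu^*}[\KL(\pi^*\|\pi_{w_t}) - \KL(\pi^*\|\pi_{w_{t+1}})]$; (ii) a log-normalizer residual $\sE_{\nu^*}[(1/\alpha)\log Z_t(s) - V^0_{\pi_{w_t}}(s)/(1-\gamma)]$; and (iii) a critic-error term $(1-\gamma)^{-1}\sE_{\nu^*}\sum_a \pi^*(a|s)(Q^0_{\pi_{w_t}} - \bar Q^0_t)$. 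Term (iii) will be handled by Cauchy--Schwarz together with $\nu^*(s)^2\pi^*(a|s)^2\le\nu^*(s)\pi^*(a|s)$, contributing $(1-\gamma)^{-1}\|Q^0_{\pi_{w_t}}-\bar Q^0_t\|_2$. Term (ii) will be bounded via a second-order Taylor expansion of $\exp$: writing $\log Z_t(s) - \alpha V^0_{\pi_{w_t}}(s)/(1-\gamma) = \log\sum_a\pi_{w_t}(a|s)\exp(\alpha(\bar Q^0_t - V^0_{\pi_{w_t}})/(1-\gamma))$ and applying $e^y\le 1+y+y^2$ (valid for small $\alpha$) followed by $\log(1+z)\le z$ produces a first-order piece that contributes another multiple of $\|\bar Q - Q\|_2$ and a second-order piece of the form $\alpha(1-\gamma)^{-2}\sum_a\pi_{w_t}(a|s)\bar Q^0_t(s,a)^2$. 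Finally, bounding $\sum_{s,a}\bar Q^0_t(s,a)^2\le 2|\mcs||\mca|c_{\max}^2/(1-\gamma)^2 + 2\|Q^0_{\pi_{w_t}}-\bar Q^0_t\|_2^2$ supplies the leading $2\alpha c_{\max}^2|\mcs||\mca|/(1-\gamma)^3$ optimization error, with residuals absorbed into the $3(1+\alpha c_{\max})(1-\gamma)^{-2}\|\bar Q - Q\|_2$ coefficient of the claim.

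The main obstacle lies in tightening step (ii) so that the dominant optimization error carries precisely the stated $\alpha c_{\max}^2|\mcs||\mca|/(1-\gamma)^3$ scaling and the approximation error appears in $\ell_2$ (rather than $\ell_\infty$) form. This requires carefully bookkeeping each cross-term produced by $e^y\le 1+y+y^2$: the first-order cross-term must be recombined with the direct critic-error contribution from (iii), and in the second-order term the loose step $\sum_a\pi_{w_t}(a|s)\bar Q^0_t(s,a)^2\le\sum_a\bar Q^0_t(s,a)^2$ must be taken before invoking $(a+b)^2\le 2a^2+2b^2$ so that the resulting quadratic $\sum_{s,a}\bar Q^0_t(s,a)^2$ splits into an $\ell_2$ critic-error contribution and the $|\mcs||\mca|c_{\max}^2/(1-\gamma)^2$ constant. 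Once this bookkeeping is complete, the three pieces assemble into exactly the bound stated in the lemma.
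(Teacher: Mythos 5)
Your skeleton---the performance difference lemma, the closed-form softmax NPG update from \Cref{lemma_s1}, and the split into a telescoping KL term, a log-normalizer residual, and a critic-error term---is exactly the paper's. The genuine divergence is in how the residual $\frac{1}{\alpha}\sE_{s\sim\nu^*}[\log Z_t(s)-\frac{\alpha}{1-\gamma}V^0_{\pi_{w_t}}(s)]$ is controlled. The paper does not Taylor-expand $\log Z_t$: it adds the correction $\frac{\alpha}{1-\gamma}\sE_{\pi_{w_t}}\lone{\bar Q^0_t-Q^0_{\pi_{w_t}}}$ to make the bracket nonnegative, invokes \Cref{lemma_s2} to convert it into the one-step improvement $\frac{1}{1-\gamma}(J^{\nu^*}_0(w_{t+1})-J^{\nu^*}_0(w_t))$ plus two further critic-error terms, and then bounds that improvement by the Lipschitz constant $2c_{\max}/(1-\gamma)$ of $w\mapsto J^{\nu^*}_0(w)$ (\Cref{lemma_s4}) times the step length, with $\ltwo{w_{t+1}-w_t}\propto\alpha\ltwo{\bar Q^0_t}\le\alpha\big(\ltwo{Q^0_{\pi_{w_t}}}+\ltwo{\bar Q^0_t-Q^0_{\pi_{w_t}}}\big)$. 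The bound $\ltwo{Q^0_{\pi_{w_t}}}\le\sqrt{\lone{\mcs}\lone{\mca}}\,c_{\max}/(1-\gamma)$ is precisely where the $\lone{\mcs}\lone{\mca}$ factor and the \emph{linear} critic-error term come from, with no restriction on $\alpha$.

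Your Taylor route, as described, does not deliver the stated constants. After dividing by $\alpha$, the second-order piece is $\frac{\alpha}{(1-\gamma)^2}\sE_{\nu^*}\sE_{\pi_{w_t}}[(\bar Q^0_t-V^0_{\pi_{w_t}})^2]$, and since $(\bar Q-V)^2=\Theta(c^2_{\max}/(1-\gamma)^2)$ pointwise, the leading optimization error comes out as $\Theta(\alpha c^2_{\max}/(1-\gamma)^4)$---one power of $1/(1-\gamma)$ worse than the claimed $(1-\gamma)^{-3}$; replacing the expectation over $a\sim\pi_{w_t}$ by a sum over all $(s,a)$ only stacks a spurious $\lone{\mcs}\lone{\mca}$ on top of the $(1-\gamma)^{-4}$, it does not trade one for the other. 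In addition, $e^y\le 1+y+y^2$ requires $\frac{\alpha}{1-\gamma}\linf{\bar Q^0_t-V^0_{\pi_{w_t}}}\lesssim 1$, a smallness condition on $\alpha$ absent from the lemma statement (it holds under the parameters of \Cref{thm1}, but the lemma is invoked unconditionally), and the $\ltwo{Q^0_{\pi_{w_t}}-\bar Q^0_t}^2$ residue must still be forced into the linear form. To reproduce the lemma as written you should route the log-normalizer term through \Cref{lemma_s2} and the Lipschitz/step-length argument rather than through a direct expansion of $\log Z_t$.
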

\begin{proof}
	By the performance difference lemma (\Cref{lemma_s3}), we have
	\begin{flalign}
		&J_i(\pi^*)-J_i(\pi_{w_t})\nonumber\\
		&=\frac{1}{1-\gamma}\sE_{s\sim\nu^*}\sum_{a\in\mca}\pi^*(a|s)A^i_{\pi_{w_t}}(s,a)\nonumber\\
		&=\frac{1}{1-\gamma}\sE_{s\sim\nu^*}\sum_{a\in\mca}\pi^*(a|s)Q^i_{\pi_{w_t}}(s,a) - \frac{1}{1-\gamma}\sE_{s\sim\nu^*}V^i_{\pi_{w_t}}(s)\nonumber\\
		&=\frac{1}{1-\gamma}\sE_{s\sim\nu^*}\sum_{a\in\mca}\pi^*(a|s)\bar{Q}^i_t(s,a) + \frac{1}{1-\gamma}\sE_{s\sim\nu^*}\sum_{a\in\mca}\pi^*(a|s)(Q^i_{\pi_{w_t}}(s,a)-\bar{Q}^i_t(s,a)) \nonumber\\
		&\quad - \frac{1}{1-\gamma}\sE_{s\sim\nu^*}V^i_{\pi_{w_t}}(s)\nonumber\\
		&\overset{(i)}{=}\frac{1}{1-\gamma}\sE_{s\sim\nu^*}\sum_{a\in\mca}\pi^*(a|s)\log\frac{\pi_{w_{t+1}}(a|s)Z_t(s)}{\pi_{w_t}(a|s)} + \frac{1}{1-\gamma}\sE_{s\sim\nu^*}\sum_{a\in\mca}\pi^*(a|s)(Q^i_{\pi_{w_t}}(s,a)-\bar{Q}^i_t(s,a)) \nonumber\\
		&\quad - \frac{1}{1-\gamma}\sE_{s\sim\nu^*}V^i_{\pi_{w_t}}(s)\nonumber\\
		&=\frac{1}{\alpha}\sE_{s\sim\nu^*} (D_{\text{KL}}(\pi^*||\pi_{w_t})-D_{\text{KL}}(\pi^*||\pi_{w_{t+1}})) + \frac{1}{\alpha}\sE_{s\sim\nu^*}\left(\log Z_t(s)-\frac{\alpha}{1-\gamma}V^i_{\pi_{w_t}}(s)\right)  \nonumber\\
		&\quad + \frac{1}{1-\gamma}\sE_{s\sim\nu^*}\sum_{a\in\mca}\pi^*(a|s)(Q^i_{\pi_{w_t}}(s,a)-\bar{Q}^i_t(s,a))\nonumber\\
		&\leq \frac{1}{\alpha}\sE_{s\sim\nu^*} (D_{\text{KL}}(\pi^*||\pi_{w_t})-D_{\text{KL}}(\pi^*||\pi_{w_{t+1}})) \nonumber\\
		&\quad + \frac{1}{\alpha}\sE_{s\sim\nu^*}\left(\log Z_t(s)-\frac{\alpha}{1-\gamma}V^i_{\pi_{w_t}}(s) +\frac{\alpha}{1-\gamma}\sum_{a\in\mca}\pi_{w_t}(a|s)\lone{\bar{Q}^i_t(s,a)-Q^i_{\pi_{w_t}}(s,a)} \right)  \nonumber\\
		&\quad + \frac{1}{1-\gamma}\sE_{s\sim\nu^*}\sum_{a\in\mca}\pi^*(a|s)(Q^i_{\pi_{w_t}}(s,a)-\bar{Q}^i_t(s,a))\nonumber\\
		&\overset{(ii)}{\leq} \frac{1}{\alpha}\sE_{s\sim\nu^*} (D_{\text{KL}}(\pi^*||\pi_{w_t})-D_{\text{KL}}(\pi^*||\pi_{w_{t+1}}))\nonumber\\ 
		&\quad + \frac{1}{1-\gamma}(J^{\nu^*}_i(w_{t+1})-J^{\nu^*}_i(w_{t})) + \frac{1}{(1-\gamma)^2}\sE_{s\sim\nu_{\nu^*}}\sum_{a\in\mca}\pi_{w_{t+1}}(a|s)\lone{Q^i_{\pi_{w_t}}(s,a)-\bar{Q}^i_t(s,a)}  \nonumber\\
		&\quad + \frac{1}{(1-\gamma)^2}\sE_{s\sim\nu_{\nu^*}}\sum_{a\in\mca}\pi_{w_t}(a|s)\lone{Q^i_{\pi_{w_t}}(s,a)-\bar{Q}^i_t(s,a)}\nonumber\\
		&\quad + \frac{1}{1-\gamma}\sE_{s\sim\nu^*}\sum_{a\in\mca}\pi^*(a|s)\lone{Q^i_{\pi_{w_t}}(s,a)-\bar{Q}^i_t(s,a)}\nonumber\\
		&\overset{(iii)}{\leq} \frac{1}{\alpha}\sE_{s\sim\nu^*} (D_{\text{KL}}(\pi^*||\pi_{w_t})-D_{\text{KL}}(\pi^*||\pi_{w_{t+1}})) + \frac{2c_{\max}}{(1-\gamma)^2}\ltwo{w_{t+1}-w_t} + \frac{3}{(1-\gamma)^2}\ltwo{Q^i_{\pi_{w_t}}-\bar{Q}^i_t}\nonumber\\
		&=\frac{1}{\alpha}\sE_{s\sim\nu^*} (D_{\text{KL}}(\pi^*||\pi_{w_t})-D_{\text{KL}}(\pi^*||\pi_{w_{t+1}})) + \frac{2\alpha c_{\max}}{(1-\gamma)^2}\ltwo{\bar{Q}^i_t} + \frac{3}{(1-\gamma)^2}\ltwo{Q^i_{\pi_{w_t}}-\bar{Q}^i_t}\nonumber\\
		&\leq \frac{1}{\alpha}\sE_{s\sim\nu^*} (D_{\text{KL}}(\pi^*||\pi_{w_t})-D_{\text{KL}}(\pi^*||\pi_{w_{t+1}})) + \frac{2\alpha c_{\max}}{(1-\gamma)^2}\ltwo{Q^i_{\pi_{w_t}}} + \frac{3(1+\alpha c_{\max})}{(1-\gamma)^2}\ltwo{Q^i_{\pi_{w_t}}-\bar{Q}^i_t}\nonumber\\
		&\leq \frac{1}{\alpha}\sE_{s\sim\nu^*} (D_{\text{KL}}(\pi^*||\pi_{w_t})-D_{\text{KL}}(\pi^*||\pi_{w_{t+1}})) + \frac{2\alpha c^2_{\max}\lone{\mcs}\lone{\mca}}{(1-\gamma)^3} + \frac{3(1+\alpha c_{\max})}{(1-\gamma)^2}\ltwo{Q^i_{\pi_{w_t}}-\bar{Q}^i_t},\nonumber
	\end{flalign}
where $(i)$ follows from \Cref{lemma_s1}, $(ii)$ follows from \Cref{lemma_s2} and $(iii)$ follows from the Lipschitz property of $J^{\nu^*}_i(w)$ such that $J^{\nu^*}_i(w_{t+1})-J^{\nu^*}_i(w_{t})\leq \frac{2c_{\max}}{1-\gamma}\ltwo{w_{t+1}-w_t}$, which is proved by Proposition 1 in \cite{xu2020improving}.
\end{proof}
Note that if we follow the update in line 10 of \Cref{algorithm_cpg}, we can obtain the following result for the case $i\in\{1,\cdots,p\}$ as stated in \Cref{lemma_s5}:
\begin{flalign*}
	& J_i(\pi_{w_t})-J_i(\pi^*)\nonumber\\
	&\leq \frac{1}{\alpha}\sE_{s\sim\nu^*} (D_{\text{KL}}(\pi^*||\pi_{w_t})-D_{\text{KL}}(\pi^*||\pi_{w_{t+1}})) + \frac{2\alpha c^2_{\max}\lone{\mcs}\lone{\mca}}{(1-\gamma)^3} + \frac{3(1+\alpha c_{\max})}{(1-\gamma)^2}\ltwo{Q^i_{\pi_{w_t}}-\bar{Q}^i_t}.
\end{flalign*}

\begin{lemma}\label{lemma_m2}
	Considering CRPO in \Cref{algorithm_cpg} in the tabular setting. Let $K_{\text{in}}=\Theta(T^{1/\sigma}\log^{2/\sigma}(\lone{\mcs}^2\lone{\mca}^2T^{1+2/\sigma}/\delta))$. Define $\gN_i$ as the set of steps that CRPO algorithm chooses to minimize the $i$-th constraint. With probability at least $1-\delta$, we have
	\begin{flalign*}
		&\sum_{t\in\gN_0}(J_0(\pi^*)-J_0(\pi_{w_t})) + \alpha\eta\sum_{i=1}^{p}\lone{\gN_i} \nonumber\\
		&\leq \sE_{s\sim\nu^*} D_{\text{KL}}(\pi^*||\pi_{w_0}) + \frac{2\alpha^2 c^2_{\max}\lone{\mcs}\lone{\mca}T}{(1-\gamma)^3} + \frac{\alpha\sqrt{T}(2+(1-\gamma)^2+2\alpha c_{\max})}{(1-\gamma)^2}.
	\end{flalign*}
\end{lemma}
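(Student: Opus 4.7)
The plan is to carry out the entire analysis on a single high-probability event on which every call to TD learning returns a uniformly accurate value-function estimate, so that the adaptive branching between $\gN_0$ and the $\gN_i$'s can be treated as if it matched the true (unobservable) constraint values. Applying \Cref{lemma: tabularTD} to each of the $T(p+1)$ invocations of TD learning at confidence $\delta/((p+1)T)$ and taking a union bound gives, with probability at least $1-\delta$, an event $E$ on which $\ltwo{\bar{Q}^i_t - Q^i_{\pi_{w_t}}} \leq \epsilon$ holds simultaneously for every $t \in \{0,\ldots,T-1\}$ and every $i \in \{0,\ldots,p\}$, with $\epsilon = \tilde{\mathcal O}(1/((1-\gamma)\sqrt{T}))$ under the prescribed $K_{\text{in}}$. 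Since $\gB_t = \mcs\times\mca$ and $\rho_{j,t}=\xi(s_j)\pi_{w_t}(a_j|s_j)$, the constraint-estimation discussion of Section 4.1 also yields $|\bar{J}_{i,\gB_t} - J_i(w_t)| \leq \epsilon$ on $E$. All subsequent arguments are carried out on $E$.

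Next I would write a per-iteration inequality keyed to the branch taken. For $t \in \gN_0$, \Cref{lemma_s5} (with initial distribution $\nu^*$) gives
$$J_0(\pi^*) - J_0(\pi_{w_t}) \leq \tfrac{1}{\alpha}(\Phi_t - \Phi_{t+1}) + \tfrac{2\alpha c^2_{\max}\lone{\mcs}\lone{\mca}}{(1-\gamma)^3} + \tfrac{3(1+\alpha c_{\max})}{(1-\gamma)^2}\epsilon,$$
where $\Phi_t := \sE_{s\sim\nu^*}[D_{\text{KL}}(\pi^*\|\pi_{w_t})]$. For $t \in \gN_i$ with $i \geq 1$, CRPO only enters this branch when $\bar{J}_{i,\gB_t} > d_i + \eta$, which on $E$ implies $J_i(\pi_{w_t}) > d_i + \eta - \epsilon \geq J_i(\pi^*) + \eta - \epsilon$ by feasibility of $\pi^*$. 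Pairing this lower bound with the constraint-side analog of \Cref{lemma_s5} (obtained by swapping $J_0,Q^0$ with $J_i,Q^i$ and using the descent step of line 10, noted after the proof of \Cref{lemma_s5}) upper-bounds $J_i(\pi_{w_t}) - J_i(\pi^*)$ by the same right-hand side and produces
$$\eta - \epsilon \leq \tfrac{1}{\alpha}(\Phi_t - \Phi_{t+1}) + \tfrac{2\alpha c^2_{\max}\lone{\mcs}\lone{\mca}}{(1-\gamma)^3} + \tfrac{3(1+\alpha c_{\max})}{(1-\gamma)^2}\epsilon.$$
Summing the $\gN_0$-bound over $t \in \gN_0$ and the $\gN_i$-bound over $t \in \cup_{i\geq 1}\gN_i$, the $\Phi$ increments telescope into $(1/\alpha)(\Phi_0 - \Phi_T) \leq (1/\alpha)\sE_{s\sim\nu^*}[D_{\text{KL}}(\pi^*\|\pi_{w_0})]$, while each constant per-step error appears at most $T$ times. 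Multiplying the combined inequality by $\alpha$ matches the first RHS term exactly; the $T$ copies of $\tfrac{2\alpha c^2_{\max}\lone{\mcs}\lone{\mca}}{(1-\gamma)^3}$ scaled by $\alpha$ produce the second; and the contribution $\alpha T \cdot \tfrac{3(1+\alpha c_{\max})}{(1-\gamma)^2}\epsilon$ together with $\alpha\epsilon\cdot |\cup_{i\geq 1}\gN_i|$, after substituting $\epsilon = \tilde{\mathcal O}(1/((1-\gamma)\sqrt T))$ and using $|\cup_{i\geq 1}\gN_i|\leq T$, produce the third $\alpha\sqrt T(\ldots)/(1-\gamma)^2$ term.

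The main obstacle is that the branch label $i_t$ is itself selected based on the noisy TD outputs, so the ``correct'' per-step inequality to apply at iteration $t$ depends on random quantities: outside $E$, a $\gN_i$-label does not certify that $J_i(\pi_{w_t})$ is actually close to violating, and a $\gN_0$-label does not certify feasibility. Working on $E$ is what resolves this -- it uniformly enforces $|\bar{J}_{i,\gB_t} - J_i(w_t)| \leq \epsilon$, which decouples the adaptive randomness from the telescoping and makes the two per-step inequalities simultaneously valid irrespective of which branches were historically taken. The accounting subtlety is entirely in the union bound: one must allocate confidence $\delta/((p+1)T)$ per TD call and verify that the prescribed $K_{\text{in}}$ still absorbs the extra $\log((p+1)T/\delta)$ factor without spoiling the $\tilde{\mathcal O}(1/((1-\gamma)\sqrt T))$ rate on $\epsilon$, which the $\log^{2/\sigma}$ inside $K_{\text{in}}$ is sized to do.
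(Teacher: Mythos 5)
Your proposal is correct and follows essentially the same route as the paper: a union-bounded high-probability event making all TD estimates uniformly accurate, the per-iteration bound from \Cref{lemma_s5} (and its line-10 analog) on each branch, the trigger condition $\bar{J}_{i,\gB_t}>d_i+\eta$ converted into the lower bound $J_i(\pi_{w_t})-J_i(\pi^*)\geq \eta-\epsilon$, and a telescoping sum of the KL potentials. The only cosmetic difference is that you allocate confidence $\delta/((p+1)T)$ over all $(p+1)T$ TD calls, whereas the paper union-bounds only over the $T$ estimates actually used in the sum $\sum_{i=0}^{p}\sum_{t\in\gN_i}\ltwo{Q^i_{\pi_{w_t}}-\bar{Q}^i_t}$; both are absorbed by the $\log^{2/\sigma}(\cdot)$ factor in $K_{\text{in}}$.
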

\begin{proof}
	If $t\in \gN_0$, by \Cref{lemma_s5} we have
	\begin{flalign}
		&\alpha(J_0(\pi^*)-J_0(\pi_{w_t}))\nonumber\\
		&\leq \sE_{s\sim\nu^*} (D_{\text{KL}}(\pi^*||\pi_{w_t})-D_{\text{KL}}(\pi^*||\pi_{w_{t+1}})) + \frac{2\alpha^2 c^2_{\max}\lone{\mcs}\lone{\mca}}{(1-\gamma)^3} + \frac{3\alpha(1+\alpha c_{\max})}{(1-\gamma)^2}\ltwo{Q^0_{\pi_{w_t}}-\bar{Q}^0_t}.\label{eq: m1}
	\end{flalign}
	If $t\in \gN_i$, similarly we can obtain
	\begin{flalign}
		&\alpha(J_i(\pi_{w_t})-J_i(\pi^*))\nonumber\\
		&\leq \sE_{s\sim\nu^*} (D_{\text{KL}}(\pi^*||\pi_{w_t})-D_{\text{KL}}(\pi^*||\pi_{w_{t+1}})) + \frac{2\alpha^2 c^2_{\max}\lone{\mcs}\lone{\mca}}{(1-\gamma)^3} + \frac{3\alpha(1+\alpha c_{\max})}{(1-\gamma)^2}\ltwo{Q^i_{\pi_{w_t}}-\bar{Q}^i_t}.\label{eq: m2}
	\end{flalign}
	Taking the summation of \cref{eq: m1} and \cref{eq: m2} from $t=0$ to $T-1$ yields
	\begin{flalign}
		&\alpha\sum_{t\in\gN_0}(J_0(\pi^*)-J_0(\pi_{w_t})) + \alpha\sum_{i=1}^{p}\sum_{t\in\gN_i}(J_i(\pi_{w_t})-J_i(\pi^*))\nonumber\\
		&\leq \sE_{s\sim\nu^*} D_{\text{KL}}(\pi^*||\pi_{w_0}) + \frac{2\alpha^2 c^2_{\max}\lone{\mcs}\lone{\mca}T}{(1-\gamma)^3} + \frac{3\alpha(1+\alpha c_{\max})}{(1-\gamma)^2} \sum_{i=0}^{p}\sum_{t\in\gN_i}\ltwo{Q^i_{\pi_{w_t}}-\bar{Q}^i_t}.\label{eq: m3}
	\end{flalign}
	Note that when $t\in \gN_i$ ($i\neq 0$), we have $\bar{J}_{i}(\theta^{i}_t)> d_{i}+\eta$ (line 9 in \Cref{algorithm_cpg}), which implies that
	\begin{flalign}
		J_i(\pi_{w_t})-J_i(\pi^*)&\geq \bar{J}_{i}(\theta^{i}_t)-J_i(\pi^*) - \lone{\bar{J}_{i}(\theta^{i}_t) - J_i(\pi_{w_t})}\nonumber\\
		&\geq d_{i}+\eta-J_i(\pi^*) - \lone{\bar{J}_{i}(\theta^{i}_t) - J_i(\pi_{w_t})}\nonumber\\
		&\geq \eta - \ltwo{Q^i_{\pi_w}-\bar{Q}^i_t}.\label{eq: m4}
	\end{flalign}
	Substituting \cref{eq: m4} into \cref{eq: m3} yields
	\begin{flalign*}
		&\alpha\sum_{t\in\gN_0}(J_0(\pi^*)-J_0(\pi_{w_t})) + \alpha\eta\sum_{i=1}^{p}\lone{\gN_i} - \alpha\sum_{i=1}^{p}\sum_{t\in\gN_i}\ltwo{Q^i_{\pi_{w_t}}-\bar{Q}^i_t} \nonumber\\
		&\leq \sE_{s\sim\nu^*} D_{\text{KL}}(\pi^*||\pi_{w_0}) + \frac{2\alpha^2 c^2_{\max}\lone{\mcs}\lone{\mca}T}{(1-\gamma)^3} + \frac{3\alpha(1+\alpha c_{\max})}{(1-\gamma)^2} \sum_{i=0}^{p}\sum_{t\in\gN_i}\ltwo{Q^i_{\pi_{w_t}}-\bar{Q}^i_t},
	\end{flalign*}
	which implies
	\begin{flalign}
		&\alpha\sum_{t\in\gN_0}(J_0(\pi^*)-J_0(\pi_{w_t})) + \alpha\eta\sum_{i=1}^{p}\lone{\gN_i} \nonumber\\
		&\leq \sE_{s\sim\nu^*} D_{\text{KL}}(\pi^*||\pi_{w_0}) + \frac{2\alpha^2 c^2_{\max}\lone{\mcs}\lone{\mca}T}{(1-\gamma)^3} + \frac{\alpha(2+(1-\gamma)^2+3\alpha c_{\max})}{(1-\gamma)^2} \sum_{i=0}^{p}\sum_{t\in\gN_i}\ltwo{Q^i_{\pi_{w_t}}-\bar{Q}^i_t}.
	\end{flalign}
    By \Cref{lemma: tabularTD}, we have with probability at least $1-\delta$, the following holds
    \begin{flalign*}
    	\ltwo{Q^i_{\pi_{w_t}}-\bar{Q}^i_t}=\mathcal{O}\left(\frac{\log(\lone{\mcs}^2\lone{\mca}^2K_{\text{in}}^2/\delta)}{(1-\gamma)K_{\text{in}}^{\sigma/2}}\right).
    \end{flalign*}
    Thus, if we let
    \begin{flalign*}
    	K_{\text{in}}=\Theta\left( \left(\frac{T}{(1-\gamma)^2\lone{\mcs}\lone{\mca}}\right)^{\frac{1}{\sigma}} \log^{\frac{2}{\sigma}}\left( \frac{T^{\frac{2}{\sigma}+1}}{\delta(1-\gamma)^{\frac{2}{\sigma}}\lone{\mcs}^{\frac{2}{\sigma}-2}\lone{\mca}^{\frac{2}{\sigma}-2}} \right) \right),
    \end{flalign*}
    then with probability at least $1-\delta/T$, we have
    \begin{flalign}
    	\ltwo{Q^i_{\pi_{w_t}}-\bar{Q}^i_t}\leq\frac{\sqrt{(1-\gamma)\lone{\mcs}\lone{\mca}}}{\sqrt{T}}.\label{eq: m5}
    \end{flalign}
    Applying the union bound to \cref{eq: m5} from $t=0$ to $T-1$, we have with probability at least $1-\delta$ the following holds
    \begin{flalign}
    	\sum_{i=0}^{p}\sum_{t\in\gN_i}\ltwo{Q^i_{\pi_{w_t}}-\bar{Q}^i_t}\leq \sqrt{(1-\gamma)\lone{\mcs}\lone{\mca}T},\label{eq: even1}
    \end{flalign}
    which further implies that, with probability at least $1-\delta$, we have
    \begin{flalign*}
    	&\alpha\sum_{t\in\gN_0}(J_0(\pi^*)-J_0(\pi_{w_t})) + \alpha\eta\sum_{i=1}^{p}\lone{\gN_i} \nonumber\\
    	&\leq \sE_{s\sim\nu^*} D_{\text{KL}}(\pi^*||\pi_{w_0}) + \frac{2\alpha^2 c^2_{\max}\lone{\mcs}\lone{\mca}T}{(1-\gamma)^3} + \frac{\alpha\sqrt{\lone{\mcs}\lone{\mca}T}(2+(1-\gamma)^2+3\alpha c_{\max})}{(1-\gamma)^{1.5}},
    \end{flalign*}
    which completes the proof.
\end{proof}

\begin{lemma}\label{lemma_m1}
	If
	\begin{flalign}
		\frac{1}{2}\alpha\eta T\geq \sE_{s\sim\nu^*} D_{\text{KL}}(\pi^*||\pi_{w_0}) + \frac{2\alpha^2 c^2_{\max}\lone{\mcs}\lone{\mca}T}{(1-\gamma)^3} + \frac{\alpha\sqrt{\lone{\mcs}\lone{\mca}T}(2+(1-\gamma)^2+3\alpha c_{\max})}{(1-\gamma)^{1.5}},\label{eq: m6}
	\end{flalign}
    then with probability at least $1-\delta$, we have the following holds
    \begin{enumerate}
    	\item $\gN_0\neq \emptyset$, i.e., $w_{\text{out}}$ is well-defined,
    	\item One of the following two statements must hold,
    	\begin{enumerate}
    		\item $\lone{\gN_0}\geq T/2$,
    		\item $\sum_{t\in\gG}(J_0(\pi^*)-J_0(w_t))\leq 0$.
    	\end{enumerate}
    \end{enumerate}
\end{lemma}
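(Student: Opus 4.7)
\textbf{Proof Proposal for Lemma~\ref{lemma_m1}.}

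The plan is to derive both claims directly from Lemma~\ref{lemma_m2} together with the deterministic partition identity $\lone{\gN_0}+\sum_{i=1}^{p}\lone{\gN_i}=T$, which holds because the if/else branch in CRPO assigns each iteration $t\in\{0,\dots,T-1\}$ to exactly one of the sets $\gN_0,\gN_1,\dots,\gN_p$. All work happens on the $1-\delta$ event of Lemma~\ref{lemma_m2}, so the final probability statement is inherited for free. Let $A$ denote the right-hand side of the bound in Lemma~\ref{lemma_m2}; the standing hypothesis \cref{eq: m6} reads $A\leq \tfrac{1}{2}\alpha\eta T$.

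For nonemptiness of $\gN_0$, I will argue by contradiction. If $\gN_0=\emptyset$, then the first sum in Lemma~\ref{lemma_m2} is vacuously zero and $\sum_{i=1}^{p}\lone{\gN_i}=T$, so Lemma~\ref{lemma_m2} forces $\alpha\eta T\leq A\leq \tfrac{1}{2}\alpha\eta T$, which contradicts positivity of $\alpha,\eta,T$. Hence $\gN_0\neq\emptyset$, and so $w_{\text{out}}$, sampled uniformly from $\gN_0$, is well-defined.

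For the dichotomy, I will split on whether $\lone{\gN_0}\geq T/2$. If yes, statement (a) holds immediately. If no, the partition identity gives $\sum_{i=1}^{p}\lone{\gN_i}>T/2$, hence $\alpha\eta\sum_{i=1}^{p}\lone{\gN_i}>\tfrac{1}{2}\alpha\eta T\geq A$. Rearranging Lemma~\ref{lemma_m2} then yields
\[
\sum_{t\in\gN_0}\bigl(J_0(\pi^*)-J_0(w_t)\bigr) \;\leq\; A \;-\; \alpha\eta\sum_{i=1}^{p}\lone{\gN_i} \;<\; 0,
\]
which is statement (b).

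There is no substantive technical obstacle here: the concentration analysis and the policy-gradient potential-function arguments are already packaged inside Lemma~\ref{lemma_m2}. The only conceptual ingredient beyond that lemma is the observation that every iteration spent minimizing a violated constraint contributes at least $\alpha\eta$ to the left-hand side of the bound, so if too many such iterations occur, the remaining budget forces the objective-gap sum over $\gN_0$ to be strictly negative; and if too few occur, then $\gN_0$ is large enough on its own. Both outcomes are exactly what is needed for the subsequent convergence rate to be extracted in the proof of \Cref{thm1}.
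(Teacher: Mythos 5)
Your proposal is correct and follows essentially the same route as the paper: both arguments work on the high-probability event of \Cref{lemma_m2}, use the partition identity $\lone{\gN_0}+\sum_{i=1}^p\lone{\gN_i}=T$ to get the contradiction $\alpha\eta T\leq \tfrac12\alpha\eta T$ when $\gN_0=\emptyset$, and obtain the dichotomy from the same rearrangement of \Cref{lemma_m2} (you case-split on $\lone{\gN_0}\geq T/2$ while the paper case-splits on the sign of $\sum_{t\in\gN_0}(J_0(\pi^*)-J_0(w_t))$, which is just the contrapositive organization of the identical inequality).
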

\begin{proof}
	We prove \Cref{lemma_m1} in the event that \cref{eq: even1} holds, which happens with probability at least $1-\delta$.
    Under such an event, the following inequality holds, which is also the result of \Cref{lemma_m2}.
	\begin{flalign}
		&\alpha\sum_{t\in\gN_0}(J_0(\pi^*)-J_0(\pi_{w_t})) + \alpha\eta\sum_{i=1}^{p}\lone{\gN_i} \nonumber\\
		&\leq \sE_{s\sim\nu^*} D_{\text{KL}}(\pi^*||\pi_{w_0}) + \frac{2\alpha^2 c^2_{\max}\lone{\mcs}\lone{\mca}T}{(1-\gamma)^3} + \frac{\alpha\sqrt{\lone{\mcs}\lone{\mca}T}(2+(1-\gamma)^2+2\alpha c_{\max})}{(1-\gamma)^{1.5}}.\label{eq: 8}
	\end{flalign}
	We first verify item 1. If $\gN_0= \emptyset$, then $\sum_{i=1}^{p}\lone{\gN_i} = T$, and \cref{eq: 8} implies that
	\begin{flalign*}
		\alpha\eta T\leq \sE_{s\sim\nu^*} D_{\text{KL}}(\pi^*||\pi_{w_0}) + \frac{2\alpha^2 c^2_{\max}\lone{\mcs}\lone{\mca}T}{(1-\gamma)^3} + \frac{\alpha\sqrt{\lone{\mcs}\lone{\mca}T}(2+(1-\gamma)^2+2\alpha c_{\max})}{(1-\gamma)^{1.5}},
	\end{flalign*}
    which contradicts \cref{eq: m6}. Thus, we must have $\gN_0\neq \emptyset$.
    
    We then proceed to verify item 2. If $\sum_{t\in\gN_0}(J_0(\pi^*)-J_0(w_t))\leq 0$, then (b) in item 2 holds. If $\sum_{t\in\gN_0}(J_0(\pi^*)-J_0(w_t))\leq 0$, then \cref{eq: 8} implies that
    \begin{flalign*}
    	\alpha\eta\sum_{i=1}^{p}\lone{\gN_i}\leq \sE_{s\sim\nu^*} D_{\text{KL}}(\pi^*||\pi_{w_0}) + \frac{2\alpha^2 c^2_{\max}\lone{\mcs}\lone{\mca}T}{(1-\gamma)^3} + \frac{\alpha\sqrt{\lone{\mcs}\lone{\mca}T}(2+(1-\gamma)^2+3\alpha c_{\max})}{(1-\gamma)^{1.5}}.
    \end{flalign*}
    Suppose that $\lone{\gN_0}<T/2$, i.e., $\sum_{i=1}^{p}\lone{\gN_i}\geq T/2$. Then,
    \begin{flalign*}
    	\frac{1}{2}\alpha\eta T&\leq \alpha\eta\sum_{i=1}^{p}\lone{\gN_i}\nonumber\\
    	&\leq \sE_{s\sim\nu^*} D_{\text{KL}}(\pi^*||\pi_{w_0}) + \frac{2\alpha^2 c^2_{\max}\lone{\mcs}\lone{\mca}T}{(1-\gamma)^3} + \frac{\alpha\sqrt{\lone{\mcs}\lone{\mca}T}(2+(1-\gamma)^2+3\alpha c_{\max})}{(1-\gamma)^{1.5}},
    \end{flalign*}
    which contradicts \cref{eq: m6}. Hence, (a) in item 2 holds.
\end{proof}

\subsection{Proof of \Cref{thm1}}
We restate \Cref{thm1} as follows to include the specifics of the parameters.
\begin{theorem}[Restatement of \Cref{thm1}]\label{thm3}
	Consider \Cref{algorithm_cpg} in the tabular setting. Let $\alpha = (1-\gamma)^{1.5}/\sqrt{\lone{\mcs}\lone{\mca}T}$, $\eta=\frac{2\sqrt{\lone{\mcs}\lone{\mca}}}{(1-\gamma)^{1.5}\sqrt{T}}(3 + \sE_{s\sim\nu^*} D_{\text{KL}}(\pi^*||\pi_{w_0})+3c_{\max}+c^2_{\max})$, and
	\begin{flalign*}
		K_{\text{in}}=\Theta\left( \left(\frac{T}{(1-\gamma)\lone{\mcs}\lone{\mca}}\right)^{\frac{1}{\sigma}} \log^{\frac{2}{\sigma}}\left( \frac{T^{\frac{2}{\sigma}+1}}{\delta(1-\gamma)^{\frac{2}{\sigma}}\lone{\mcs}^{\frac{2}{\sigma}-2}\lone{\mca}^{\frac{2}{\sigma}-2}} \right) \right).
	\end{flalign*}
	 Suppose the same setting for policy evaluation in \Cref{lemma: tabularTD} hold. Then, with probability at least $1-\delta$, we have
	\begin{flalign*}
		J_0(\pi^*)-\sE[J_0(w_{\text{out}})]=\frac{2\sqrt{\lone{\mcs}\lone{\mca}}}{(1-\gamma)^{1.5}\sqrt{T}}\left(\sE_{s\sim\nu^*} D_{\text{KL}}(\pi^*||\pi_{w_0}) + 3 + 2 c^2_{\max} + 3c_{\max}\right),
	\end{flalign*}
	and for all $i\in\{1,\cdots,p\}$, we have
	\begin{flalign*}
		\sE[J_i(\pi_{w_{\text{out}}})]-d_i\leq \frac{2\sqrt{\lone{\mcs}\lone{\mca}}}{(1-\gamma)^{1.5}\sqrt{T}}(3 + \sE_{s\sim\nu^*} D_{\text{KL}}(\pi^*||\pi_{w_0})+3c_{\max}+c^2_{\max}) + \frac{2\sqrt{(1-\gamma)\lone{\mcs}\lone{\mca}}}{\sqrt{T}}.
	\end{flalign*}
	
\end{theorem}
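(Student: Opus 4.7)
My plan is to derive Theorem~\ref{thm3} as a direct consequence of Lemma~\ref{lemma_s5} and the dichotomy established in Lemma~\ref{lemma_m1}, both of which themselves hinge on the high-probability TD error bound from Lemma~\ref{lemma: tabularTD}. The argument proceeds in four forward-looking steps: verify the threshold condition, invoke the dichotomy, bound the optimality gap in each branch, and bound the constraint violation using the defining property of $\gN_0$.

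First I would verify that the specified choices $\alpha = (1-\gamma)^{1.5}/\sqrt{\lone{\mcs}\lone{\mca}T}$ and $\eta = \Theta(\sqrt{\lone{\mcs}\lone{\mca}}/((1-\gamma)^{1.5}\sqrt{T}))$ satisfy hypothesis \cref{eq: m6} of Lemma~\ref{lemma_m1}. Plugging these values in, the left-hand side $\tfrac{1}{2}\alpha\eta T$ scales like a constant, while on the right-hand side I would verify term by term that $2\alpha^2 c_{\max}^2 \lone{\mcs}\lone{\mca} T/(1-\gamma)^3 = \Theta(1)$ and $\alpha\sqrt{\lone{\mcs}\lone{\mca} T}(2 + (1-\gamma)^2 + 3\alpha c_{\max})/(1-\gamma)^{1.5} = \Theta(1)$, with the $\sE_{s\sim\nu^*} D_{\text{KL}}(\pi^* \| \pi_{w_0})$ term absorbed by the explicit constant in $\eta$. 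With the threshold condition met, Lemma~\ref{lemma_m1} guarantees, on the high-probability event $\gE$ behind \cref{eq: m5}, that $\gN_0 \neq \emptyset$ (so $w_{\text{out}}$ is well-defined) and that either $\lone{\gN_0} \geq T/2$ or $\sum_{t \in \gN_0}(J_0(\pi^*) - J_0(w_t)) \leq 0$.

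Next I would bound the expected optimality gap in both branches. In the second branch, uniform sampling of $w_{\text{out}}$ from $\gN_0$ immediately gives $J_0(\pi^*) - \sE[J_0(w_{\text{out}})] \leq 0$, which is below the claimed rate. In the first branch, I would return to Lemma~\ref{lemma_m2}, drop the non-negative term $\alpha\eta \sum_{i=1}^{p} \lone{\gN_i}$, and divide both sides by $\alpha\lone{\gN_0} \geq \alpha T/2$; substituting the chosen $\alpha$ then yields the claimed $\Theta(\sqrt{\lone{\mcs}\lone{\mca}}/((1-\gamma)^{1.5}\sqrt{T}))$ rate after a one-line asymptotic check.

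Finally, for the constraint violation I would exploit that $w_{\text{out}} \in \gN_0$, so by line 5 of Algorithm~\ref{algorithm_cpg} we have $\bar{J}_{i,\gB_t} \leq d_i + \eta$ for every $i \in \{1,\ldots,p\}$ and every $t \in \gN_0$. The tabular constraint-estimation identity in the paper gives $|\bar{J}_i(\theta^i_t) - J_i(w_t)| \leq \|\bar{Q}^i_t - Q^i_{\pi_{w_t}}\|$, and on the same event $\gE$ each such error is at most $\sqrt{(1-\gamma)\lone{\mcs}\lone{\mca}}/\sqrt{T}$ by \cref{eq: m5}. Combining $J_i(w_t) \leq d_i + \eta + \|\bar{Q}^i_t - Q^i_{\pi_{w_t}}\|$ and taking expectation over the uniform draw of $w_{\text{out}}$ from $\gN_0$ delivers the stated $\Theta(\sqrt{\lone{\mcs}\lone{\mca}}/((1-\gamma)^{1.5}\sqrt{T}))$ violation bound.

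The main obstacle I expect is carefully tracking the high-probability event across all iterations and all $p+1$ value functions: Lemma~\ref{lemma: tabularTD} gives a per-iteration guarantee, so the chosen $K_{\text{in}}$ must be large enough that a union bound over $t \in \{0,\ldots,T-1\}$ and over $i \in \{0,\ldots,p\}$ still leaves total failure probability at most $\delta$, which is exactly how $K_{\text{in}} = \Theta(T^{1/\sigma}(1-\gamma)^{-2/\sigma}\log^{2/\sigma}(T^{1+2/\sigma}/\delta))$ was calibrated. A secondary subtlety is that the dichotomy in Lemma~\ref{lemma_m1} is deterministic on $\gE$, so the two rate estimates must be stitched together conditional on $\gE$ and then lifted to the unconditional statement by absorbing the $\delta$ probability of $\gE^c$ into the overall $1-\delta$ bound.
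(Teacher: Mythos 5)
Your proposal is correct and follows essentially the same route as the paper: it rests on the same high-probability TD event behind \cref{eq: m5}--\cref{eq: even1}, the same accumulated bound from \Cref{lemma_m2}, the same dichotomy from \Cref{lemma_m1} for the optimality gap, and the same ``$\eta$ plus estimation error'' argument for the constraint violation. The only (harmless) deviation is that you bound the constraint estimation error per iteration rather than summing over all $t$ and dividing by $\lone{\gN_0}\geq T/2$, which if anything avoids needing the $\lone{\gN_0}\geq T/2$ branch for that part.
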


	To prove \Cref{thm1} (or \Cref{thm3}), we still consider the following event given in \cref{eq: even1} that happens with probability at least $1-\delta$:
	\begin{flalign*}
		\sum_{i=0}^{p}\sum_{t\in\gN_i}\ltwo{Q^i_{\pi_{w_t}}-\bar{Q}^i_t}\leq \sqrt{(1-\gamma)\lone{\mcs}\lone{\mca}T},
	\end{flalign*}
    which implies
	\begin{flalign*}
		&\alpha\sum_{t\in\gN_0}(J_0(\pi^*)-J_0(\pi_{w_t})) + \alpha\eta\sum_{i=1}^{p}\lone{\gN_i} \nonumber\\
		&\leq \sE_{s\sim\nu^*} D_{\text{KL}}(\pi^*||\pi_{w_0}) + \frac{2\alpha^2 c^2_{\max}\lone{\mcs}\lone{\mca}T}{(1-\gamma)^3} + \frac{\alpha\sqrt{\lone{\mcs}\lone{\mca}T}(2+(1-\gamma)^2+3\alpha c_{\max})}{(1-\gamma)^{1.5}}.
	\end{flalign*}
    We first consider the convergence rate of the objective function. Under the above event, the following holds
    \begin{flalign*}
    	&\alpha\sum_{t\in\gN_0}(J_0(\pi^*)-J_0(\pi_{w_t})) \nonumber\\
    	&\leq \sE_{s\sim\nu^*} D_{\text{KL}}(\pi^*||\pi_{w_0}) + \frac{2\alpha^2 c^2_{\max}\lone{\mcs}\lone{\mca}T}{(1-\gamma)^3} + \frac{\alpha\sqrt{\lone{\mcs}\lone{\mca}T}(2+(1-\gamma)^2+3\alpha c_{\max})}{(1-\gamma)^{1.5}}.
    \end{flalign*}
    If $\sum_{t\in\gN_0}(J_0(\pi^*)-J_0(\pi_{w_t}))\leq 0$, then we have $J_0(\pi^*)-J_0(\pi_{w_{\text{out}}})\leq 0$. If $\sum_{t\in\gN_0}(J_0(\pi^*)-J_0(\pi_{w_t}))\geq 0$, we have $\lone{\gN_0}\geq T/2$, which implies the following convergence rate
    \begin{flalign*}
    	&J_0(\pi^*)-\sE[J_0(\pi_{w_{\text{out}}})]\nonumber\\
    	&=\frac{1}{\lone{\gN_0}}\sum_{t\in\gN_0}(J_0(\pi^*)-J_0(\pi_{w_t}))\nonumber\\
    	&\leq \frac{2}{\alpha T}\sE_{s\sim\nu^*} D_{\text{KL}}(\pi^*||\pi_{w_0}) + \frac{4\alpha c^2_{\max}\lone{\mcs}\lone{\mca}}{(1-\gamma)^3} + \frac{2\sqrt{\lone{\mcs}\lone{\mca}}(2+(1-\gamma)^2+3\alpha c_{\max})}{(1-\gamma)^{1.5}\sqrt{T}}\nonumber\\
    	&\leq \frac{\sqrt{\lone{\mcs}\lone{\mca}}}{(1-\gamma)^{1.5}\sqrt{T}}\left(2\sE_{s\sim\nu^*} D_{\text{KL}}(\pi^*||\pi_{w_0}) + 6 + 4 c^2_{\max} + 6c_{\max}\right).
    \end{flalign*}
    We then proceed to bound the constrains violation. For any $i\in\{1,\cdots,p\}$, we have
    \begin{flalign*}
    	\sE[J_i(\pi_{w_{\text{out}}})]-d_i&=\frac{1}{\lone{\gN_0}}\sum_{t\in\gN_0}J_i(\pi_{w_t}) - d_i\nonumber\\
    	&\leq \frac{1}{\lone{\gN_0}}\sum_{t\in\gN_0}(\bar{J}_i(\theta^i_t) - d_i) + \frac{1}{\lone{\gN_0}}\sum_{t\in\gN_0} \lone{J_i(\pi_{w_t})-\bar{J}_i(\theta^i_t)} \nonumber\\
    	&\leq \eta + \frac{1}{\lone{\gN_0}}\sum_{t=0}^{T-1} \lone{J_i(\pi_{w_t})-\bar{J}_i(\theta^i_t)} \nonumber\\
    	&\leq \eta + \frac{1}{\lone{\gN_0}}\sum_{i=0}^{p}\sum_{t\in\gN_i}\ltwo{Q^i_{\pi_{w_t}}-\bar{Q}^i_t} \nonumber\\
    	&\leq \eta + \frac{2}{T}\sum_{i=0}^{p}\sum_{t\in\gN_i}\ltwo{Q^i_{\pi_{w_t}}-\bar{Q}^i_t}.
    \end{flalign*}
    Under the event defined in \cref{eq: even1}, we have $\sum_{i=0}^{p}\sum_{t\in\gN_i}\ltwo{Q^i_{\pi_{w_t}}-\bar{Q}^i_t}\leq \sqrt{(1-\gamma)\lone{\mcs}\lone{\mca}T}$. Recall the value of the tolerance $\eta=\frac{2\sqrt{\lone{\mcs}\lone{\mca}}}{(1-\gamma)^{1.5}\sqrt{T}}(3 + \sE_{s\sim\nu^*} D_{\text{KL}}(\pi^*||\pi_{w_0})+3c_{\max}+c^2_{\max})$. With probability at least $1-\delta$, we have
    \begin{flalign*}
    	\sE[J_i(\pi_{w_{\text{out}}})]-d_i\leq \frac{2\sqrt{\lone{\mcs}\lone{\mca}}}{(1-\gamma)^{1.5}\sqrt{T}}(3 + \sE_{s\sim\nu^*} D_{\text{KL}}(\pi^*||\pi_{w_0})+3c_{\max}+c^2_{\max}) + \frac{2\sqrt{(1-\gamma)\lone{\mcs}\lone{\mca}}}{\sqrt{T}}.
    \end{flalign*}

\section{Proof of \Cref{lemma: neuralTD} and \Cref{thm2}: Function Approximation Setting}
For notation simplicity, we denote the state action pairs $(s,a)$ and $(s^\prime,a^\prime)$ as $x$ and $x^\prime$, respectively. We define the weighted norm $\ld{f} = \sqrt{\int f(x)^2d\mathcal{D}(x)}$ for any distribution $\mathcal{D}$ over $\lone{\mcs}\times\lone{\mca}$. We will write $\theta^i_k$ as $\theta_k$ whenever there is no confusion in this subsection. We define
\begin{flalign*}
	f_0(x,\theta)=\frac{1}{\sqrt{m}}\sum_{r=1}^{m}b_r\mdsone(\theta_{0,r}^\top\psi(x)>0)\theta_r^\top \psi(x)
\end{flalign*}
as the local linearizion of $f(x,\theta)$ at the initial point $\theta_0$.
We denote the temporal differences as $\delta_0(x,x^\prime.\theta_k)=f_0((s^\prime,a^\prime);\theta_k)-\gamma f_0((s,a);\theta_k) - r(s,a,s^\prime)$ and $\delta_k(x,x^\prime.\theta_k)=f((s^\prime,a^\prime);\theta_k)-\gamma f((s,a);\theta_k) - r(s,a,s^\prime)$. 
We define the stochastic semi-gradient $g_k(\theta_k)=\delta_k(x_k,x^\prime_k.\theta_k)\nabla_\theta f(x_k,\theta_k)$, and the full semi-gradients $\bar{g}_0(\theta_k)=\sE_{\mu_\pi}[\delta_0(x,x^\prime.\theta_k)\nabla_\theta f_0(x,\theta_k)]$, and $\bar{g}_k(\theta_k)=\sE_{\mu_\pi}[\delta_k(x,x^\prime.\theta_k)\nabla_\theta f(x,\theta_k)]$.
The approximated stationary point $\theta^*$ satisfies $\bar{g}_0(\theta)^\top(\theta-\theta^*)\geq$ for any $\theta\in\mB$. We define the following function spaces
\begin{flalign*}
	\gF_{0,m}=\left\{ \frac{1}{\sqrt{m}}\sum_{r=1}^{m}b_r\mdsone(\theta_{0,r}^\top\psi(x)>0)\theta_r^\top \psi(x):\ltwo{\theta-\theta_0}\leq R \right\},
\end{flalign*}
and
\begin{flalign*}
	\overline{\gF}_{0,m}=\left\{ \frac{1}{\sqrt{m}}\sum_{r=1}^{m}b_r\mdsone(\theta_{0,r}^\top\psi(s)>0)\theta_r^\top \psi(x):\linf{\theta_r-\theta_{0,r}}\leq R/\sqrt{md} \right\},
\end{flalign*}
and define $f_0(x,\theta^*_\pi)$ as the projection of $Q_\pi(x)$ onto the function space $\gF_{0,m}$ in terms of $\lmupi{\cdot}$norm. Without loss of generality, we assume $0<\delta<\frac{1}{e}$ in the sequel. 

\subsection{Supporting Lemmas for Proof of \Cref{lemma: neuralTD}}
We provide the proof of supporting lemmas for \Cref{lemma: neuralTD}. 
\begin{lemma}[\cite{rahimi2009weighted}]\label{lemma: b4}
	Let $f\in\gF_{0,\infty}$, where $\gF_{0,\infty}$ is defined in Assumption \ref{ass2}. For any $\delta>0$, it holds with probability at least $1-\delta$ that
	\begin{flalign*}
		\ld{\Pi_{\overline{\gF}_{0,m}}f-f}^2\leq \frac{4R^2\log(\frac{1}{\delta})}{m},
	\end{flalign*}
    where $\mathcal{D}$ is any distribution over $\mcs\times\mca$.
\end{lemma}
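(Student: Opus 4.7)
The plan is to exhibit an explicit element $\hat f\in \overline{\gF}_{0,m}$ whose approximation error to $f$ is already of order $R/\sqrt{m}$ in $\ld{\cdot}$, and then use $\ld{\Pi_{\overline{\gF}_{0,m}}f-f}\le \ld{\hat f-f}$. By Assumption \ref{ass2}, any $f\in\gF_{0,\infty}$ admits the integral representation
\begin{equation*}
f(x)=f(x;\theta_0)+\int \mdsone\bigl(\theta^\top\psi(x)>0\bigr)\,\lambda(\theta)^\top\psi(x)\,dp(\theta),
\end{equation*}
where $p$ is the density of $D_w$ and $\linf{\lambda(\theta)}\le R/\sqrt d$. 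Since the $\theta_{0,r}$ are drawn i.i.d.\ from $p$, the integral is exactly the expectation of the random feature $\mdsone(\theta^\top\psi(x)>0)\lambda(\theta)^\top\psi(x)$ evaluated at $\theta=\theta_{0,r}$, which invites a Monte Carlo construction.

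The candidate I would use is obtained by shifting each $\theta_r$ off its initialization by a weight carrying the information of $\lambda$. Concretely, take
\begin{equation*}
\theta_r=\theta_{0,r}+\frac{b_r}{\sqrt m}\,\lambda(\theta_{0,r}), \qquad r=1,\dots,m,
\end{equation*}
so that $\linf{\theta_r-\theta_{0,r}}\le \tfrac{1}{\sqrt m}\linf{\lambda(\theta_{0,r})}\le R/\sqrt{md}$, which certifies membership in $\overline{\gF}_{0,m}$. Plugging this into the definition of $\overline{\gF}_{0,m}$ and using $b_r^2=1$ (or, if $b_r\sim\mathrm{Unif}[-1,1]$, compensating by rescaling $\lambda$ and incurring only constant factors) produces
\begin{equation*}
\hat f(x)=f(x;\theta_0)+\frac{1}{m}\sum_{r=1}^{m}\mdsone\bigl(\theta_{0,r}^\top\psi(x)>0\bigr)\,\lambda(\theta_{0,r})^\top\psi(x),
\end{equation*}
which is exactly the $m$-sample Monte Carlo estimator of the integral representation of $f$.

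It remains to bound $\ld{\hat f-f}^2$ with high probability. Viewing $X_r:=\mdsone(\theta_{0,r}^\top\psi(\cdot)>0)\lambda(\theta_{0,r})^\top\psi(\cdot)$ as an i.i.d.\ sequence of random elements in the Hilbert space $L^2(\mathcal D)$, each $X_r$ is bounded by $\ld{X_r}\le \linf{\lambda(\theta_{0,r})}\cdot \ltwo{\psi}\cdot\sqrt d\le R$ (using $\ltwo{\psi(x)}\le 1$ and $\ltwo{\lambda}\le\sqrt d\,\linf{\lambda}$). Thus $\hat f-f=\frac{1}{m}\sum_{r=1}^m (X_r-\mathbb E X_r)$ is an average of centered, bounded Hilbert-space random variables. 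Applying the Hilbert-space Hoeffding inequality (Pinelis--Yurinskii) yields
\begin{equation*}
\Pr\Bigl(\ld{\hat f-f}>t\Bigr)\le 2\exp\!\Bigl(-\tfrac{m t^2}{2R^2}\Bigr),
\end{equation*}
which on squaring and inverting gives $\ld{\hat f-f}^2\le 4R^2\log(1/\delta)/m$ with probability at least $1-\delta$, matching the lemma.

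The main obstacle is the concentration step in the Hilbert space $L^2(\mathcal D)$: one must be careful that the boundedness is in the right norm and that the independence structure across $r$ is clean (note that the features $\theta_{0,r}$ are independent but they enter both the indicator and the coefficient $\lambda(\theta_{0,r})$, which is fine). A secondary bookkeeping annoyance is the precise handling of the outer-layer weights $b_r$; if they are $\pm 1$ the identification with $\overline{\gF}_{0,m}$ is exact, and otherwise one absorbs a harmless $1/b_r^2$ factor into $\lambda$, still preserving the $\linf{\lambda}\le R/\sqrt d$ bound up to constants.
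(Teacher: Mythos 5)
The paper does not actually prove this lemma---it imports it directly from the cited reference \cite{rahimi2009weighted}---and your sketch reproduces essentially that reference's argument: an explicit Monte Carlo witness $\hat f\in\overline{\gF}_{0,m}$ built by perturbing each $\theta_{0,r}$ by $\tfrac{b_r}{\sqrt m}\lambda(\theta_{0,r})$, a Hoeffding-type concentration bound for the centered average in the Hilbert space $L^2(\mathcal D)$ using $\ld{X_r}\le R$, and the optimality of the projection to conclude. The one caveat is your handling of the outer weights: if $b_r\sim\mathrm{Unif}[-1,1]$ is genuinely continuous the compensating factor $1/b_r^2$ is unbounded near zero and not "harmless," but under the Rademacher convention $b_r\in\{-1,+1\}$ standard in this neural-TD literature (and evidently intended here, since $b_r^2=1$ is used implicitly throughout the appendix) your identification is exact and the stated bound follows.
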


For the following \Cref{lemma: b1} and \Cref{lemma: b2}, we provide slightly different proofs from those in \cite{cai2019neural}, which are included here for completeness.
\begin{lemma}\label{lemma: b1}
	Suppose Assumption \ref{ass1} holds. For any policy $\pi$ and all $k\geq 0$, it holds that
	\begin{flalign*}
		\sE_{\mu_\pi}\left[ \frac{1}{m}\sum_{r=1}^{m}\lone{\mdsone\left( \theta^\top_{k,r}\psi(x)>0\right) - \mdsone\left( \theta^\top_{0,r}\psi(x)>0\right)} \right]\leq \frac{C_0R}{d_1\sqrt{m}}.
	\end{flalign*}
\end{lemma}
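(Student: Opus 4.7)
The plan is to identify when the two ReLU indicator patterns disagree and then apply the anti-concentration bound in Assumption \ref{ass1}. The two indicators $\mdsone(\theta_{k,r}^\top\psi(x)>0)$ and $\mdsone(\theta_{0,r}^\top\psi(x)>0)$ differ only if the signs of $\theta_{k,r}^\top\psi(x)$ and $\theta_{0,r}^\top\psi(x)$ disagree, which forces
\[
|\theta_{0,r}^\top\psi(x)| \;\leq\; \bigl|(\theta_{k,r}-\theta_{0,r})^\top\psi(x)\bigr| \;\leq\; \ltwo{\theta_{k,r}-\theta_{0,r}},
\]
since $\ltwo{\psi(x)}\leq 1$. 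Thus the random indicator of disagreement is dominated by $\mdsone\bigl(|\theta_{0,r}^\top\psi(x)| \leq \ltwo{\theta_{k,r}-\theta_{0,r}}\bigr)$.

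Next I would normalize so that Assumption \ref{ass1} can be invoked. Writing $\hat{u}_r = \theta_{0,r}/\ltwo{\theta_{0,r}}$, which is a unit vector, and using the initialization property $\ltwo{\theta_{0,r}}\geq d_1$, the event above implies
\[
|\hat{u}_r^\top\psi(x)| \;\leq\; \frac{\ltwo{\theta_{k,r}-\theta_{0,r}}}{\ltwo{\theta_{0,r}}} \;\leq\; \frac{\ltwo{\theta_{k,r}-\theta_{0,r}}}{d_1}.
\]
Applying Assumption \ref{ass1} with $\tau = \ltwo{\theta_{k,r}-\theta_{0,r}}/d_1$ then yields, for each $r$,
\[
\sE_{\mu_\pi}\!\left[\bigl|\mdsone(\theta_{k,r}^\top\psi(x)>0) - \mdsone(\theta_{0,r}^\top\psi(x)>0)\bigr|\right] \;\leq\; \frac{C_0\,\ltwo{\theta_{k,r}-\theta_{0,r}}}{d_1}.
\]

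Finally I would sum over $r=1,\dots,m$ and apply Cauchy--Schwarz to pass from a sum of norms to the global norm $\ltwo{\theta_k-\theta_0}$:
\[
\sum_{r=1}^m \ltwo{\theta_{k,r}-\theta_{0,r}} \;\leq\; \sqrt{m}\,\sqrt{\sum_{r=1}^m \ltwo{\theta_{k,r}-\theta_{0,r}}^2} \;=\; \sqrt{m}\,\ltwo{\theta_k-\theta_0} \;\leq\; \sqrt{m}\,R,
\]
where the last inequality uses $\theta_k\in\mB$. Dividing by $m$ gives the claimed $C_0 R / (d_1\sqrt{m})$ bound.

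I do not expect a serious obstacle here: the argument is essentially a one-shot application of the anti-concentration hypothesis combined with the obvious geometric fact that ReLU activation patterns only flip near the hyperplane $\{\theta_{0,r}^\top\psi(x)=0\}$. The only subtlety is the rescaling step that converts $\theta_{0,r}$ into a unit vector so that Assumption \ref{ass1} (stated for unit vectors) can be applied, which is precisely where the lower bound $\ltwo{\theta_{0,r}}\geq d_1$ from the initialization distribution $D_w$ enters and produces the $1/d_1$ factor in the final bound.
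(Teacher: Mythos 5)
Your proof is correct and follows essentially the same route as the paper's: reduce indicator disagreement to the event $\lone{\theta_{0,r}^\top\psi(x)}\leq\ltwo{\theta_{k,r}-\theta_{0,r}}$, invoke Assumption \ref{ass1} after normalizing $\theta_{0,r}$ to a unit vector (which is where $d_1$ enters), and finish with Cauchy--Schwarz and $\ltwo{\theta_k-\theta_0}\leq R$. The only cosmetic difference is that you replace $\ltwo{\theta_{0,r}}$ by its lower bound $d_1$ before applying Cauchy--Schwarz, whereas the paper keeps the $1/\ltwo{\theta_{0,r}}$ factors through the Cauchy--Schwarz step and bounds them at the end; both yield the identical constant.
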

\begin{proof}
	Note that $\mdsone\left( \theta^\top_{k,r}\psi(x)>0\right) \neq \mdsone\left( \theta^\top_{0,r}\psi(x)>0\right)$ implies
	\begin{flalign*}
		\lone{\theta^\top_{0,r}\psi(x)}\leq \lone{\theta^\top_{k,r}\psi(x)-\theta^\top_{0,r}\psi(x)}\leq \ltwo{\theta_{k,r}-\theta_{0,r}},
	\end{flalign*}
    which further implies
    \begin{flalign}
    	\lone{\mdsone\left( \theta^\top_{k,r}\psi(x)>0\right) - \mdsone\left( \theta^\top_{0,r}\psi(x)>0\right)}\leq \mdsone(\lone{\theta^\top_{0,r}\psi(x)}\leq \ltwo{\theta_{k,r}-\theta_{0,r}}).\label{eq: b1}
    \end{flalign}
    Then, we can derive the following upper bound
    \begin{flalign}
    	&\sE_{\mu_\pi}\left[ \frac{1}{m}\sum_{r=1}^{m}\lone{\mdsone\left( \theta^\top_{k,r}\psi(x)>0\right) - \mdsone\left( \theta^\top_{0,r}\psi(x)>0\right)} \right]\nonumber\\
    	&\leq \sE_{\mu_\pi}\left[ \frac{1}{m}\sum_{r=1}^{m}\mdsone(\lone{\theta^\top_{0,r}\psi(x)}\leq \ltwo{\theta_{k,r}-\theta_{0,r}}) \right]\label{eq: b2}\\
    	&=  \frac{1}{m}\sum_{r=1}^{m}\mP_{\mu_\pi}(\lone{\theta^\top_{0,r}\psi(x)}\leq \ltwo{\theta_{k,r}-\theta_{0,r}})\nonumber\\
    	&\overset{(i)}{\leq} \frac{C_0}{m}\sum_{r=1}^{m}\frac{\ltwo{\theta_{k,r}-\theta_{0,r}}}{\ltwo{\theta_{0,r}}}\nonumber\\
    	&\leq \frac{C_0}{m}\left( \sum_{r=1}^{m}\ltwo{\theta_{k,r}-\theta_{0,r}}^2 \right)^{1/2}\left( \sum_{r=1}^{m}\frac{1}{\ltwo{\theta_{0,r}}^2} \right)^{1/2}\nonumber\\
    	&\overset{(ii)}{\leq} \frac{C_0R}{d_1\sqrt{m}}.
    \end{flalign}
    where $(i)$ follows from Assumption \ref{ass1} and $(ii)$ follows from the fact that $\ltwo{\theta_{0,r}}\geq d_1.$
\end{proof}

\begin{lemma}\label{lemma: b2}
	Suppose Assumption \ref{ass1} holds. For any policy $\pi$ and all $k\geq 0$, it holds that
	\begin{flalign*}
		\sE_{\mu_\pi}\left[ \lone{f((s,a);\theta_k) - f_0((s,a);\theta_k)}^2 \right]\leq \frac{4C_0R^3}{d_1\sqrt{m}}.
	\end{flalign*}
\end{lemma}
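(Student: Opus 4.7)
My plan is to exploit the fact that $f((s,a);\theta_k)$ and its linearization $f_0((s,a);\theta_k)$ differ only at those neurons whose activation pattern has flipped between $\theta_0$ and $\theta_k$, and then invoke \Cref{lemma: b1} to control the expected mass of such neurons. The starting observation is the pointwise identity
\begin{flalign*}
f(x;\theta_k) - f_0(x;\theta_k) = \frac{1}{\sqrt{m}}\sum_{r=1}^{m} b_r\left[\mdsone(\theta_{k,r}^\top\psi(x)>0) - \mdsone(\theta_{0,r}^\top\psi(x)>0)\right]\theta_{k,r}^\top\psi(x),
\end{flalign*}
which follows from $\mathrm{ReLU}(z)=\mdsone(z>0)z$.

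The key geometric step, already used implicitly in the proof of \Cref{lemma: b1} via \cref{eq: b1}, is that on the event of a disagreement between the two indicators we have $|\theta_{0,r}^\top\psi(x)|\le\|\theta_{k,r}-\theta_{0,r}\|_2$, and hence also
\begin{flalign*}
|\theta_{k,r}^\top\psi(x)|\le|\theta_{k,r}^\top\psi(x)-\theta_{0,r}^\top\psi(x)|+|\theta_{0,r}^\top\psi(x)|\le 2\|\theta_{k,r}-\theta_{0,r}\|_2,
\end{flalign*}
using $\|\psi(x)\|_2\le 1$. Combining this with $|b_r|\le 1$, the triangle inequality gives
\begin{flalign*}
|f(x;\theta_k) - f_0(x;\theta_k)|\le \frac{2}{\sqrt{m}}\sum_{r=1}^{m}\mdsone\!\left(\theta^\top_{k,r}\psi(x),\theta^\top_{0,r}\psi(x)\text{ disagree}\right)\cdot\|\theta_{k,r}-\theta_{0,r}\|_2.
\end{flalign*}

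Applying Cauchy--Schwarz to the right-hand side and using the membership $\theta_k\in\mB$, which yields $\sum_{r=1}^m\|\theta_{k,r}-\theta_{0,r}\|_2^2\le R^2$, I obtain
\begin{flalign*}
|f(x;\theta_k) - f_0(x;\theta_k)|^2\le \frac{4R^2}{m}\sum_{r=1}^{m}\mdsone\!\left(\theta^\top_{k,r}\psi(x),\theta^\top_{0,r}\psi(x)\text{ disagree}\right).
\end{flalign*}
Taking expectation under $\mu_\pi$ and invoking \Cref{lemma: b1} gives the claimed bound $\frac{4C_0R^3}{d_1\sqrt{m}}$.

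The only nontrivial step is the bound on $|\theta_{k,r}^\top\psi(x)|$ on the disagreement event; everything else is Cauchy--Schwarz plus the norm constraint from $\mB$ and a direct citation of \Cref{lemma: b1}. Thus I expect no serious obstacle, and the main care is just in tracking constants so that the final prefactor matches the stated $4C_0R^3/(d_1\sqrt{m})$.
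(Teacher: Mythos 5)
Your proposal is correct and follows essentially the same route as the paper's proof: the same pointwise decomposition over neurons with flipped activations, the same geometric bound $|\theta_{k,r}^\top\psi(x)|\le 2\ltwo{\theta_{k,r}-\theta_{0,r}}$ on the disagreement event, Cauchy--Schwarz with $\sum_r\ltwo{\theta_{k,r}-\theta_{0,r}}^2\le R^2$, and the expected-disagreement bound from \Cref{lemma: b1}. The constants track identically to the stated $4C_0R^3/(d_1\sqrt{m})$.
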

\begin{proof}
	By definition, we have
	\begin{flalign}
		&\lone{f((s,a);\theta_t) - f_0((s,a);\theta_t)}\nonumber\\
		&=\frac{1}{\sqrt{m}}\lone{\sum_{r=1}^{m}\left( \mdsone(\theta_{k,r}^\top\psi(x)>0) - \mdsone(\theta_{0,r}^\top\psi(x)>0) \right)b_r\theta_{k,r}^\top\psi(x)}\nonumber\\
		&\leq\frac{1}{\sqrt{m}}\sum_{r=1}^{m}\lone{\left( \mdsone(\theta_{k,r}^\top\psi(x)>0) - \mdsone(\theta_{0,r}^\top\psi(x)>0) \right)}\lone{b_r}\ltwo{\theta_{k,r}^\top\psi(x)}\nonumber\\
		&\overset{(i)}{\leq}\frac{1}{\sqrt{m}}\sum_{r=1}^{m}\mdsone(\lone{\theta^\top_{0,r}\psi(x)}\leq \ltwo{\theta_{k,r}-\theta_{0,r}})\ltwo{\theta_{k,r}^\top\psi(x)}\nonumber\\
		&\leq \frac{1}{\sqrt{m}}\sum_{r=1}^{m}\mdsone(\lone{\theta^\top_{0,r}\psi(x)}\leq \ltwo{\theta_{k,r}-\theta_{0,r}})\left(\ltwo{\theta_{0,r}-\theta_{k,r}}+\ltwo{\theta_{0,r}^\top\psi(x)}\right)\nonumber\\
		&\leq \frac{2}{\sqrt{m}}\sum_{r=1}^{m}\mdsone(\lone{\theta^\top_{0,r}\psi(x)}\leq \ltwo{\theta_{k,r}-\theta_{0,r}})\ltwo{\theta_{0,r}-\theta_{k,r}}.
	\end{flalign}
    where $(i)$ follows from \cref{eq: b1}. We can then obtain the following upper bound.
    \begin{flalign}
    	&\sE_{\mu_\pi}\left[ \lone{f((s,a);\theta_t) - f_0((s,a);\theta_t)}^2 \right]\nonumber\\
    	&\leq \frac{4}{m}\sE_{\mu_\pi}\left[ \left( \sum_{r=1}^{m}\mdsone(\lone{\theta^\top_{0,r}\psi(x)}\leq \ltwo{\theta_{k,r}-\theta_{0,r}})\ltwo{\theta_{0,r}-\theta_{k,r}} \right)^2\right]\nonumber\\
    	&\overset{(i)}{\leq} \frac{4}{m}\sE_{\mu_\pi}\left[ \sum_{r=1}^{m}\mdsone(\lone{\theta^\top_{0,r}\psi(x)}\leq \ltwo{\theta_{k,r}-\theta_{0,r}})\sum_{r=1}^{m}\ltwo{\theta_{0,r}-\theta_{k,r}}^2 \right]\nonumber\\
    	&=\frac{4R^2}{m}\sE_{\mu_\pi}\left[ \sum_{r=1}^{m}\mdsone(\lone{\theta^\top_{0,r}\psi(x)}\leq \ltwo{\theta_{k,r}-\theta_{0,r}}) \right]\nonumber\\
    	&\overset{(ii)}{\leq} \frac{4C_0R^3}{d_1\sqrt{m}},
    \end{flalign}
    where $(i)$ follows from Holder's inequality, and $(ii)$ follows from the derivation in \Cref{lemma: b1} after \cref{eq: b2}.
\end{proof}

\begin{lemma}\label{lemma: b3}
		Suppose Assumption \ref{ass1} holds. For any policy $\pi$ and all $k\geq 0$, with probability at least $1-\delta$, we have
		\begin{flalign*}
			\ltwo{\bar{g}_k(\theta_k)-\bar{g}_0(\theta_k)}\leq \Theta\left(\frac{\sqrt{\log(\frac{1}{\delta})}}{(1-\gamma)m^{1/4}}\right).
		\end{flalign*}
\end{lemma}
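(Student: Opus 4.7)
The plan is to split $\bar{g}_k(\theta_k)-\bar{g}_0(\theta_k)$ into two terms that isolate the two distinct sources of linearization error: (i) the discrepancy between the network $f$ and its local linearization $f_0$ that appears inside the TD residual, and (ii) the discrepancy between the two gradients $\nabla_\theta f$ and $\nabla_\theta f_0$ caused by ReLU activation indicators flipping between $\theta_0$ and $\theta_k$. Writing $\delta_k-\delta_0=(f-f_0)(x')-\gamma(f-f_0)(x)$, I would decompose
\begin{align*}
\bar{g}_k(\theta_k)-\bar{g}_0(\theta_k) &= \underbrace{\sE_{\mu_\pi}[(\delta_k-\delta_0)\nabla_\theta f(x,\theta_k)]}_{T_1} \\
&\quad + \underbrace{\sE_{\mu_\pi}[\delta_0\,(\nabla_\theta f(x,\theta_k)-\nabla_\theta f_0(x,\theta_k))]}_{T_2}
\end{align*}
and bound each piece separately.

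For $T_1$, I will use the elementary ReLU gradient bound $\ltwo{\nabla_\theta f(x,\theta_k)}\leq 1$, which follows from $|b_r|\leq 1$ and $\ltwo{\psi(x)}\leq 1$. Cauchy--Schwarz then gives $\ltwo{T_1}\leq(1+\gamma)\lmupi{f(\cdot,\theta_k)-f_0(\cdot,\theta_k)}$, and \Cref{lemma: b2} upgrades this to $\mathcal{O}(m^{-1/4})$.

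For $T_2$, the key identity is that the two gradients coincide on every neuron whose activation indicator does not flip, so a direct calculation yields
\begin{align*}
\ltwo{\nabla_\theta f(x,\theta_k)-\nabla_\theta f_0(x,\theta_k)}^2 \leq \frac{1}{m}\sum_{r=1}^{m}\mdsone\bigl(\mathrm{sgn}(\theta_{k,r}^\top\psi(x))\neq \mathrm{sgn}(\theta_{0,r}^\top\psi(x))\bigr).
\end{align*}
Taking the $\mu_\pi$-expectation and invoking \Cref{lemma: b1} gives $\sE_{\mu_\pi}[\ltwo{\nabla_\theta f-\nabla_\theta f_0}^2]=\mathcal{O}(1/\sqrt{m})$, so Cauchy--Schwarz yields $\ltwo{T_2}\leq \sqrt{\sE_{\mu_\pi}[\delta_0^2]}\cdot\mathcal{O}(m^{-1/4})$. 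To control $\delta_0$, I will decompose $\theta_{k,r}=\theta_{0,r}+\Delta_r$ with $\ltwo{\Delta}\leq R$, which gives $|f_0(x,\theta_k)|\leq |f(x,\theta_0)|+R$ by applying Cauchy--Schwarz on the linear part. Since $f(x,\theta_0)=\frac{1}{\sqrt{m}}\sum_r b_r\,\mathrm{ReLU}(\theta_{0,r}^\top\psi(x))$ is a sum of $m$ independent mean-zero bounded random variables (via the randomness in $b$), a sub-Gaussian concentration argument will give $\sE_{\mu_\pi}[f(\cdot,\theta_0)^2]=\mathcal{O}(\log(1/\delta))$ with probability at least $1-\delta$ over the initialization. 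Combined with the $\mathcal{O}(1/(1-\gamma))$ scaling inherited from the boundedness of $Q_\pi$, this produces $\sqrt{\sE_{\mu_\pi}[\delta_0^2]}=\mathcal{O}(\sqrt{\log(1/\delta)}/(1-\gamma))$, whence $\ltwo{T_2}=\mathcal{O}(\sqrt{\log(1/\delta)}/((1-\gamma)m^{1/4}))$, which dominates $\ltwo{T_1}$.

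The hard part will be making the high-probability bound on $\sE_{\mu_\pi}[\delta_0^2]$ hold uniformly over the entire iterate path $\theta_k\in\mB$ rather than for a single fixed $\theta_k$. I plan to resolve this by exploiting the decomposition $f_0(x,\theta_k)=f(x,\theta_0)+\langle\phi_{\theta_0}(x),\theta_k-\theta_0\rangle$, which reduces the $\theta_k$-dependent piece to the deterministic bound $|\langle\phi_{\theta_0}(x),\theta_k-\theta_0\rangle|\leq R$ (since $\ltwo{\phi_{\theta_0}(x)}\leq 1$); a single sub-Gaussian concentration event for $f(\cdot,\theta_0)$---which does not depend on $\theta_k$---therefore controls all $\theta_k\in\mB$ simultaneously, yielding the stated high-probability rate.
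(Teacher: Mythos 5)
Your decomposition of $\bar{g}_k(\theta_k)-\bar{g}_0(\theta_k)$ into the two terms $T_1$ and $T_2$, the bound $\ltwo{\nabla_\theta f}\leq 1$ for $T_1$ via \Cref{lemma: b2}, and the indicator-flip bound $\sE_{\mu_\pi}[\ltwo{\nabla_\theta f-\nabla_\theta f_0}^2]=\mathcal{O}(1/\sqrt{m})$ for $T_2$ via \Cref{lemma: b1} are exactly the paper's argument. The one place you genuinely depart is the bound on $\sE_{\mu_\pi}[\delta_0^2]$, which is where the $\log(1/\delta)$ enters. The paper writes $f_0(x,\theta_k)=(f_0(x,\theta_k)-f_0(x,\theta^*_\pi))+(f_0(x,\theta^*_\pi)-Q_\pi(x))+Q_\pi(x)$, bounds the first difference deterministically by $\mathcal{O}(R)$, the last term by $c_{\max}/(1-\gamma)$, and gets the $\log(1/\delta)$ from \Cref{lemma: b4} (the Rahimi--Recht random-feature approximation of $Q_\pi$, which is where Assumption \ref{ass2} is used and where the $1/(1-\gamma)$ in the final statement actually originates). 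You instead expand around the initialization, $f_0(x,\theta_k)=f(x,\theta_0)+\langle\phi_{\theta_0}(x),\theta_k-\theta_0\rangle$, bound the second piece deterministically by $R$, and appeal to concentration of $f(\cdot,\theta_0)$ over the random initialization; this cleanly sidesteps Assumption \ref{ass2} for this step and your uniformity-over-$\theta_k$ argument is correct since the random event involves only $\theta_0$.

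The weak point is the concentration claim itself. A Hoeffding bound gives $|f(x,\theta_0)|=\mathcal{O}(\sqrt{\log(1/\delta)})$ only for each \emph{fixed} $x$, and you cannot integrate a pointwise high-probability bound over $x\sim\mu_\pi$ to conclude $\sE_{\mu_\pi}[f(\cdot,\theta_0)^2]=\mathcal{O}(\log(1/\delta))$ with probability $1-\delta$; the quantity you need to control is $\sE_{\mu_\pi}[f(\cdot,\theta_0)^2]=b^\top A b$ with $A_{rr'}=\tfrac{1}{m}\sE_{\mu_\pi}[\mathrm{ReLU}(\theta_{0,r}^\top\psi)\,\mathrm{ReLU}(\theta_{0,r'}^\top\psi)]$, i.e.\ an order-two chaos in the signs $b$, not a sum of independent variables. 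This is repairable (Hanson--Wright with $\lF{A}\leq d_2^2$ and $\mathrm{tr}(A)\leq d_2^2$ gives the $\mathcal{O}(\log(1/\delta))$ you want; plain Markov would only give $\mathcal{O}(1/\delta)$), but as written the step is a gap. Separately, the $\mathcal{O}(1/(1-\gamma))$ factor you say is ``inherited from the boundedness of $Q_\pi$'' does not actually appear anywhere in your route, since your bound on $\delta_0$ never touches $Q_\pi$; your argument, once the concentration is fixed, would yield a bound without that factor, which is still consistent with the lemma as an upper bound.
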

\begin{proof}
	By definition, we have
	\begin{flalign}
		&\ltwo{\bar{g}_k(\theta_k)-\bar{g}_0(\theta_k)}\nonumber\\
		&=\ltwo{\sE_{\mu_\pi}[\delta_k(x,x^\prime.\theta_k)\nabla_\theta f(x,\theta_k)] - \sE_{\mu_\pi}[\delta_0(x,x^\prime.\theta_k)\nabla_\theta f_0(x,\theta_k)]}\nonumber\\
		&=\ltwo{\sE_{\mu_\pi}[\left(\delta_k(x,x^\prime.\theta_k)-\delta_0(x,x^\prime.\theta_k)\right)\nabla_\theta f(x,\theta_k) + \delta_0(x,x^\prime.\theta_k)\left(\nabla_\theta f(x,\theta_k)-\nabla_\theta f_0(x,\theta_k)\right)]}\nonumber\\
		&\leq\sE_{\mu_\pi}[\lone{\delta_k(x,x^\prime.\theta_k)-\delta_0(x,x^\prime.\theta_k)}\ltwo{\nabla_\theta f(x,\theta_k)} + \lone{\delta_0(x,x^\prime.\theta_k)} \ltwo{\nabla_\theta f(x,\theta_k)-\nabla_\theta f_0(x,\theta_k)} ]\nonumber\\
		&\overset{(i)}{\leq}\sE_{\mu_\pi}[\lone{\delta_k(x,x^\prime.\theta_k)-\delta_0(x,x^\prime.\theta_k)}] + \sE_{\mu_\pi}[\lone{\delta_0(x,x^\prime.\theta_k)} \ltwo{\nabla_\theta f(x,\theta_k)-\nabla_\theta f_0(x,\theta_k)} ],\label{eq: b3}
	\end{flalign}
    where $(i)$ follows from the fact that $\ltwo{\nabla_\theta f(x,\theta_k)}\leq 1$. Then, \cref{eq: b3} implies that
    \begin{flalign}
    	&\ltwo{\bar{g}_k(\theta_k)-\bar{g}_0(\theta_k)}^2\nonumber\\
    	&\leq 2\sE_{\mu_\pi}[\lone{\delta_k(x,x^\prime.\theta_k)-\delta_0(x,x^\prime.\theta_k)}^2] + 2\left(\sE_{\mu_\pi}[\lone{\delta_0(x,x^\prime.\theta_k)} \ltwo{\nabla_\theta f(x,\theta_k)-\nabla_\theta f_0(x,\theta_k)} ]\right)^2\nonumber\\
    	&\leq 2\sE_{\mu_\pi}[\lone{\delta_k(x,x^\prime.\theta_k)-\delta_0(x,x^\prime.\theta_k)}^2] + 2\sE_{\mu_\pi}[\lone{\delta_0(x,x^\prime.\theta_k)}^2]\sE_{\mu_\pi}[\ltwo{\nabla_\theta f(x,\theta_k)-\nabla_\theta f_0(x,\theta_k)}^2 ].\label{eq: b4}
    \end{flalign}
    We first upper bound the term $\sE_{\mu_\pi}[\lone{\delta_k(x,x^\prime.\theta_k)-\delta_0(x,x^\prime.\theta_k)}^2]$. By definition, we have
    \begin{flalign*}
    	&\lone{\delta_k(x,x^\prime.\theta_k)-\delta_0(x,x^\prime.\theta_k)}\nonumber\\
    	&=\lone{f(x,\theta_k)-f_0(x,\theta_k) - \gamma(f(x^\prime,\theta_k)-f_0(x^\prime,\theta_k))}\nonumber\\
    	&\leq \lone{f(x,\theta_k)-f_0(x,\theta_k)} + \lone{f(x^\prime,\theta_k)-f_0(x^\prime,\theta_k)},
    \end{flalign*}
    which implies
    \begin{flalign}
    	&\sE_{\mu_\pi}[\lone{\delta_k(x,x^\prime.\theta_k)-\delta_0(x,x^\prime.\theta_k)}^2]\nonumber\\
    	&\leq 2\sE_{\mu_\pi}[\lone{f(x,\theta_k)-f_0(x,\theta_k)}^2] + 2\sE_{\mu_\pi}[\lone{f(x^\prime,\theta_k)-f_0(x^\prime,\theta_k)}^2]\nonumber\\
    	&=4\sE_{\mu_\pi}[\lone{f(x,\theta_k)-f_0(x,\theta_k)}^2]\nonumber\\
    	&\overset{(i)}{\leq} \frac{16C_0R^2}{d_1\sqrt{m}},\label{eq: b6}
    \end{flalign}
    where $(i)$ follows from \Cref{lemma: b2}.
    We then proceed to bound the term $\sE_{\mu_\pi}[\ltwo{\nabla_\theta f(x,\theta_k)-\nabla_\theta f_0(x,\theta_k)}^2 ]$. By definition, we have
    \begin{flalign}
    	&\ltwo{\nabla_\theta f(x,\theta_k)-\nabla_\theta f_0(x,\theta_k)}\nonumber\\
    	&=\frac{1}{\sqrt{m}}\ltwo{\sum_{r=1}^{m}\left[\mdsone\left( \theta^\top_{k,r}\psi(x)>0\right) - \mdsone\left( \theta^\top_{0,r}\psi(x)>0\right)\right]b_r\theta^\top_{0,r}\psi(x)}\nonumber\\
    	&\overset{(i)}{\leq}\frac{1}{\sqrt{m}}\sum_{r=1}^{m}\lone{\mdsone\left( \theta^\top_{k,r}\psi(x)>0\right) - \mdsone\left( \theta^\top_{0,r}\psi(x)>0\right)}\ltwo{\theta_{0,r}}\nonumber\\
    	&\overset{(ii)}{\leq}\frac{1}{\sqrt{m}}\sum_{r=1}^{m}\mdsone(\lone{\theta^\top_{0,r}\psi(x)}\leq \ltwo{\theta_{k,r}-\theta_{0,r}})\ltwo{\theta_{0,r}},\label{eq: b5}
    \end{flalign}
    where $(i)$ follows because $\lone{b_r}\leq 1$ and $\ltwo{\psi(s)}\leq 1$, and $(ii)$ follows from \cref{eq: b1}. Further, \cref{eq: b5} implies that
    \begin{flalign}
    	&\sE_{\mu_\pi}[\ltwo{\nabla_\theta f(x,\theta_k)-\nabla_\theta f_0(x,\theta_k)}^2 ]\nonumber\\
    	&\leq \frac{1}{m}\sE_{\mu_\pi}\left[\left(\sum_{r=1}^{m}\mdsone(\lone{\theta^\top_{0,r}\psi(x)}\leq \ltwo{\theta_{k,r}-\theta_{0,r}})\right)\left(\sum_{r=1}^{m}\ltwo{\theta_{0,r}}^2\right)\right]\nonumber\\
    	&\leq \frac{R^2}{m}\sum_{r=1}^{m}\mdsone(\lone{\theta^\top_{0,r}\psi(x)}\leq \ltwo{\theta_{k,r}-\theta_{0,r}})\nonumber\\
    	&\overset{(i)}{\leq} \frac{C_0R^3}{d_1\sqrt{m}},\label{eq: b7}
    \end{flalign}
    where $(i)$ follows from the derivation in \Cref{lemma: b1} after \cref{eq: b2}.
    
    Finally, we upper-bound $\sE_{\mu_\pi}[\lone{\delta_0(x,x^\prime.\theta_k)}^2]$. We proceed as follows.
    \begin{flalign}
    	&\sE_{\mu_\pi}[\lone{\delta_0(x,x^\prime.\theta_k)}^2]\nonumber\\
    	&\leq \sE_{\mu_\pi}[\lone{f_0(x,\theta_k)-r(x,x^\prime)-\gamma f_0(x^\prime,\theta_k)}^2]\nonumber\\
    	&\leq 3\sE_{\mu_\pi}[\lone{f_0(x,\theta_k)}^2]+3\sE_{\mu_\pi}[r^2(x,x^\prime)]+ 3\gamma^2\sE_{\mu_\pi}[\ltwo{f_0(x^\prime,\theta_k)}^2]\nonumber\\
    	&\leq 6\sE_{\mu_\pi}[\lone{f_0(x,\theta_k)}^2]+3c^2_{\max}\nonumber\\
    	&= 6\sE_{\mu_\pi}[\lone{f_0(x,\theta_k)-f_0(x,\theta^*_\pi)+f_0(x,\theta^*_\pi)-Q_\pi(x) + Q_\pi(x) }^2]+3c^2_{\max}\nonumber\\
    	&= 18\sE_{\mu_\pi}[\lone{f_0(x,\theta_k)-f_0(x,\theta^*_\pi)}^2] + 18\sE_{\mu_\pi}[\lone{f_0(x,\theta^*_\pi)-Q_\pi(x)}^2] + 18\sE_{\mu_\pi}[\lone{Q_\pi(x) }^2]+3c^2_{\max}\nonumber\\
    	&\overset{(i)}{\leq} 18R^2 + \frac{21 c^2_{\max}}{(1-\gamma)^2} + 18\sE_{\mu_\pi}[\lone{f_0(x,\theta^*_\pi)-Q_\pi(x)}^2],\label{eq: b8}
    \end{flalign}
    where $(i)$ follows from the fact that $Q_{\pi}(x)\leq \frac{c_{\max}}{1-\gamma}$, $\ltwo{\theta_k}\leq R$ and $\ltwo{\theta^*_\pi}\leq R$.
    
    Since $\overline{\gF}_{0,m}\subset\gF_{0,m}$. \Cref{lemma: b4} implies that with probability at least $1-\delta$, we have
    \begin{flalign}
    	\sE_{\mu_\pi}[\lone{f_0(x,\theta^*_\pi)-Q_\pi(x)}^2]\leq \frac{4R^2\log\left(\frac{1}{\delta}\right)}{m}\leq 4R^2\log\left(\frac{1}{\delta}\right).\label{eq: b18}
    \end{flalign}
    Thus, with probability at least $1-\delta$, we have
    \begin{flalign}
    	\sE_{\mu_\pi}[\lone{\delta_0(x,x^\prime.\theta_k)}^2]\leq 18R^2 + \frac{21 c^2_{\max}}{(1-\gamma)^2}+72R^2\log\left(\frac{1}{\delta}\right).\label{eq: b9}
    \end{flalign}
    Combining \cref{eq: b6}, \cref{eq: b7} and \cref{eq: b9}, we can obtain that, with probability at least $1-\delta$, we have
    \begin{flalign*}
    	\ltwo{\bar{g}_k(\theta_k)-\bar{g}_0(\theta_k)}^2\leq \Theta\left(\frac{\log(\frac{1}{\delta})}{(1-\gamma)^2\sqrt{m}}\right),
    \end{flalign*}
    which implies that with probability at least $1-\delta$, we have
    \begin{flalign*}
    	\ltwo{\bar{g}_k(\theta_k)-\bar{g}_0(\theta_k)}\leq \Theta\left(\frac{\sqrt{\log(\frac{1}{\delta})}}{(1-\gamma)m^{1/4}}\right),
    \end{flalign*}
    which completes the proof.
\end{proof}

\subsection{Proof of \Cref{lemma: neuralTD}}
We consider the convergence of $\theta^i_k$ for a given $i$ under a fixed policy $\pi$. 
For the iteration of $\theta_k$, we proceed as follows.
\begin{flalign}
	&\ltwo{\theta_{k+1}-\theta^*}^2\nonumber\\
	&=\ltwo{\Pi_\mB(\theta_k-\beta g_k(\theta_k)) - \Pi_\mB(\theta^*-\beta \bar{g}_0(\theta^*))}^2\nonumber\\
	&\leq\ltwo{(\theta_k - \theta^*)-\beta (g_k(\theta_k) - \bar{g}_0(\theta^*))}^2\nonumber\\
	&=\ltwo{\theta_k - \theta^*}^2 - 2\beta(g_k(\theta_k) - \bar{g}_0(\theta^*))^\top(\theta_k-\theta^*) + \beta^2\ltwo{g_k(\theta_k) - \bar{g}_0(\theta^*)}^2\nonumber\\
	&=\ltwo{\theta_k - \theta^*}^2  - 2\beta(\bar{g}_0(\theta_k) - \bar{g}_0(\theta^*))^\top(\theta_k-\theta^*) + 2\beta(\bar{g}_k(\theta_k) - g_k(\theta_k))^\top(\theta_k-\theta^*) \nonumber\\
	&\quad + 2\beta(\bar{g}_0(\theta_k)-\bar{g}_k(\theta_k))^\top(\theta_k-\theta^*) + \beta^2\ltwo{g_k(\theta_k) - \bar{g}_0(\theta^*)}^2\nonumber\\
	&\leq \ltwo{\theta_k - \theta^*}^2  - 2\beta(\bar{g}_0(\theta_k) - \bar{g}_0(\theta^*))^\top(\theta_k-\theta^*) + 2\beta(\bar{g}_k(\theta_k) - g_k(\theta_k))^\top(\theta_k-\theta^*) \nonumber\\
	&\quad + 2\beta(\bar{g}_0(\theta_k)-\bar{g}_k(\theta_k))^\top(\theta_k-\theta^*) + 3\beta^2\ltwo{g_k(\theta_k) - \bar{g}_k(\theta_k)}^2 + 3\beta^2\ltwo{\bar{g}_k(\theta_k) - \bar{g}_0(\theta_k)}^2\nonumber\\
	&\quad + 3\beta^2\ltwo{\bar{g}_0(\theta_k) - \bar{g}_0(\theta^*)}^2\nonumber\\
	&\overset{(i)}{\leq} \ltwo{\theta_k - \theta^*}^2  - 2(1-\gamma)\beta\sE_{\mu_\pi}\left[ (f_0((s,a);\theta_k) - f_0((s,a);\theta^*))^2 \right] \nonumber\\
	&\quad + 2\beta(\bar{g}_k(\theta_k) - g_k(\theta_k))^\top(\theta_k-\theta^*) + 4R\beta \ltwo{\bar{g}_0(\theta_k)-\bar{g}_k(\theta_k)} + 3\beta^2\ltwo{g_k(\theta_k) - \bar{g}_k(\theta_k)}^2 \nonumber\\
	&\quad + 3\beta^2\ltwo{\bar{g}_k(\theta_k) - \bar{g}_0(\theta_k)}^2 + 3\beta^2\ltwo{\bar{g}_0(\theta_k) - \bar{g}_0(\theta^*)}^2\nonumber\\
	&\overset{(ii)}{\leq} \ltwo{\theta_k - \theta^*}^2  - [2\beta(1-\gamma)-12\beta^2]\sE_{\mu_\pi}\left[ (f_0((s,a);\theta_k) - f_0((s,a);\theta^*))^2 \right] \nonumber\\
	&\quad + 2\beta(\bar{g}_k(\theta_k) - g_k(\theta_k))^\top(\theta_k-\theta^*) + 4R\beta\ltwo{\bar{g}_0(\theta_k)-\bar{g}_k(\theta_k)} + 3\beta^2\ltwo{g_k(\theta_k) - \bar{g}_k(\theta_k)}^2 \nonumber\\
	&\quad + 3\beta^2\ltwo{\bar{g}_k(\theta_k) - \bar{g}_0(\theta_k)}^2,\label{eq: n1}
\end{flalign}
where $(i)$ follows from the fact that
\begin{flalign*}
	&(\bar{g}_0(\theta_k) - \bar{g}_0(\theta^*))^\top(\theta_k-\theta^*)\\
	&\geq (1-\gamma)\sE_{\mu_\pi}\left[ (f_0((s,a);\theta_k) - f_0((s,a);\theta^*))^2 \right] - R\ltwo{\bar{g}_k(\theta_k) - \bar{g}_0(\theta_k)},
\end{flalign*}
and $(ii)$ follows from the fact that
\begin{flalign*}
	\ltwo{\bar{g}_0(\theta_k) - \bar{g}_0(\theta^*)}^2\leq 4\sE_{\mu_\pi}\left[ (f_0((s,a);\theta_k) - f_0((s,a);\theta^*))^2 \right].
\end{flalign*}
Rearranging \cref{eq: n1} yields
\begin{flalign}
	&[2\beta(1-\gamma)-12\beta^2]\sE_{\mu_\pi}\left[ (f_0((s,a);\theta_k) - f_0((s,a);\theta^*))^2 \right]\nonumber\\
	&\leq \ltwo{\theta_k - \theta^*}^2 - \ltwo{\theta_{k+1} - \theta^*}^2 + 2\beta(\bar{g}_k(\theta_k) - g_k(\theta_k))^\top(\theta_k-\theta^*) + 4R\beta \ltwo{\bar{g}_0(\theta_k)-\bar{g}_k(\theta_k)} \nonumber\\
	&\quad + 3\beta^2\ltwo{g_k(\theta_k) - \bar{g}_k(\theta_k)}^2 + 3\beta^2\ltwo{\bar{g}_k(\theta_k) - \bar{g}_0(\theta_k)}^2.\label{eq: n2}
\end{flalign}
Taking summation of \cref{eq: n2} over $t=0$ to $K-1$ yields
\begin{flalign}
	&[2\beta(1-\gamma)-12\beta^2]\sum_{t=0}^{K-1}\sE_{\mu_\pi}\left[ (f_0((s,a);\theta_k) - f_0((s,a);\theta^*))^2 \right]\nonumber\\
	&\leq \ltwo{\theta_0-\theta^*}^2 - \ltwo{\theta_K-\theta^*}^2 + 2\beta\sum_{t=0}^{K-1}(\bar{g}_k(\theta_k) - g_k(\theta_k))^\top(\theta_k-\theta^*) + 4R\beta \sum_{t=0}^{K-1}\ltwo{\bar{g}_0(\theta_k)-\bar{g}_k(\theta_k)} \nonumber\\
	&\quad + 3\beta^2\sum_{t=0}^{K-1}\ltwo{g_k(\theta_k) - \bar{g}_k(\theta_k)}^2 + 3\beta^2\sum_{t=0}^{K-1}\ltwo{\bar{g}_k(\theta_k) - \bar{g}_0(\theta_k)}^2\nonumber\\
	&\overset{(i)}{\leq} R^2 + 2\beta\sum_{t=0}^{K-1}\zeta_k(\theta_k)^\top(\theta_k-\theta^*) + 3\beta^2\sum_{t=0}^{K-1}\ltwo{\zeta_k(\theta_k)}^2 + 4R\beta \sum_{t=0}^{K-1}\ltwo{\xi_k(\theta_k)} +  3\beta^2\sum_{t=0}^{K-1}\ltwo{\xi_k(\theta_k)}^2\nonumber,
\end{flalign}
where in $(i)$ we define $\zeta_k(\theta_k)=\bar{g}_k(\theta_k) - g_k(\theta_k)$ and $\xi_k(\theta_k)=\bar{g}_k(\theta_k) - \bar{g}_0(\theta_k)$.

We first consider the term $\sum_{t=0}^{K-1}\ltwo{\zeta_k(\theta_k)}^2$. We proceed as follows.
\begin{flalign}
	&\mP_{\mu_\pi}\left( \sum_{t=0}^{K-1}\ltwo{\zeta_k(\theta_k)}^2\geq (1+\Lambda)C^2_\zeta K \right)\nonumber\\
	&=\mP_{\mu_\pi}\left( \frac{\sum_{t=0}^{K-1}\ltwo{\zeta_k(\theta_k)}^2}{C^2_\zeta K}\geq 1+\Lambda \right)\nonumber\\
	&=\mP_{\mu_\pi}\left( \exp\left(\frac{\sum_{t=0}^{K-1}\ltwo{\zeta_k(\theta_k)}^2}{C^2_\zeta K}\right)\geq \exp(1+\Lambda) \right)\nonumber\\
	&\leq \mP_{\mu_\pi}\left( \frac{1}{K}\sum_{t=0}^{K-1}\exp\left(\frac{\ltwo{\zeta_k(\theta_k)}^2}{C^2_\zeta }\right)\geq \exp(1+\Lambda) \right)\nonumber\\
	&\overset{(i)}{\leq} \frac{1}{K}\sum_{t=0}^{K-1}\sE_{\mu_\pi}\left[\exp\left(\frac{\ltwo{\zeta_k(\theta_k)}^2}{C^2_\zeta }\right)\right]/\exp(1+\Lambda)\nonumber\\
	&\overset{(ii)}{\leq}\exp(-\Lambda),\label{eq: b10}
\end{flalign}
where $(i)$ follows from Markov's inequality, $(ii)$ follows from Assumption \ref{ass3}. Then, \cref{eq: b10} implies that with probability at least $1-\delta_1$, we have
\begin{flalign}
	\sum_{t=0}^{K-1}\ltwo{\zeta_k(\theta_k)}^2\leq \left(1+\log\left(\frac{1}{\delta_1}\right)\right)C^2_\zeta K\leq 2\log\left(\frac{1}{\delta_1}\right)C^2_\zeta K.\label{eq: b11}
\end{flalign}

We then consider the term $\sum_{t=0}^{K-1}\zeta_k(\theta_k)^\top(\theta_k-\theta^*)$.
Note that for any $0\leq k\leq K-1$, we have
\begin{flalign*}
	\lone{\zeta_k(\theta_k)^\top(\theta_k-\theta^*)}^2\leq \ltwo{\zeta_k(\theta_k)}^2\ltwo{\theta_k-\theta^*}^2\leq R^2\ltwo{\zeta_k(\theta_k)}^2,
\end{flalign*}
which implies
\begin{flalign*}
	\sE_{\mu_\pi}\left[ \exp\left( \frac{\lone{\zeta_k(\theta_k)^\top(\theta_k-\theta^*)}^2}{B^2C^2_\zeta} \right) \right]\leq \sE_{\mu_\pi}\left[ \exp\left( \frac{\ltwo{\zeta_k(\theta_k)}^2}{C^2_\zeta} \right) \right]\leq \exp(1).
\end{flalign*}
Applying Bernstein's inequality for martingale \cite{ghadimi2013stochastic}[Lemma 2.3], we can obtain
\begin{flalign*}
	\mP_{\mu_\pi}\left( \lone{\sum_{t=0}^{K-1}\zeta_k(\theta_k)^\top(\theta_k-\theta^*)}\geq \sqrt{2}(1+\Lambda) C_\zeta \sqrt{K} \right)\leq \exp(-\Lambda^2/3),
\end{flalign*}
which implies that with probability at least $1-\delta_2$, we have
\begin{flalign}
	\lone{\sum_{t=0}^{K-1}\zeta_k(\theta_k)^\top(\theta_k-\theta^*)}\leq \sqrt{2}\left(1+\sqrt{3\log\left(\frac{1}{\delta_2}\right)}\right) C_\zeta \sqrt{K}\leq 5C_\zeta \sqrt{\log\left(\frac{1}{\delta_2}\right)}\sqrt{K}.\label{eq: b12}
\end{flalign}
We then consider the terms $\sum_{t=0}^{K-1}\ltwo{\xi_k(\theta_k)}$ and $\sum_{t=0}^{K-1}\ltwo{\xi_k(\theta_k)}^2$. \Cref{lemma: b3} implies that with probability at least $1-\delta_3/K$, we have
\begin{flalign*}
	\ltwo{\xi_k(\theta_k)}\leq \Theta\left(\frac{\sqrt{\log(\frac{K}{\delta_3})}}{(1-\gamma)m^{1/4}}\right).
\end{flalign*}
Applying then union bound we can obtain that with probability at least $1-\delta_3$, we have
\begin{flalign}
	\sum_{t=0}^{K-1}\ltwo{\xi_k(\theta_k)}\leq \Theta\left(\frac{K\sqrt{\log(\frac{K}{\delta_3})}}{(1-\gamma)m^{1/4}}\right).\label{eq: b13}
\end{flalign}
Similarly, we can obtain that with probability at least $1-\delta_3$, we have
\begin{flalign}
	\sum_{t=0}^{K-1}\ltwo{\xi_k(\theta_k)}^2\leq \Theta\left(\frac{K\log(\frac{K}{\delta_3})}{(1-\gamma)^2m^{1/2}}\right).\label{eq: b14}
\end{flalign}
Combining \cref{eq: b11}, \cref{eq: b12}, \cref{eq: b13} and \cref{eq: b14} and applying the union bound, we can obtain that with probability at least $1-(\delta_1+\delta_2+\delta_3+\delta_4)$, we have
\begin{flalign}
		&[2\beta(1-\gamma)-12\beta^2]\sum_{t=0}^{K-1}\sE_{\mu_\pi}\left[ (f_0((s,a);\theta_k) - f_0((s,a);\theta^*))^2 \right]\nonumber\\
		&\leq R^2 + 10\beta C_\zeta \sqrt{\log\left(\frac{1}{\delta_2}\right)}\sqrt{K} + 6\beta^2\log\left(\frac{1}{\delta_1}\right)C^2_\zeta K + \beta K\Theta\left(\frac{\sqrt{\log(\frac{K}{\delta_3})}}{(1-\gamma)m^{1/4}}\right)\nonumber\\ 
		&\quad +  \beta^2K\Theta\left(\frac{\log(\frac{K}{\delta_3})}{(1-\gamma)^2m^{1/2}}\right).\label{eq: b15}
\end{flalign}
Divide both sides of \cref{eq: b15} by $[2\beta(1-\gamma)-12\beta^2]K$. Recalling that the stepsize $\beta=\min\{1/\sqrt{K},(1-\gamma)/12\}$, which implies that $\frac{1}{\sqrt{K}[2\beta(1-\gamma)-12\beta^2]}\leq \frac{12}{(1-\gamma)^2}$. Then, with probability at least $1-(\delta_1+\delta_2+\delta_3+\delta_4)$, we have
\begin{flalign}
	&\lmupi{f_0((s,a);\bar{\theta}_K) - f_0((s,a);\theta^*)}^2\nonumber\\
	&\leq \frac{1}{K}\sum_{t=0}^{K-1}\sE_{\mu_\pi}\left[ (f_0((s,a);\theta_k) - f_0((s,a);\theta^*))^2 \right] \nonumber\\
	&\leq \frac{R^2}{[2\beta(1-\gamma)-12\beta^2]K} + \frac{10\beta C_\zeta \sqrt{\log\left(\frac{1}{\delta_2}\right)}}{[2\beta(1-\gamma)-12\beta^2]\sqrt{K}} + \frac{6\beta\log\left(\frac{1}{\delta_1}\right)C^2_\zeta }{[2\beta(1-\gamma)-12\beta^2]\sqrt{K}}\nonumber\\
	&\quad + \Theta\left(\frac{\sqrt{\log(\frac{K}{\delta_3})}}{(1-\gamma)m^{1/4}}\right)\frac{1}{[2\beta(1-\gamma)-12\beta^2]\sqrt{K}}\nonumber\\
	&\quad + \Theta\left(\frac{\log(\frac{K}{\delta_3})}{(1-\gamma)^2m^{1/2}}\right)\frac{1}{[2\beta(1-\gamma)-12\beta^2]\sqrt{K}}\nonumber\\
	&\leq \Theta\left( \frac{1}{(1-\gamma)^2\sqrt{K}} \right) + \Theta\left( \frac{1}{(1-\gamma)^2\sqrt{K}}\sqrt{\log\left(\frac{1}{\delta_1}\right)} \right) + \Theta\left( \frac{1}{(1-\gamma)^2\sqrt{K}}\sqrt{\log\left(\frac{1}{\delta_2}\right)} \right)\nonumber\\
	&\quad + \Theta\left(\frac{\sqrt{\log(\frac{K}{\delta_3})}}{(1-\gamma)^3m^{1/4}}\right) + \Theta\left(\frac{\sqrt{\log(\frac{K}{\delta_4})}}{(1-\gamma)^3m^{1/4}}\right)\nonumber\\
	&=\Theta\left( \frac{1}{(1-\gamma)^2\sqrt{K}}\left(\sqrt{\log\left(\frac{1}{\delta_1}\right)} + \sqrt{\log\left(\frac{1}{\delta_1}\right)}\right) \right) \nonumber\\
	&\quad + \Theta\left(\frac{1}{(1-\gamma)^3m^{1/4}}\left( \sqrt{\log\left(\frac{K}{\delta_3}\right)} + \sqrt{\log\left(\frac{K}{\delta_4}\right)} \right)\right).\label{eq: b16}
\end{flalign}
Finally, we upper bound $\lmupi{f((s,a);\bar{\theta}_K) - Q_\pi(s,a)}^2$. We proceed as follows
\begin{flalign}
	&\lmupi{f((s,a);\bar{\theta}_K) - Q_\pi(s,a)}^2\nonumber\\
	&\leq 3\lmupi{f((s,a);\bar{\theta}_K) - f_0((s,a);\bar{\theta}_K)}^2 + 3\lmupi{f_0((s,a);\bar{\theta}_K) -f_0((s,a);\theta^*)}^2 \nonumber\\
	&\quad + 3\lmupi{f_0((s,a);\theta^*) - Q_\pi(s,a)}^2\nonumber\\
	&\overset{(i)}{\leq} \Theta\left(\frac{1}{\sqrt{m}}\right) + 3\lmupi{f_0((s,a);\bar{\theta}_K) -f_0((s,a);\theta^*)}^2 + \frac{3}{1-\gamma}\lmupi{f_0((s,a);\theta^*_\pi) - Q_\pi(s,a)}^2,\label{eq: b17}
\end{flalign}
where $(i)$ follows from \Cref{lemma: b2} and the fact that
\begin{flalign*}
	\lmupi{f_0((s,a);\theta^*) - Q_\pi(s,a)}^2\leq \frac{1}{1-\gamma}\lmupi{f_0((s,a);\theta^*_\pi) - Q_\pi(s,a)}^2,
\end{flalign*}
which is given in \cite{cai2019neural}. Then, \cref{eq: b18} implies that, with probability at least $\delta_5$, we have
\begin{flalign}
	\lmupi{f_0((s,a);\theta^*_\pi) - Q_\pi(s,a)}^2\leq \frac{4R^2\log\left(\frac{1}{\delta_5}\right)}{m}.\label{eq: b19}
\end{flalign}
Substituting \cref{eq: b16} and \cref{eq: b19} into \cref{eq: b17}, we have with probability at least $1-(\delta_1+\delta_2+\delta_3+\delta_4+\delta_5)$, the following holds:
\begin{flalign*}
	&\lmupi{f((s,a);\bar{\theta}_K) - Q_\pi(s,a)}^2\nonumber\\
	&\leq \Theta\left( \frac{1}{(1-\gamma)^2\sqrt{K}}\left(\sqrt{\log\left(\frac{1}{\delta_1}\right)} + \sqrt{\log\left(\frac{1}{\delta_1}\right)}\right) \right) \nonumber\\
	&\quad + \Theta\left(\frac{1}{(1-\gamma)^3m^{1/4}}\left( \sqrt{\log\left(\frac{K}{\delta_3}\right)} + \sqrt{\log\left(\frac{K}{\delta_4}\right)} \right)\right)\nonumber\\
	&\quad + \Theta\left(\frac{1}{(1-\gamma)m}\log\left(\frac{1}{\delta_5}\right)\right).
\end{flalign*}
Letting $\delta_1=\delta_2=\delta_3=\delta_4=\delta_5=\frac{\delta}{5}$, we have with probability at least $1-\delta$, the following holds:
\begin{flalign*}
	&\lmupi{f((s,a);\bar{\theta}_K) - Q_\pi(s,a)}^2\nonumber\\
	&\leq \Theta\left( \frac{1}{(1-\gamma)^2\sqrt{K}}\sqrt{\log\left(\frac{1}{\delta}\right)} \right) + \Theta\left(\frac{1}{(1-\gamma)^3m^{1/4}} \sqrt{\log\left(\frac{K}{\delta}\right)} \right),
\end{flalign*}
which completes the proof.

\subsection{Supporting Lemmas for Proof of \Cref{thm2}}
For the two-layer neural network defined in \cref{eq: 4}, we have the following property: $\tau\cdot f(x,W) = f(x, \tau W)$. Thus, in the sequel, we write $\pi^\tau_W(a|s)=\pi_{\tau W}(a|s)$. In the technical proof, we consider the following policy class:
\begin{flalign}
	\pi_W(a|s)\coloneqq \frac{\exp(f((s,a);W))}{\sum_{a^\prime \mca}\exp( f((s,a^\prime);W))},\quad\forall(s,a)\in \mcs\times\mca,\label{eq: c1}
\end{flalign}
and $J_i(W)$ as the accumulated cost with policy $\pi_W$.
We denote $\phi^i_W(s,a)=\nabla_Wf_i((s,a),W)$. We define the diameter of $\mB_W$ as $R_W$. When performing each NPG update, we will need to solve the linear regression problem specified in \cref{eq: 9}. As shown in \cite{wang2019neural}, when the neural network for the policy parametrization and value function approximation share the same initialization, $\bar{\theta}_t$ is an approximated solution of the problem \cref{eq: 9}. Thus, instead of solving the problem \cref{eq: 9} directly, here we simply use $\bar{\theta}_t$ as the approximated NPG update at each iteration:
\begin{flalign*}
	\tau_{t+1}\cdot W_{t+1} = \tau_{t}\cdot W_{t} + \frac{\alpha}{1-\gamma} \bar{\theta}_t.
\end{flalign*}
Without loss of generality, we assume that for the visitation distribution of the global optimal policy $\nu^*$, there exists a constants $C_{RN}$ such that for all $\pi_W$, the following holds
{\begin{flalign}
	\int_{x}\left(\frac{d\nu^*(x)}{d\mu_{\pi_{ W}}(x)}\right)^2d\mu_{\pi_{W}}(x)\leq C^2_{RN}.\label{eq: b20}
\end{flalign}}
\begin{lemma}\label{lemma_c1}
	For any $\theta,\theta^\prime\in\mB$ and $\pi$, we have
	\begin{flalign*}
		\lmupi{\phi_{\theta}(s,a)^\top\theta^\prime - \phi_{\theta_0}(s,a)^\top\theta^\prime}^2\leq \frac{4C_0R^3}{d_1\sqrt{m}}.
	\end{flalign*}
\end{lemma}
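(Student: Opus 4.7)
The plan is to mimic the proof of \Cref{lemma: b2}, adapting it to the situation where the linear coefficient $\theta'$ decoupled from the parameter $\theta$ that determines the activation pattern. First, expanding the definitions of $\phi_\theta$ and $\phi_{\theta_0}$ gives
\begin{flalign*}
\phi_\theta(s,a)^\top \theta' - \phi_{\theta_0}(s,a)^\top \theta' = \frac{1}{\sqrt{m}}\sum_{r=1}^m b_r\bigl[\mathds{1}(\theta_r^\top\psi(s,a)>0)-\mathds{1}(\theta_{0,r}^\top\psi(s,a)>0)\bigr]\theta_r'^{\top}\psi(s,a).
\end{flalign*}
Using $|b_r|\le1$, $\ltwo{\psi(s,a)}\le 1$, and the standard activation-pattern inequality (\cref{eq: b1}) that was already used in the proof of \Cref{lemma: b1}, the absolute value of the above is bounded by $\frac{1}{\sqrt{m}}\sum_r \mathds{1}\bigl(\lone{\theta_{0,r}^\top\psi(s,a)}\le\ltwo{\theta_r-\theta_{0,r}}\bigr)\cdot\lone{\theta_r'^{\top}\psi(s,a)}$.

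The main difficulty is controlling $\lone{\theta_r'^\top\psi(s,a)}$, since unlike in \Cref{lemma: b2} the coefficient $\theta'$ is not the same as the parameter $\theta$ that triggers the indicator. The key observation I would use is the triangle inequality $\lone{\theta_r'^\top\psi(s,a)}\le\ltwo{\theta_r'-\theta_{0,r}}+\lone{\theta_{0,r}^\top\psi(s,a)}$, so that on the event where the indicator equals $1$ the second term is already controlled by $\ltwo{\theta_r-\theta_{0,r}}$. This yields the pointwise bound
\begin{flalign*}
\lone{\phi_\theta(s,a)^\top\theta'-\phi_{\theta_0}(s,a)^\top\theta'}\le \frac{1}{\sqrt{m}}\sum_r \mathds{1}_r(\theta,s,a)\bigl(\ltwo{\theta_r'-\theta_{0,r}}+\ltwo{\theta_r-\theta_{0,r}}\bigr),
\end{flalign*}
where $\mathds{1}_r(\theta,s,a)\coloneqq\mathds{1}\bigl(\lone{\theta_{0,r}^\top\psi(s,a)}\le\ltwo{\theta_r-\theta_{0,r}}\bigr)$.

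Next, I would square both sides and apply Cauchy–Schwarz through the set $S=\{r:\mathds{1}_r(\theta,s,a)=1\}$, obtaining $\lone{\cdot}^2\le\frac{|S|}{m}\cdot 2\bigl(\ltwo{\theta'-\theta_0}^2+\ltwo{\theta-\theta_0}^2\bigr)\le\frac{4R^2|S|}{m}$ since both $\theta,\theta'\in\mB$. Taking expectation under $\mu_\pi$ and invoking Assumption \ref{ass1} gives $\sE_{\mu_\pi}[|S|]\le\frac{C_0}{d_1}\sum_r\ltwo{\theta_r-\theta_{0,r}}\le\frac{C_0\sqrt{m}\,\ltwo{\theta-\theta_0}}{d_1}\le\frac{C_0R\sqrt{m}}{d_1}$, by a second application of Cauchy–Schwarz across the $m$ neuron blocks. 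Combining these estimates yields the claimed bound $\frac{4C_0R^3}{d_1\sqrt{m}}$. The only real obstacle is the decoupling of $\theta$ and $\theta'$ noted above, which is resolved cleanly by the triangle-inequality splitting; every other step parallels the existing computations in \Cref{lemma: b1}–\Cref{lemma: b2}.
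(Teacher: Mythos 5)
Your proof is correct and follows essentially the same route as the paper's: the same indicator-difference bound from \cref{eq: b1}, the same triangle-inequality splitting of $\theta_r'^\top\psi(s,a)$ through $\theta_{0,r}$ so that the indicator event controls the residual, the same Cauchy--Schwarz/H\"older squaring to reach $\frac{4R^2}{m}\sum_r\mathds{1}_r$, and the same use of Assumption \ref{ass1} with $\ltwo{\theta_{0,r}}\geq d_1$ to bound the expected number of flipped activations. The only cosmetic difference is that you package the last step as a bound on $\sE[|S|]$ rather than repeating the display from \Cref{lemma: b1}; the arithmetic is identical.
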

\begin{proof}
	By definition, we have
	\begin{flalign}
		&\phi_{\theta}(s,a)^\top\theta^\prime - \phi_{\theta_0}(s,a)^\top\theta^\prime\nonumber\\
		&=\frac{1}{\sqrt{m}}\lone{\sum_{r=1}^{m}\left( \mdsone(\theta_{r}^\top\psi(x)>0) - \mdsone(\theta_{0,r}^\top\psi(x)>0) \right)b_r\theta_{r}^{\prime\top}\psi(x)}\nonumber\\
		&\leq\frac{1}{\sqrt{m}}\sum_{r=1}^{m}\lone{\left( \mdsone(\theta_{r}^\top\psi(x)>0) - \mdsone(\theta_{0,r}^\top\psi(x)>0) \right)}\lone{b_r}\ltwo{\theta_{r}^{\prime\top}\psi(x)}\nonumber\\
		&\overset{(i)}{\leq}\frac{1}{\sqrt{m}}\sum_{r=1}^{m}\mdsone(\lone{\theta^\top_{0,r}\psi(x)}\leq \ltwo{\theta_{r}-\theta_{0,r}})\ltwo{\theta_{r}^{\prime\top}\psi(x)}\nonumber\\
		&\leq\frac{1}{\sqrt{m}}\sum_{r=1}^{m}\mdsone(\lone{\theta^\top_{0,r}\psi(x)}\leq \ltwo{\theta_{r}-\theta_{0,r}})\left(\ltwo{\theta_{r}^{\prime\top}\psi(x)-\theta_{0,r}^\top \psi(x)}+\ltwo{\theta_{0,r}^\top\psi(x)}\right)\nonumber\\
		&\leq \frac{1}{\sqrt{m}}\sum_{r=1}^{m}\mdsone(\lone{\theta^\top_{0,r}\psi(x)}\leq \ltwo{\theta_{r}-\theta_{0,r}})\left(\ltwo{\theta^\prime_{r}-\theta_{0,r}}+\ltwo{\theta_{0,r}^\top\psi(s)}\right)\nonumber\\
		&\leq \frac{1}{\sqrt{m}}\sum_{r=1}^{m}\mdsone(\lone{\theta^\top_{0,r}\psi(x)}\leq \ltwo{\theta_{r}-\theta_{0,r}})\left(\ltwo{\theta^\prime_{r}-\theta_{0,r}} + \ltwo{\theta_{r}-\theta_{0,r}}\right),\label{eq: c6}
	\end{flalign}
    where $(i)$ follows from \cref{eq: b1}. Following from Holder's inequality, we obtain from \cref{eq: c6} that
    \begin{flalign}
    	&\lone{\phi_{\theta}(s,a)^\top\theta^\prime - \phi_{\theta_0}(s,a)^\top\theta^\prime}^2\nonumber\\
    	&\leq \frac{1}{m}\left[\sum_{r=1}^{m}\mdsone^2(\lone{\theta^\top_{0,r}\psi(x)}\leq \ltwo{\theta_{r}-\theta_{0,r}})\right]\left[\sum_{r=1}^{m}\left(\ltwo{\theta^\prime_{r}-\theta_{0,r}} + \ltwo{\theta_{r}-\theta_{0,r}}\right)^2 \right]\nonumber\\
    	&\leq \frac{2}{m}\left[\sum_{r=1}^{m}\mdsone^2(\lone{\theta^\top_{0,r}\psi(x)}\leq \ltwo{\theta_{r}-\theta_{0,r}})\right]\left[\sum_{r=1}^{m}\ltwo{\theta^\prime_{r}-\theta_{0,r}}^2 + \sum_{r=1}^{m}\ltwo{\theta_{r}-\theta_{0,r}}^2 \right]\nonumber\\
    	&\leq \frac{4R^2}{m}\sum_{r=1}^{m}\mdsone(\lone{\theta^\top_{0,r}\psi(x)}\leq \ltwo{\theta_{r}-\theta_{0,r}})\nonumber,
    \end{flalign}
     which implies
     \begin{flalign}
     	\lmupi{\phi_{\theta}(s,a)^\top\theta^\prime - \phi_{\theta_0}(s,a)^\top\theta^\prime}^2=\sE_{\mu_\pi}[\lone{\phi_{\theta}(s,a)^\top\theta^\prime - \phi_{\theta_0}(s,a)^\top\theta^\prime}^2]\leq \frac{4C_0R^3}{d_1\sqrt{m}},
     \end{flalign}
    where $(i)$ follows from the derivation in \Cref{lemma: b1} after \cref{eq: b2}.
\end{proof}

\begin{lemma}[Upper bound on optimality gap for neural NPG]\label{lemma_c2}
	Consider the approximated NPG updates in the neural network approximation setting. We have
	\begin{flalign*}
		&\alpha (1-\gamma)(J_0(\pi^*)-J_0(\pi_{\tau_{t} W_{t}}))\nonumber\\
		&\leq \sE_{\nu^*}\left[ D_\text{KL}(\pi^*||\pi_{\tau_t W_t}) \right] - \sE_{\nu^*}\left[ D_\text{KL}(\pi^*||\pi_{\tau_{t+1} W_{t+1}}) \right] + \frac{8\alpha C_{RN}\sqrt{C_0}R^{1.5}}{\sqrt{d_1}m^{1/4}} + \alpha^2L_f(R^2+md^2_2)  \nonumber\\
		&\quad + 2\alpha C_{RN} \lmutauWt{f((s,a),\bar{\theta}_t) - Q_{\pi_{\tau_{t} W_{t}}}(s,a)}.
	\end{flalign*}
\end{lemma}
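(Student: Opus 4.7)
The plan is to combine four ingredients: the three-point KL identity, the neural-softmax structure to linearize the log-policy ratio, \Cref{lemma_c1} for swapping the feature map at the current iterate with the feature map at initialization, and the performance-difference lemma applied at the optimum.

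First, I would exploit the KL telescoping identity
\begin{align*}
&\sE_{\nu^*}\!\left[ D_\text{KL}(\pi^*||\pi_{\tau_t W_t}) - D_\text{KL}(\pi^*||\pi_{\tau_{t+1} W_{t+1}}) \right] \\
&\qquad = \sE_{s\sim\nu^*,\,a\sim\pi^*(\cdot|s)}\!\left[\log\frac{\pi_{\tau_{t+1} W_{t+1}}(a|s)}{\pi_{\tau_t W_t}(a|s)}\right],
\end{align*}
and use the neural-softmax definition \cref{eq: 5} (with $\pi^\tau_W = \pi_{\tau W}$) to write the log-ratio as $f((s,a),\tau_{t+1}W_{t+1}) - f((s,a),\tau_t W_t) - \log Z_t(s)$, where $Z_t(s)$ is a state-only normalizer.

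Second, using a surrogate $L_f$-smoothness bound on $\log Z_t$ together with the update $\tau_{t+1}W_{t+1} - \tau_t W_t = \alpha \bar{\theta}_t$, I would approximate
\begin{align*}
&\log\frac{\pi_{\tau_{t+1} W_{t+1}}(a|s)}{\pi_{\tau_t W_t}(a|s)} \\
&\quad = \alpha\,\phi_{\tau_t W_t}(s,a)^{\!\top}\bar{\theta}_t - \alpha\,\sE_{a'\sim\pi_{\tau_t W_t}(\cdot|s)}\!\left[\phi_{\tau_t W_t}(s,a')^{\!\top}\bar{\theta}_t\right] + \mathrm{err}_t(s,a),
\end{align*}
with $\lone{\mathrm{err}_t(s,a)}\le \alpha^2 L_f(R^2+md_2^2)$, since the norms $\ltwo{\tau_tW_t}^2$ and $\ltwo{\tau_{t+1}W_{t+1}}^2$ entering the second-order term are controlled by $R^2 + md_2^2$ via $\ltwo{W_0}^2\le md_2^2$ and $\ltwo{\tau_tW_t - \tau_0 W_0}\le R$. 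Then \Cref{lemma_c1} replaces $\phi_{\tau_t W_t}(s,a)^\top\bar{\theta}_t$ by $\phi_{\theta_0}(s,a)^\top\bar{\theta}_t$ at $\lmutauWt{\cdot}$-cost $\tfrac{2\sqrt{C_0}R^{1.5}}{\sqrt{d_1}\,m^{1/4}}$, and an analogous substitution for the $a'$-expectation contributes the same error; the state-action-independent offset $f((s,a),\theta_0)$ enters symmetrically in the $a$ and $a'$ pieces and therefore cancels.

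Third, by \Cref{lemma_s3},
$(1-\gamma)(J_0(\pi^*) - J_0(\pi_{\tau_t W_t})) = \sE_{s\sim\nu^*,\,a\sim\pi^*}[Q^0_{\pi_{\tau_t W_t}}(s,a)] - \sE_{s\sim\nu^*,\,a\sim\pi_{\tau_tW_t}}[Q^0_{\pi_{\tau_t W_t}}(s,a)]$.
This matches the linear term from step two once $f((s,a),\bar{\theta}_t)$ is identified with the TD critic approximating $Q^0_{\pi_{\tau_t W_t}}$; the residual is the $\nu^*$-expectation of $\lone{f((s,a),\bar{\theta}_t) - Q^0_{\pi_{\tau_t W_t}}(s,a)}$ under both $\pi^*$ and $\pi_{\tau_tW_t}$. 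Converting this to the $\lmutauWt{\cdot}$-norm by Cauchy--Schwarz with the Radon--Nikodym bound \cref{eq: b20} produces the factor $C_{RN}$ on the remaining error terms, giving the final coefficients $2\alpha C_{RN}$ on the critic error and $\tfrac{8\alpha C_{RN}\sqrt{C_0}R^{1.5}}{\sqrt{d_1}\,m^{1/4}}$ on the feature-drift error.

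The hard part will be the ReLU non-smoothness: $f((s,a),W)$ is not classically differentiable in $W$, so the second-order expansion of the log-policy ratio must be routed through the linearization $f_0$ and justified by combining \Cref{lemma: b2} with a surrogate smoothness bound on $\log Z_t$. Keeping the three distinct sources of error --- the $f$-vs-$f_0$ linearization, the $\phi_{\tau_tW_t}$-vs-$\phi_{\theta_0}$ feature drift, and the Radon--Nikodym change of measure --- disjoint, and matching them to the precise constants $8\sqrt{C_0}R^{1.5}/\sqrt{d_1}$ and $L_f(R^2+md_2^2)$ stated in the lemma, is the main bookkeeping challenge.
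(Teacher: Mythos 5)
Your proposal follows essentially the same route as the paper's proof: KL telescoping plus the $L_f$-smoothness surrogate for the log-policy, substitution of the NPG direction $\alpha\bar{\theta}_t$, identification of the linear term with the performance difference via \Cref{lemma_s3}, the two-step feature substitution through \Cref{lemma_c1} and \Cref{lemma: b2}, and the Radon--Nikodym change of measure from \cref{eq: b20} yielding the $C_{RN}$ factors. The only (inconsequential) imprecision is that the second-order term is controlled by the norm of the update direction $\ltwo{\bar{\theta}_t}\le R+\sqrt{m}d_2$ rather than by the iterate norms $\ltwo{\tau_t W_t}$, and the two $\phi^\top\bar{\theta}_t$-vs-$f$ discrepancies under $\pi^*$ and $\pi_{\tau_t W_t}$ do not cancel but are each bounded, which is exactly where the factor $8$ in the $m^{-1/4}$ term comes from.
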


\begin{proof}

It has been verified that the feature mapping $\phi^r_W(s,a)$ is bounded \cite{wang2019neural,cai2019neural}. By following the argument similar to that in \cite{agarwal2019optimality}[Example 6.3], we can show that $\log(\pi_w(a|s))$ is $L_f$-Lipschitz. Applying the Lipschitz property of $\log(\pi_w(a|s))$, we can obtain the following.
\begin{flalign}
	&\sE_{\nu^*}\left[ D_\text{KL}(\pi^*||\pi_{\tau_t W_t}) \right] - \sE_{\nu^*}\left[ D_\text{KL}(\pi^*||\pi_{\tau_{t+1} W_{t+1}}) \right]\nonumber\\
	&= \sE_{\nu^*}\left[ \log(\pi_{\tau_{t+1} W_{t+1}}(a|s)) - \log(\pi_{\tau_{t} W_{t}}(a|s)) \right]\nonumber\\
	&\overset{(i)}{\geq} \sE_{\nu^*}\left[ \nabla_W \log(\pi_{\tau_{t} W_{t}}(a|s)) \right]^\top (\tau_{t+1} W_{t+1} - \tau_{t} W_{t} ) - \frac{L_f}{2}\ltwo{\tau_{t+1} W_{t+1}-\tau_{t} W_{t}}^2\nonumber\\
	&=\alpha \sE_{\nu^*}\left[ \nabla_W \log(\pi_{\tau_{t} W_{t}}(a|s)) \right]^\top \bar{\theta}_t - \frac{\alpha^2L_f}{2}\ltwo{\bar{\theta}_t}^2\nonumber\\
	&=\alpha \sE_{\nu^*}\left[ \phi_{W_t}(s,a) - \sE_{\pi_{\tau_{t} W_{t}}}\left[ \phi_{W_t}(s,a^\prime) \right] \right]^\top \bar{\theta}_t - \frac{\alpha^2L_f}{2}\ltwo{\bar{\theta}_t}^2\nonumber\\
	&=\alpha \sE_{\nu^*}\left[ Q_{\pi_{\tau_{t} W_{t}}}(s,a) - \sE_{\pi_{\tau_{t} W_{t}}}\left[ Q_{\pi_{\tau_{t} W_{t}}}(s,a^\prime) \right] \right] + \alpha \sE_{\nu^*}\left[ \phi_{W_t}(s,a)^\top\bar{\theta}_t - Q_{\pi_{\tau_{t} W_{t}}}(s,a)  \right] \nonumber\\
	&\quad  + \alpha \sE_{\nu^*}\sE_{\pi_{\tau_{t} W_{t}}}\left[ Q_{\pi_{\tau_{t} W_{t}}}(s,a^\prime) - \phi_{W_t}(s,a^\prime)^\top\bar{\theta}_t \right] - \frac{\alpha^2L_f}{2}\ltwo{\bar{\theta}_t}^2\nonumber\\
	&=\alpha (1-\gamma)(J_0(\pi^*)-J_0(\pi_{\tau_{t} W_{t}})) + \alpha \sE_{\nu^*}\left[ \phi_{W_t}(s,a)^\top\bar{\theta}_t - f((s,a),\bar{\theta}_t)  \right] \nonumber\\
	&\quad + \alpha \sE_{\nu^*}\left[ f((s,a),\bar{\theta}_t) - Q_{\pi_{\tau_{t} W_{t}}}(s,a)  \right]  + \alpha \sE_{\nu^*}\sE_{\pi_{\tau_{t} W_{t}}}\left[ Q_{\pi_{\tau_{t} W_{t}}}(s,a^\prime) - f((s,a^\prime),\bar{\theta}_t) \right] \nonumber\\
	&\quad + \alpha \sE_{\nu^*}\sE_{\pi_{\tau_{t} W_{t}}}\left[ f((s,a^\prime),\bar{\theta}_t) - \phi_{W_t}(s,a^\prime)^\top\bar{\theta}_t \right] - \frac{\alpha^2L_f}{2}\ltwo{\bar{\theta}_t}^2\nonumber\\
	&=\alpha (1-\gamma)(J_0(\pi^*)-J_0(\pi_{\tau_{t} W_{t}})) + \alpha \sE_{\nu^*}\left[ \phi_{W_t}(s,a)^\top\bar{\theta}_t - f((s,a),\bar{\theta}_t)  \right] \nonumber\\
	&\quad + \alpha \sE_{\nu^*}\left[ f((s,a),\bar{\theta}_t) - Q_{\pi_{\tau_{t} W_{t}}}(s,a)  \right]  + \alpha \sE_{\nu^*}\sE_{\pi_{\tau_{t} W_{t}}}\left[ Q_{\pi_{\tau_{t} W_{t}}}(s,a^\prime) - f((s,a^\prime),\bar{\theta}_t) \right] \nonumber\\
	&\quad + \alpha \sE_{\nu^*}\sE_{\pi_{\tau_{t} W_{t}}}\left[ f((s,a^\prime),\bar{\theta}_t) - \phi_{W_t}(s,a^\prime)^\top\bar{\theta}_t \right] - \frac{\alpha^2L_f}{2}\ltwo{\bar{\theta}_t}^2\nonumber\\
	&=\alpha (1-\gamma)(J_0(\pi^*)-J_0(\pi_{\tau_{t} W_{t}})) + \alpha \sE_{\nu^*}\sE_{\bar{\theta}_t}\left[ \phi_{W_t}(s,a)^\top\bar{\theta}_t - f((s,a),\bar{\theta}_t)  \right] \nonumber\\
	&\quad + \alpha \sE_{\nu^*}\left[ f((s,a),\bar{\theta}_t) - Q_{\pi_{\tau_{t} W_{t}}}(s,a)  \right]  + \alpha \sE_{\nu^*}\sE_{\pi_{\tau_{t} W_{t}}}\left[ Q_{\pi_{\tau_{t} W_{t}}}(s,a^\prime) - f((s,a^\prime),\bar{\theta}_t) \right] \nonumber\\
	&\quad + \alpha \sE_{\nu^*}\sE_{\pi_{\tau_{t} W_{t}}}\left[ f((s,a^\prime),\bar{\theta}_t) - \phi_{W_t}(s,a^\prime)^\top\bar{\theta}_t \right] - \frac{\alpha^2L_f}{2}\ltwo{\bar{\theta}_t}^2\nonumber\\
	&\geq\alpha (1-\gamma)(J_0(\pi^*)-J_0(\pi_{\tau_{t} W_{t}})) - \alpha \sqrt{\sE_{\nu^*}\left[ \left(\phi_{W_t}(s,a)^\top\bar{\theta}_t - f((s,a),\bar{\theta}_t)\right)^2  \right]} \nonumber\\
	&\quad - \alpha \sqrt{\sE_{\nu^*}\left[ (f((s,a),\bar{\theta}_t) - Q_{\pi_{\tau_{t} W_{t}}}(s,a))^2  \right]}  - \alpha \sqrt{\sE_{\nu^*}\sE_{\pi_{\tau_{t} W_{t}}}\left[ (Q_{\pi_{\tau_{t} W_{t}}}(s,a^\prime) - f((s,a^\prime),\bar{\theta}_t))^2 \right]} \nonumber\\
	&\quad - \alpha \sqrt{\sE_{\nu^*}\sE_{\pi_{\tau_{t} W_{t}}}\left[ (f((s,a^\prime),\bar{\theta}_t) - \phi_{W_t}(s,a^\prime)^\top\bar{\theta}_t)^2 \right]} - \frac{\alpha^2L_f}{2}\ltwo{\bar{\theta}_t}^2,\label{eq: c2}
\end{flalign}
where $(i)$ follows from the $L_f$-Lipschitz property of $\log(\pi_w(a|s))$.
Note that for any $x\sim \nu_{\pi_{W}}$, and any function $h(x)$, we have
\begin{flalign}
	\int_{x}h(x) d\nu^*(x) &= \int_{x}h(x) \frac{d\nu^*(x)}{d\mu_{\pi_{W}}(x)} d\mu_{\pi_{W}}(x)\nonumber\\
	&\overset{(i)}{\leq} \sqrt{\int_{x} h^2(x) d\mu_{\pi_{W}}(x)}\sqrt{\int_{x}\left(\frac{d\nu^*(x)}{d\mu_{\pi_{ W}}(x)}\right)^2d\mu_{\pi_{W}}(x)}\nonumber\\
	& \overset{(ii)}{\leq} C^2_{RN}\lmuW{h(x)},\label{eq: c3}
\end{flalign}
where $(i)$ follows from Holder's inequality, and $(ii)$ follows from \cref{eq: b20}. Similarly, we can obtain
\begin{flalign}
	\int_{x}h(x) d(\nu^*\pi_{W})(x) \leq C^2_{RN}\lmuW{h(x)}. \label{eq: c4}
\end{flalign}
Substituting \cref{eq: c3} and \cref{eq: c4} into \cref{eq: c2} and using the fact that $\ltwo{\bar{\theta}_t}\leq R+\sqrt{m}d_2$ yield
\begin{flalign}
	&\sE_{\nu^*}\left[ D_\text{KL}(\pi^*||\pi_{\tau_t W_t}) \right] - \sE_{\nu^*}\left[ D_\text{KL}(\pi^*||\pi_{\tau_{t+1} W_{t+1}}) \right]\nonumber\\
	&\geq\alpha (1-\gamma)(J_0(\pi^*)-J_0(\pi_{\tau_{t} W_{t}})) - \alpha C_{RN} \sqrt{\sE_{\nu_{\pi_{\tau_{t} W_{t}}}}\left[ \left(\phi_{W_t}(s,a)^\top\bar{\theta}_t - f((s,a),\bar{\theta}_t)\right)^2  \right]} \nonumber\\
	&\quad - \alpha C_{RN} \sqrt{\sE_{\mu_{\pi_{\tau_{t} W_{t}}}}\left[ (f((s,a),\bar{\theta}_t) - Q_{\pi_{\tau_{t} W_{t}}}(s,a))^2  \right]}  \nonumber\\
	&\quad - \alpha C_{RN} \sqrt{\sE_{\mu_{\pi_{\tau_{t} W_{t}}}}\left[ (Q_{\pi_{\tau_{t} W_{t}}}(s,a^\prime) - f((s,a^\prime),\bar{\theta}_t))^2 \right]} \nonumber\\
	&\quad - \alpha C_{RN} \sqrt{\sE_{\mu_{\pi_{\tau_{t} W_{t}}}}\left[ (f((s,a^\prime),\bar{\theta}_t) - \phi_{W_t}(s,a^\prime)^\top\bar{\theta}_t)^2 \right]} - \alpha^2L_f(R^2+md^2_2)\nonumber\\
	&=\alpha (1-\gamma)(J_0(\pi^*)-J_0(\pi_{\tau_{t} W_{t}})) - 2\alpha C_{RN} \sqrt{\sE_{\mu_{\pi_{\tau_{t} W_{t}}}}\left[ \left(\phi_{W_t}(s,a)^\top\bar{\theta}_t - f((s,a),\bar{\theta}_t)\right)^2  \right]} \nonumber\\
	&\quad - 2\alpha C_{RN} \sqrt{\sE_{\mu_{\pi_{\tau_{t} W_{t}}}}\left[ (f((s,a),\bar{\theta}_t) - Q_{\pi_{\tau_{t} W_{t}}}(s,a))^2  \right]} - \alpha^2L_f(R^2+md^2_2)\nonumber\\
	&=\alpha (1-\gamma)(J_0(\pi^*)-J_0(\pi_{\tau_{t} W_{t}})) - 2\alpha C_{RN}  \lmutauWt{\phi_{W_t}(s,a)^\top\bar{\theta}_t - f((s,a),\bar{\theta}_t) } \nonumber\\
	&\quad - 2\alpha C_{RN}  \lmutauWt{f((s,a),\bar{\theta}_t) - Q_{\pi_{\tau_{t} W_{t}}}(s,a)} - \alpha^2L_f(R^2+md^2_2).\label{eq: c5}
\end{flalign}
We then proceed to upper bound the term $\lmutauWt{\phi_{W_t}(s,a)^\top\bar{\theta}_t - f((s,a),\bar{\theta}_t) }^2$.
\begin{flalign}
	&\lmutauWt{\phi_{W_t}(s,a)^\top\bar{\theta}_t - f((s,a),\bar{\theta}_t) }^2\nonumber\\
	&=\lmutauWt{\phi_{W_t}(s,a)^\top\bar{\theta}_t - \phi_{W_0}(s,a)^\top\bar{\theta}_t + \phi_{W_0}(s,a)^\top\bar{\theta}_t - f((s,a),\bar{\theta}_t) }^2\nonumber\\
	&\leq 2\lmutauWt{\phi_{W_t}(s,a)^\top\bar{\theta}_t - \phi_{W_0}(s,a)^\top\bar{\theta}_t}^2 + 2\lmutauWt{\phi_{W_0}(s,a)^\top\bar{\theta}_t - f((s,a),\bar{\theta}_t)}^2\nonumber\\
	&\overset{(i)}{\leq} \frac{16C_0R^3}{d_1\sqrt{m}},\label{eq: c7}
\end{flalign}
where $(i)$ follows from \Cref{lemma: b2} and \Cref{lemma_c1}. Substituting \cref{eq: c7} into \cref{eq: c5} yields
\begin{flalign*}
	&\sE_{\nu^*}\left[ D_\text{KL}(\pi^*||\pi_{\tau_t W_t}) \right] - \sE_{\nu^*}\left[ D_\text{KL}(\pi^*||\pi_{\tau_{t+1} W_{t+1}}) \right]\nonumber\\
	&\leq \alpha (1-\gamma)(J_0(\pi^*)-J_0(\pi_{\tau_{t} W_{t}})) - \frac{8\alpha C_{RN}\sqrt{C_0}R^{1.5}}{\sqrt{d_1}m^{1/4}} - \alpha^2L_f(R^2+md^2_2) \nonumber\\
	&\quad - 2\alpha C_{RN}  \lmutauWt{f((s,a),\bar{\theta}_t) - Q_{\pi_{\tau_{t} W_{t}}}(s,a)}.
\end{flalign*}
Rearranging the above inequality yields the desired result.
\end{proof}
Note that when we follow the update in line 10 of \Cref{algorithm_cpg}, we can obtain similar results for the case $i\in\{1,\cdots,p\}$ as stated in \Cref{lemma_c2}:
\begin{flalign*}
	&\alpha (1-\gamma)(J_i(\pi_{\tau_{t} W_{t}})-J_i(\pi^*))\nonumber\\
	&\leq \sE_{\nu^*}\left[ D_\text{KL}(\pi^*||\pi_{\tau_t W_t}) \right] - \sE_{\nu^*}\left[ D_\text{KL}(\pi^*||\pi_{\tau_{t+1} W_{t+1}}) \right] + \frac{8\alpha C_{RN}\sqrt{C_0}R^{1.5}}{\sqrt{d_1}m^{1/4}} + \alpha^2L_f(R^2+md^2_2)  \nonumber\\
	&\quad + 2\alpha C_{RN} \lmutauWt{f((s,a),\bar{\theta}_t) - Q_{\pi_{\tau_{t} W_{t}}}(s,a)}.
\end{flalign*}

\begin{lemma}\label{lemma_n1}
	Considering the CRPO update in \Cref{algorithm_cpg} in the neural network approximation setting. Let $K_{\text{in}}=C_1((1-\gamma)^2\sqrt{m})$ and $N=T\log(2T/\delta)$. With probability at least $1-\delta$, we have
	\begin{flalign*}
	&\alpha(1-\gamma)\sum_{t\in\gN_0}(J_0(\pi^*)-J_0(\pi_{w_t})) + \alpha(1-\gamma)\eta\sum_{i=1}^{p}\lone{\gN_i}\nonumber\\
	&\leq \sE_{s\sim\nu^*} D_{\text{KL}}(\pi^*||\pi_{w_0}) + C_3\left(\frac{\alpha T}{m^{1/4}}\right) + C_4(\alpha^2mT) \nonumber\\
	&\quad  + C_5\left(\frac{\alpha T}{(1-\gamma)^{1.5}m^{1/8}} \log^{\frac{1}{4}}\left(\frac{T^3}{\delta}\right) \right) + C_6\left(\alpha(1-\gamma)\sqrt{T}\right).
	\end{flalign*}
where $C_3=\frac{8 C_{RN}\sqrt{C_0}R^{1.5}}{\sqrt{d_1}}$, $C_4=L_f(R^2+d^2_2)$, $C_5=3\alpha C_2C_{RN}$, $C_6=2C_f$ and $C_2$ is a positive constant depend on $C_1$.
\end{lemma}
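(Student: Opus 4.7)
The plan is to mirror the proof of Lemma~\ref{lemma_m2} from the tabular case, making three substitutions that account for the neural setting: the optimality-gap bound Lemma~\ref{lemma_s5} is replaced by its neural counterpart Lemma~\ref{lemma_c2}; the tabular TD convergence Lemma~\ref{lemma: tabularTD} is replaced by the high-probability neural TD bound Lemma~\ref{lemma: neuralTD}; and the exact evaluation of $J_i$ over the full state-action space (tabular case) is replaced by a sample-average estimator controlled via Assumption~\ref{ass4}.

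First I would partition the iterations $t = 0,\ldots,T-1$ into $\gN_0$ and $\gN_1,\ldots,\gN_p$, applying Lemma~\ref{lemma_c2} to each $t \in \gN_0$ and the constraint-minimization analogue of Lemma~\ref{lemma_c2} (noted immediately after that lemma) to each $t \in \gN_i$ with $i \geq 1$. Summing over $t$ and telescoping $\sE_{\nu^*}[D_{\mathrm{KL}}(\pi^* \| \pi_{\tau_t W_t})]$ would produce the initial KL term, the $C_3\,\alpha T/m^{1/4}$ term (coming from the linearization gap between $\phi_{W_t}^\top \bar\theta_t$ and $f((s,a),\bar\theta_t)$), the $C_4\,\alpha^2 m T$ term (coming from the squared-step Lipschitz correction, since $\|\bar\theta_t\|_2^2 \leq R^2 + m d_2^2$), and a residual $2\alpha C_{RN}\sum_{t=0}^{T-1}\lmutauWt{f((s,a),\bar\theta_t) - Q_{\pi_{\tau_t W_t}}(s,a)}$ accounting for the policy-evaluation error.

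Next I would convert each constraint violation into a lower bound on $J_i(\pi_{w_t}) - J_i(\pi^*)$. For $t \in \gN_i$ the selection rule gives $\bar J_{i,\gB_t} > d_i + \eta \geq J_i(\pi^*) + \eta$, so
\[
J_i(\pi_{w_t}) - J_i(\pi^*) \geq \eta - |\bar J_{i,\gB_t} - J_i(\pi_{w_t})|.
\]
Splitting the estimation error into a sampling part $|\bar J_{i,\gB_t} - \sE_{\xi\cdot\pi_{w_t}}[\bar Q^i_t]|$ and an approximation part $|\sE_{\xi\cdot\pi_{w_t}}[\bar Q^i_t] - J_i(\pi_{w_t})|$, I would control the first by a sub-Gaussian concentration inequality under Assumption~\ref{ass4} with batch size $N = T\log(2T/\delta)$, which yields an $\mathcal{O}(C_f/\sqrt{T})$ per-step error and, summed over $T$ steps and multiplied by $\alpha(1-\gamma)$, accounts for the $C_6\,\alpha(1-\gamma)\sqrt{T}$ term. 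The approximation part would be absorbed into $\lmutauWt{\bar Q^i_t - Q^i_{\pi_{w_t}}}$ via a Cauchy--Schwarz / density-ratio bound and merged into the residual from the first step. Rearranging moves $\alpha(1-\gamma)\eta\sum_{i=1}^p |\gN_i|$ to the left-hand side.

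Finally I would invoke Lemma~\ref{lemma: neuralTD} at each outer iteration with $K_{\mathrm{in}} = C_1 (1-\gamma)^2\sqrt{m}$, which gives $\lmutauWt{f((s,a),\bar\theta_t) - Q_{\pi_{\tau_t W_t}}} = \mathcal{O}\bigl((1-\gamma)^{-1.5} m^{-1/8} \log^{1/4}(K_{\mathrm{in}}/\delta')\bigr)$ with probability at least $1-\delta'$ per iteration; choosing $\delta' = \delta/(3T)$ and union-bounding over the $T$ outer iterations (and separately over the $T$ constraint-sampling events) produces the $C_5\,\alpha T / ((1-\gamma)^{1.5} m^{1/8}) \log^{1/4}(T^3/\delta)$ contribution. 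The main obstacle will be handling this union bound cleanly: because CRPO's switch between $\gN_0$ and the $\gN_i$'s at step $t$ is itself a random variable depending on all prior noise (through $\bar J_{i,\gB_t}$ and $\bar Q^i_t$), I would first intersect the $\leq 2T$ favorable events (neural TD accurate at every $t$, and the mini-batch average concentrated at every $t$) and then argue that the per-step inequalities derived from Lemma~\ref{lemma_c2} and Assumption~\ref{ass4} hold deterministically on that intersection, so that the telescoping sum closes irrespective of the realized trajectory of which $\gN_i$ each iteration falls into.
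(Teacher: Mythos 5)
Your proposal is correct and follows essentially the same route as the paper's proof: telescoping the KL divergence via Lemma~\ref{lemma_c2} and its constraint analogue over the partition $\gN_0,\gN_1,\dots,\gN_p$, using the selection rule $\bar J_{i,\gB_t}>d_i+\eta$ together with feasibility of $\pi^*$ to lower-bound $J_i(\pi_{w_t})-J_i(\pi^*)$ by $\eta$ minus the estimation error, splitting that error into a mini-batch sampling part (controlled by Assumption~\ref{ass4} and a Markov/Bernstein-type bound with $N=T\log(2T/\delta)$) and a function-approximation part (absorbed via the density-ratio constant $C_{RN}$ into the neural-TD residual bounded by Lemma~\ref{lemma: neuralTD}), and finally union-bounding over the $2T$ per-iteration events. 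Your closing remark about fixing the intersection of favorable events first, so the per-step inequalities hold regardless of the realized random switching, is exactly how the paper's argument closes.
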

\begin{proof}
	We define $\gN_i$ as the set of steps that CRPO algorithm chooses to minimize the $i$-th constraint. If $t\in \gN_0$, by \Cref{lemma_c2} we have
	\begin{flalign}
		&\alpha (1-\gamma)(J_0(\pi^*)-J_0(\pi_{\tau_{t} W_{t}}))\nonumber\\
		&\leq \sE_{\nu^*}\left[ D_\text{KL}(\pi^*||\pi_{\tau_t W_t}) \right] - \sE_{\nu^*}\left[ D_\text{KL}(\pi^*||\pi_{\tau_{t+1} W_{t+1}}) \right] + \frac{8\alpha C_{RN}\sqrt{C_0}R^{1.5}}{\sqrt{d_1}m^{1/4}} + \alpha^2L_f(R^2+md^2_2)  \nonumber\\
		&\quad + 2\alpha C_{RN} \lmutauWt{f_0((s,a),\bar{\theta}_t) - Q^0_{\pi_{\tau_{t} W_{t}}}(s,a)}.\label{eq: a1}
	\end{flalign}
	If $t\in \gN_i$, similarly we can obtain
	\begin{flalign}
		&\alpha (1-\gamma)(J_i(\pi_{\tau_{t} W_{t}})-J_i(\pi^*))\nonumber\\
		&\leq \sE_{\nu^*}\left[ D_\text{KL}(\pi^*||\pi_{\tau_t W_t}) \right] - \sE_{\nu^*}\left[ D_\text{KL}(\pi^*||\pi_{\tau_{t+1} W_{t+1}}) \right] + \frac{8\alpha C_{RN}\sqrt{C_0}R^{1.5}}{\sqrt{d_1}m^{1/4}} + \alpha^2L_f(R^2+md^2_2)  \nonumber\\
		&\quad + 2\alpha C_{RN} \lmutauWt{f_i((s,a),\bar{\theta}_t) - Q^i_{\pi_{\tau_{t} W_{t}}}(s,a)}.\label{eq: a2}
	\end{flalign}
	Taking summation of \cref{eq: m1} and \cref{eq: m2} from $t=0$ to $T-1$ yields
	\begin{flalign}
		&\alpha(1-\gamma)\sum_{t\in\gN_0}(J_0(\pi^*)-J_0(\pi_{w_t})) + \alpha(1-\gamma)\sum_{i=1}^{p}\sum_{t\in\gN_i}(J_i(\pi_{w_t})-J_i(\pi^*))\nonumber\\
		&\leq \sE_{s\sim\nu^*} D_{\text{KL}}(\pi^*||\pi_{w_0}) + \frac{8\alpha C_{RN}\sqrt{C_0}R^{1.5}T}{\sqrt{d_1}m^{1/4}} + \alpha^2L_f(R^2+md^2_2)T \nonumber\\
		&\quad + 2\alpha C_{RN} \sum_{i=0}^{p}\sum_{t\in\gN_i}\lmutauWt{f_i((s,a),\bar{\theta}_t) - Q^i_{\pi_{\tau_{t} W_{t}}}(s,a)}.\label{eq: a3}
	\end{flalign}
    Note that when $t\in \gN_i$ ($i\neq 0$), we have $\bar{J}_{i}(\theta^{i}_t)> d_{i}+\eta$ (line 9 in \Cref{algorithm_cpg}), which implies that
    \begin{flalign}
    	J_i(\pi_{\tau_tW_t})-J_i(\pi^*)&\geq \bar{J}_{i}(\theta^{i}_t)-J_i(\pi^*) - \lone{\bar{J}_{i}(\theta^{i}_t) - J_i(\pi_{\tau_tW_t})}\nonumber\\
    	&\geq d_{i}+\eta-J_i(\pi^*) - \lone{\bar{J}_{i}(\theta^{i}_t) - J_i(\pi_{\tau_tW_t})}\nonumber\\
    	&\geq \eta - \lone{\bar{J}_{i}(\theta^{i}_t) - J_i(\pi_{\tau_tW_t})}.\label{eq: a4}
    \end{flalign}
   To bound the term $\lone{\bar{J}_{i}(\theta^{i}_t) - J_i(\pi_{\tau_tW_t})}$, we proceed as follows
    \begin{flalign}
    	&\lone{\bar{J}_{i}(\theta^{i}_t) - J_i(\pi_{\tau_tW_t})}\nonumber\\
    	&=\lone{\bar{J}_{i}(\theta^{i}_t) - \sE_{\nu_{\pi_{\tau_{t} W_{t}}}}[f_i((s,a),\bar{\theta}_t)] + \sE_{\nu_{\pi_{\tau_{t} W_{t}}}}[f_i((s,a),\bar{\theta}_t)] - J_i(\pi_{\tau_tW_t})}\nonumber\\
    	&\leq \lone{\bar{J}_{i}(\theta^{i}_t) - \sE_{\nu_{\pi_{\tau_{t} W_{t}}}}[f_i((s,a),\bar{\theta}_t)]} + \lnutauWt{f_i((s,a),\bar{\theta}_t) - Q^i_{\pi_{\tau_{t} W_{t}}}(s,a)}\nonumber\\
    	&\overset{(i)}{\leq} \lone{\bar{J}_{i}(\theta^{i}_t) - \sE_{\nu_{\pi_{\tau_{t} W_{t}}}}[f_i((s,a),\bar{\theta}_t)]} + C_{RN}\lmutauWt{f_i((s,a),\bar{\theta}_t) - Q^i_{\pi_{\tau_{t} W_{t}}}(s,a)},\label{eq: a5}
    \end{flalign}
    where $(i)$ can be obtained by following steps similar to those in \cref{eq: c3}. Substituting \cref{eq: a5} into \cref{eq: a4} yields
    \begin{flalign}
    	&J_i(\pi_{\tau_tW_t})-J_i(\pi^*)\nonumber\\
    	&\geq \eta - \left( \lone{\bar{J}_{i}(\theta^{i}_t) - \sE_{\nu_{\pi_{\tau_{t} W_{t}}}}[f_i((s,a),\bar{\theta}_t)]} + C_{RN}\lmutauWt{f_i((s,a),\bar{\theta}_t) - Q^i_{\pi_{\tau_{t} W_{t}}}(s,a)} \right).\label{eq: a6}
    \end{flalign}
    Then, substituting \cref{eq: a6} into \cref{eq: a3} yields
    \begin{flalign}
    	&\alpha(1-\gamma)\sum_{t\in\gN_0}(J_0(\pi^*)-J_0(\pi_{w_t})) + \alpha(1-\gamma)\eta\sum_{i=1}^{p}\lone{\gN_i}\nonumber\\
    	&\leq \sE_{s\sim\nu^*} D_{\text{KL}}(\pi^*||\pi_{w_0}) + \frac{8\alpha C_{RN}\sqrt{C_0}R^{1.5}T}{\sqrt{d_1}m^{1/4}} + \alpha^2L_f(R^2+md^2_2)T \nonumber\\
    	&\quad + 3\alpha C_{RN} \sum_{i=0}^{p}\sum_{t\in\gN_i}\lmutauWt{f_i((s,a),\bar{\theta}_t) - Q^i_{\pi_{\tau_{t} W_{t}}}(s,a)} \nonumber\\
    	&\quad + \alpha(1-\gamma)\sum_{i=1}^{p}\sum_{t\in\gN_i}\lone{\bar{J}_{i}(\theta^{i}_t) - \sE_{\nu_{\pi_{\tau_{t} W_{t}}}}[f_i((s,a),\bar{\theta}_t)]}\nonumber\\
    	&\leq \sE_{s\sim\nu^*} D_{\text{KL}}(\pi^*||\pi_{w_0}) + \frac{8\alpha C_{RN}\sqrt{C_0}R^{1.5}T}{\sqrt{d_1}m^{1/4}} + \alpha^2L_f(R^2+md^2_2)T \nonumber\\
    	&\quad + 3\alpha C_{RN} \sum_{t=0}^{T-1}\lmutauWt{f_i((s,a),\bar{\theta}_t) - Q^i_{\pi_{\tau_{t} W_{t}}}(s,a)} \nonumber\\
    	&\quad + \alpha(1-\gamma)\sum_{t=0}^{T-1}\lone{\bar{J}_{i}(\theta^{i}_t) - \sE_{\nu_{\pi_{\tau_{t} W_{t}}}}[f_i((s,a),\bar{\theta}_t)]}.\label{eq: a7}
    \end{flalign}
    We then upper bound the term $\sum_{t=0}^{T-1}\lmutauWt{f_i((s,a),\bar{\theta}_t) - Q^i_{\pi_{\tau_{t} W_{t}}}(s,a)}$. \Cref{lemma: neuralTD} implies that if we let $K_{\text{in}}=C_1((1-\gamma)^2\sqrt{m})$, then with probability at least $1-\delta_1/T$, we have
    \begin{flalign*}
    	\lmupi{f((s,a);\bar{\theta}_K) - Q_\pi(s,a)}\leq   C_2\left(\frac{1}{(1-\gamma)^{1.5}m^{1/8}} \log^{\frac{1}{4}}\left(\frac{(1-\gamma)^2T\sqrt{m}}{\delta_1}\right) \right),
    \end{flalign*}
    where $C_1$ and $C_2$ are positive constant. Applying the union bound, we have with probability at least $1-\delta_1$,
    \begin{flalign}
    	&\sum_{t=0}^{T-1}\lmutauWt{f_i((s,a),\bar{\theta}_t) - Q^i_{\pi_{\tau_{t} W_{t}}}(s,a)}\leq C_2\left(\frac{T}{(1-\gamma)^{1.5}m^{1/8}} \log^{\frac{1}{4}}\left(\frac{(1-\gamma)^2T\sqrt{m}}{\delta_1}\right) \right).\label{eq: a9}
    \end{flalign}
    We then bound the term $\sum_{t=0}^{T-1}\lone{\bar{J}_{i}(\theta^{i}_t) - \sE_{\nu_{\pi_{\tau_{t} W_{t}}}}[f_i((s,a),\bar{\theta}_t)]}$. For simplicity, we denote $J^\prime_i(\bar{\theta}_t)=\sE_{\xi\cdot\mu_{\pi_{\tau_{t} W_{t}}}}[f_i((s,a),\bar{\theta}_t)]$. Recall that $\bar{J}_{i}(\theta^{i}_t)=\frac{1}{N}\sum_{j=1}^{N}f_i((s_j,a_j),\bar{\theta}_t)$. For each $t\geq 0$, we bound the error $\bar{J}_{i}(\theta^{i}_t) - J^\prime_i(\bar{\theta}_t)$ as follows:
    \begin{flalign}
    	&\mP\left( \left( \frac{1}{N}\sum_{j=1}^{N}f_i((s_j,a_j),\bar{\theta}_t) - J^\prime_i(\bar{\theta}_t) \right)^2\geq \frac{(1+\Lambda)C_f^2}{N}\right)\nonumber\\
    	&\leq \mP\left(  \frac{1}{N}\sum_{j=1}^{N}\frac{\left[f_i((s_j,a_j),\bar{\theta}_t) - J^\prime_i(\bar{\theta}_t)\right]^2}{C_f^2} \geq 1+\Lambda\right)\nonumber\\
    	&= \mP\left( \exp\left(\frac{1}{N}\sum_{j=1}^{N}\frac{\left[f_i((s_j,a_j),\bar{\theta}_t) - J^\prime_i(\bar{\theta}_t)\right]^2}{C^2_f}\right) \geq 1+\Lambda\right)\nonumber\\
    	&\leq \mP\left( \frac{1}{N}\sum_{j=1}^{N}\exp\left( \frac{\left[f_i((s_j,a_j),\bar{\theta}_t) - J^\prime_i(\bar{\theta}_t)\right]^2}{C^2_f} \right) \geq 1+\Lambda\right)\nonumber\\
    	&\overset{(i)}{\leq} \frac{1}{N}\sum_{j=1}^{N}\sE\left[ \exp\left( \frac{\left[f_i((s_j,a_j),\bar{\theta}_t) - J^\prime_i(\bar{\theta}_t)\right]^2}{C^2_f} \right) \right]/\exp(1+\Lambda)\nonumber\\
    	&\leq \exp(-\Lambda),\label{eq: a8}
    \end{flalign}
    where $(i)$ follows from Markov's inequality. Then, \cref{eq: a8} implies that with probability at least $1-\delta_2/T$, we have
    \begin{flalign*}
    	\lone{\frac{1}{N}\sum_{j=1}^{N}f_i((s_j,a_j),\bar{\theta}_t) - J^\prime_i(\bar{\theta}_t)}\leq \frac{C_f}{\sqrt{N}}\left( 1 + \sqrt{\log\left( \frac{T}{\delta_2} \right)} \right).
    \end{flalign*}
    Applying the union bound, we have with probability at least $1-\delta_2$,
    \begin{flalign}
    	\sum_{t=0}^{T-1}\lone{\bar{J}_{i}(\theta^{i}_t) - \sE_{\nu_{\pi_{\tau_{t} W_{t}}}}[f_i((s,a),\bar{\theta}_t)]}\leq \frac{C_f T}{\sqrt{N}}\left( 1 + \sqrt{\log\left( \frac{T}{\delta_2} \right)} \right).\label{eq: a10}
    \end{flalign}
    Letting $\delta_1=\delta_2=\frac{\delta}{2}$, $N=T\log(2T/\delta)$, and combing \cref{eq: a9} and \cref{eq: a10}, we have with probability at least $1-\delta$
    \begin{flalign*}
    &\alpha(1-\gamma)\sum_{t\in\gN_0}(J_0(\pi^*)-J_0(\pi_{w_t})) + \alpha(1-\gamma)\eta\sum_{i=1}^{p}\lone{\gN_i}\nonumber\\
    &\leq \sE_{s\sim\nu^*} D_{\text{KL}}(\pi^*||\pi_{w_0}) + C_3\left(\frac{\alpha T}{m^{1/4}}\right) + C_4(\alpha^2mT) \nonumber\\
    &\quad  + C_5\left(\frac{\alpha T}{(1-\gamma)^{1.5}m^{1/8}} \log^{\frac{1}{4}}\left(\frac{(1-\gamma)^2T\sqrt{m}}{\delta}\right) \right) + C_6\left(\alpha(1-\gamma)\sqrt{T}\right),
    \end{flalign*}
where $C_3=\frac{8 C_{RN}\sqrt{C_0}R^{1.5}}{\sqrt{d_1}}$, $C_4=L_f(R^2+d^2_2)$, $C_5=3\alpha C_2C_{RN}$, and $C_6=2C_f$ are positive constants.
\end{proof}

\begin{lemma}\label{lemma_n2}
	Let $K_{\text{in}}=C_1((1-\gamma)^2\sqrt{m})$, $N=T\log(2T/\delta)$, and
	\begin{flalign}
	\frac{1}{2}\alpha(1-\gamma)\eta T&\geq \sE_{s\sim\nu^*} D_{\text{KL}}(\pi^*||\pi_{w_0}) + C_3\left(\frac{\alpha T}{m^{1/4}}\right) + C_4(\alpha^2mT)  \nonumber\\
	&\quad  + C_5\left(\frac{\alpha T}{(1-\gamma)^{1.5}m^{1/8}} \log^{\frac{1}{4}}\left(\frac{(1-\gamma)^2T\sqrt{m}}{\delta}\right) \right) + C_6\left(\alpha(1-\gamma)\sqrt{T}\right).\label{eq: a11}
	\end{flalign}
	Then with probability at least $1-\delta$, we have the following holds
	\begin{enumerate}
		\item $\gN_0\neq \emptyset$, i.e., $w_{\text{out}}$ is well-defined,
		\item One of the following two statements must hold,
		\begin{enumerate}
			\item $\lone{\gN_0}\geq T/2$,
			\item $\sum_{t\in\gG}(J_0(\pi^*)-J_0(w_t))\leq 0$.
		\end{enumerate}
	\end{enumerate}
\end{lemma}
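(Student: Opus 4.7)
The plan is to mirror the argument used for \Cref{lemma_m1} in the tabular setting, leveraging the bound from \Cref{lemma_n1} as the sole ingredient. Since \Cref{lemma_n1} holds with probability at least $1-\delta$, we condition on that event throughout the proof, so that all claims below hold on that same high-probability event.

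First I would verify item 1 by contradiction. If $\gN_0=\emptyset$, then by definition $\sum_{i=1}^p \lone{\gN_i}=T$, and the objective-gap contribution $\alpha(1-\gamma)\sum_{t\in\gN_0}(J_0(\pi^*)-J_0(\pi_{w_t}))$ in \Cref{lemma_n1} is vacuously zero. The bound in \Cref{lemma_n1} then reduces to
\begin{flalign*}
\alpha(1-\gamma)\eta T &\leq \sE_{s\sim\nu^*} D_{\text{KL}}(\pi^*||\pi_{w_0}) + C_3\!\left(\tfrac{\alpha T}{m^{1/4}}\right) + C_4(\alpha^2mT) \\
&\quad + C_5\!\left(\tfrac{\alpha T}{(1-\gamma)^{1.5}m^{1/8}}\log^{1/4}\!\left(\tfrac{(1-\gamma)^2T\sqrt{m}}{\delta}\right)\right) + C_6\big(\alpha(1-\gamma)\sqrt{T}\big),
\end{flalign*}
which directly contradicts the hypothesis \cref{eq: a11} (the RHS above equals half of $\alpha(1-\gamma)\eta T$ or smaller). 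Hence $\gN_0\neq\emptyset$, so $w_{\text{out}}$, being drawn uniformly from $\gN_0$, is well-defined.

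Next I would dispatch item 2 by a second contradiction argument. Suppose statement (b) fails, i.e.\ $\sum_{t\in\gN_0}(J_0(\pi^*)-J_0(w_t))>0$. Then the first term on the left-hand side of \Cref{lemma_n1}'s bound is strictly positive and can be dropped, yielding
\begin{flalign*}
\alpha(1-\gamma)\eta\sum_{i=1}^{p}\lone{\gN_i} \leq \text{RHS of \Cref{lemma_n1}}.
\end{flalign*}
If in addition (a) fails, i.e.\ $\lone{\gN_0}<T/2$, then $\sum_{i=1}^p\lone{\gN_i}=T-\lone{\gN_0}>T/2$, which gives
\begin{flalign*}
\tfrac{1}{2}\alpha(1-\gamma)\eta T \leq \alpha(1-\gamma)\eta\sum_{i=1}^{p}\lone{\gN_i} \leq \text{RHS of \Cref{lemma_n1}},
\end{flalign*}
again contradicting hypothesis \cref{eq: a11}. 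Hence at least one of (a) and (b) must hold, completing the proof of item 2.

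The argument is essentially bookkeeping once \Cref{lemma_n1} is in hand, so I do not foresee a significant obstacle; the only subtle point is to make sure the high-probability event on which \Cref{lemma_n1} holds is the same $1-\delta$ event asserted in \Cref{lemma_n2}, which is automatic because no additional randomness is introduced in the deterministic counting argument above. The parameter choice $K_{\text{in}}=C_1((1-\gamma)^2\sqrt{m})$ and $N=T\log(2T/\delta)$ in the hypothesis is used only indirectly, through its role in guaranteeing the bound of \Cref{lemma_n1}.
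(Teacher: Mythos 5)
Your proposal is correct and follows essentially the same two contradiction arguments as the paper's own proof: conditioning on the event of \Cref{lemma_n1} and then using the counting identity $\sum_{i=1}^p\lone{\gN_i}=T-\lone{\gN_0}$ against hypothesis \cref{eq: a11}. Your write-up of item 2 is in fact slightly cleaner, since the paper's version contains a typo stating the same inequality in both branches of the case split.
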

\begin{proof}
	Under the event given in \Cref{lemma_n1}, which happens with probability at least $1-\delta$, we have
	 \begin{flalign}
	&\alpha(1-\gamma)\sum_{t\in\gN_0}(J_0(\pi^*)-J_0(\pi_{w_t})) + \alpha(1-\gamma)\eta\sum_{i=1}^{p}\lone{\gN_i}\nonumber\\
	&\leq \sE_{s\sim\nu^*} D_{\text{KL}}(\pi^*||\pi_{w_0}) + C_3\left(\frac{\alpha T}{m^{1/4}}\right) + C_4(\alpha^2mT)  \nonumber\\
	&\quad  + C_5\left(\frac{\alpha T}{(1-\gamma)^{1.5}m^{1/8}} \log^{\frac{1}{4}}\left(\frac{(1-\gamma)^2T\sqrt{m}}{\delta}\right) \right) + C_6\left(\alpha(1-\gamma)\sqrt{T}\right).\label{eq: a12}
	\end{flalign}
	We first verify item 1. If $\gN_0= \emptyset$, then $\sum_{i=1}^{p}\lone{\gN_i} = T$, and \Cref{lemma_n1} implies that
	\begin{flalign*}
	\alpha(1-\gamma)\eta T&\leq \sE_{s\sim\nu^*} D_{\text{KL}}(\pi^*||\pi_{w_0}) + C_3\left(\frac{\alpha T}{m^{1/4}}\right) + C_4(\alpha^2mT) \nonumber\\
	&\quad  + C_5\left(\frac{\alpha T}{(1-\gamma)^{1.5}m^{1/8}} \log^{\frac{1}{4}}\left(\frac{(1-\gamma)^2T\sqrt{m}}{\delta}\right) \right) + C_6\left(\alpha(1-\gamma)\sqrt{T}\right),
	\end{flalign*}
	which contradicts \cref{eq: a11}. Thus, we must have $\gN_0\neq \emptyset$.
	
	We then proceed to verify the item 2. If $\sum_{t\in\gG}(J_0(\pi^*)-J_0(w_t))\leq 0$, then (b) in item 2 holds. If $\sum_{t\in\gG}(J_0(\pi^*)-J_0(w_t))\leq 0$, then \cref{eq: a12} implies that
	\begin{flalign*}
	\alpha(1-\gamma)\eta\sum_{i=1}^{p}\lone{\gN_i}&\leq \sE_{s\sim\nu^*} D_{\text{KL}}(\pi^*||\pi_{w_0}) + C_3\left(\frac{\alpha T}{m^{1/4}}\right) + C_4(\alpha^2mT)  \nonumber\\
	&\quad  + C_5\left(\frac{\alpha T}{(1-\gamma)^{1.5}m^{1/8}} \log^{\frac{1}{4}}\left(\frac{(1-\gamma)^2T\sqrt{m}}{\delta}\right) \right) + C_6\left(\alpha(1-\gamma)\sqrt{T}\right).
	\end{flalign*}
	Suppose that $\lone{\gN_0}<T/2$, i.e., $\sum_{i=1}^{p}\lone{\gN_i}\geq T/2$. Then,
	\begin{flalign*}
	\frac{1}{2}\alpha(1-\gamma)\eta T&\leq \sE_{s\sim\nu^*} D_{\text{KL}}(\pi^*||\pi_{w_0}) + C_3\left(\frac{\alpha T}{m^{1/4}}\right) + C_4(\alpha^2mT) \nonumber\\
	&\quad  + C_5\left(\frac{\alpha T}{(1-\gamma)^{1.5}m^{1/8}} \log^{\frac{1}{4}}\left(\frac{(1-\gamma)^2T\sqrt{m}}{\delta}\right) \right) + C_6\left(\alpha(1-\gamma)\sqrt{T}\right),
	\end{flalign*}
	which contradicts \cref{eq: a11}. Hence, (a) in item 2 holds.
\end{proof}

\subsection{Proof of \Cref{thm2}}
We restate \Cref{thm2} as follows to include the specifics of the parameters.
\begin{theorem}[Restatement of \Cref{thm2}]\label{thm4}
	Consider \Cref{algorithm_cpg} in the neural network approximation setting. Suppose Assumptions \ref{ass1}-\ref{ass4} hold. Let $\alpha = \frac{1}{2C_4\sqrt{T}}$ and 
	\begin{flalign*}
		\eta&= \frac{4C_4\sE_{s\sim\nu^*} D_{\text{KL}}(\pi^*||\pi_{w_0})}{(1-\gamma)\sqrt{T}} + \frac{2C_3}{(1-\gamma)m^{1/4}} + \frac{m}{(1-\gamma)\sqrt{T}}  \nonumber\\
		&\quad  + 2C_5\left(\frac{\alpha T}{(1-\gamma)^{2.5}m^{1/8}} \log^{\frac{1}{4}}\left(\frac{(1-\gamma)^2T\sqrt{m}}{\delta}\right) \right) + \frac{2C_6}{\sqrt{T}}.
	\end{flalign*}
	Suppose performing neural TD with $K_{\text{in}}=C_1(1-\gamma)^2\sqrt{m}$ iterations at each iteration of CRPO. Then, with probability at least $1-\delta$, we have
	\begin{flalign*}
	J_0(\pi^*)-\sE[J_0(\pi_{w_{\text{out}}})]\leq \frac{C_7m}{(1-\gamma)\sqrt{T}} + \frac{C_8}{(1-\gamma)^{2.5}m^{1/8}} \log^{\frac{1}{4}}\left(\frac{(1-\gamma)^2T\sqrt{m}}{\delta}\right),
	\end{flalign*}
where 
\begin{flalign*}
	C_7 = \frac{4C_4 D_{\text{KL}}(\pi^*||\pi_{w_0})}{m} + \frac{2(1-\gamma)C_6}{m} + 1,
\end{flalign*}
and 
\begin{flalign*}
	C_8 = 2C_5 + \frac{2C_3(1-\gamma)^{1.5}}{m^{1/8}}.
\end{flalign*}
	For all $i\in\{1,\cdots,p\}$, we have
	\begin{flalign*}
	\sE[J_i(\pi_{w_{\text{out}}})]-d_i&\leq \frac{4C_4\sE_{s\sim\nu^*} D_{\text{KL}}(\pi^*||\pi_{w_0})}{(1-\gamma)\sqrt{T}} + \frac{2C_3}{(1-\gamma)m^{1/4}} + \frac{m}{(1-\gamma)\sqrt{T}}  \nonumber\\
	&\quad  + 2(C_2+C_5)\left(\frac{\alpha T}{(1-\gamma)^{2.5}m^{1/8}} \log^{\frac{1}{4}}\left(\frac{(1-\gamma)^2T\sqrt{m}}{\delta}\right) \right) + \frac{4C_6}{\sqrt{T}}.
	\end{flalign*}
	
\end{theorem}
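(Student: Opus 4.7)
The plan is to mirror the structure used in the tabular case (proof of \Cref{thm3}), but now invoking the function–approximation analogues: \Cref{lemma: neuralTD} (high-probability neural TD bound), \Cref{lemma_c2} (per-step NPG descent in KL), \Cref{lemma_n1} (telescoped inequality summing the ``good'' steps in $\gN_0$ against violated steps in $\gN_i$), and \Cref{lemma_n2} (dichotomy on $|\gN_0|$). The output $W_{\text{out}}$ is drawn uniformly from $\gN_0$, so the proof reduces to (i) showing $\gN_0$ is large, (ii) converting the telescoped bound into an average-iterate gap, and (iii) bounding constraint violations using the very definition of $\gN_0$.

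First, I would verify that the parameter choice $\alpha=1/(2C_4\sqrt{T})$, the stated $\eta$, $N=T\log(2T/\delta)$, and $K_{\text{in}}=C_1(1-\gamma)^2\sqrt{m}$ satisfy the sufficient condition \cref{eq: a11} of \Cref{lemma_n2}. This is a direct substitution: the $\eta$ in \Cref{thm4} is chosen precisely so that $\tfrac12 \alpha(1-\gamma)\eta T$ dominates the sum $\sE_{s\sim\nu^*}D_{\text{KL}}(\pi^*\|\pi_{w_0}) + C_3\alpha T/m^{1/4} + C_4\alpha^2 m T + C_5\cdot(\alpha T/((1-\gamma)^{1.5}m^{1/8}))\log^{1/4}(\cdots) + C_6\alpha(1-\gamma)\sqrt{T}$. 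Consequently, on the $(1-\delta)$-probability event of \Cref{lemma_n1}, we have $\gN_0\neq\emptyset$ and either $\lone{\gN_0}\geq T/2$ or $\sum_{t\in\gN_0}(J_0(\pi^*)-J_0(\pi_{w_t}))\leq 0$.

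Next, I would split on the dichotomy to bound $J_0(\pi^*)-\sE[J_0(\pi_{W_{\text{out}}})]$. In the trivial branch the average over $\gN_0$ is non-positive. In the other branch $|\gN_0|\geq T/2$, so dividing the telescoped inequality of \Cref{lemma_n1} by $|\gN_0|$ and dropping the non-negative constraint-violation term on the left yields
\begin{align*}
J_0(\pi^*)-\sE[J_0(\pi_{W_{\text{out}}})]
&\leq \frac{2\sE_{\nu^*}D_{\text{KL}}(\pi^*\|\pi_{w_0})}{\alpha(1-\gamma)T} + \frac{2C_3}{(1-\gamma)m^{1/4}} + \frac{2C_4 \alpha m}{1-\gamma}\\
&\quad + \frac{2C_5}{(1-\gamma)^{2.5}m^{1/8}}\log^{1/4}\!\Big(\tfrac{(1-\gamma)^2 T\sqrt{m}}{\delta}\Big) + \frac{2C_6}{\sqrt{T}},
\end{align*}
and plugging in $\alpha=1/(2C_4\sqrt{T})$ collects all the $1/\sqrt{T}$ terms into the advertised $C_7 m/((1-\gamma)\sqrt{T})$ bound while the $m^{-1/8}$ term contributes the $C_8$ piece.

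For the constraint violation bound, I would exploit that by construction $t\in\gN_0$ implies $\bar{J}_{i,\gB_t}\leq d_i+\eta$ for every $i\geq 1$. Thus
\begin{align*}
\sE[J_i(\pi_{W_{\text{out}}})]-d_i
&\leq \frac{1}{|\gN_0|}\sum_{t\in\gN_0}\!\left(\bar{J}_{i,\gB_t}-d_i\right) + \frac{1}{|\gN_0|}\sum_{t\in\gN_0}\!\big|J_i(\pi_{\tau_tW_t})-\bar{J}_{i,\gB_t}\big|\\
&\leq \eta + \frac{2}{T}\sum_{t=0}^{T-1}\!\Big(\big|\bar{J}_{i,\gB_t}-\sE_{\xi\cdot\mu_{\pi_{\tau_tW_t}}}[f_i((s,a),\bar\theta_t)]\big| + C_{RN}\lmutauWt{f_i((s,a),\bar\theta_t)-Q^i_{\pi_{\tau_tW_t}}(s,a)}\Big),
\end{align*}
where the split and the reweighting argument from \cref{eq: a5} are used. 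The two error sums on the right are exactly the quantities controlled by \cref{eq: a9} (from \Cref{lemma: neuralTD} plus a union bound) and \cref{eq: a10} (from Assumption \ref{ass4} and Bernstein). Combining these with the chosen $\eta$ produces the stated bound. The main obstacle of the whole argument is bookkeeping of the probability budget: the event of \Cref{lemma_n1} already absorbs the high-probability bounds used for $\gN_0$-averages, so one must be careful not to double-count and to ensure the two error sums above can be re-used on the same $(1-\delta)$ event, which is why the union bound in \Cref{lemma_n1} was taken over all $t=0,\dots,T-1$ rather than just over $\gN_0$.
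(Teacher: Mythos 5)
Your proposal is correct and follows essentially the same route as the paper's own proof: it invokes the event of \Cref{lemma_n1}, applies the dichotomy of \Cref{lemma_n2} under the same parameter verification, divides the telescoped bound by $\lone{\gN_0}\geq T/2$ for the optimality gap, and bounds the constraint violation by $\eta$ plus the two error sums controlled by \cref{eq: a9} and \cref{eq: a10}. No substantive differences or gaps.
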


	To proceed the proof of \Cref{thm2}/\Cref{thm4}, we consider the event given in \Cref{lemma_n1}, which happens with probability at least $1-\delta$:
	\begin{flalign}
	&\alpha(1-\gamma)\sum_{t\in\gN_0}(J_0(\pi^*)-J_0(\pi_{w_t})) + \alpha(1-\gamma)\eta\sum_{i=1}^{p}\lone{\gN_i}\nonumber\\
	&\leq \sE_{s\sim\nu^*} D_{\text{KL}}(\pi^*||\pi_{w_0}) + C_3\left(\frac{\alpha T}{m^{1/4}}\right) + C_4(\alpha^2mT) \nonumber\\
	&\quad  + C_5\left(\frac{\alpha T}{(1-\gamma)^{1.5}m^{1/8}} \log^{\frac{1}{4}}\left(\frac{(1-\gamma)^2T\sqrt{m}}{\delta}\right) \right) + C_6\left(\alpha(1-\gamma)\sqrt{T}\right).\label{eq: a13}
	\end{flalign}
	We first consider the convergence rate of the objective function. Under the aforementioned event, we have the following holds:
	\begin{flalign}
	&\alpha(1-\gamma)\sum_{t\in\gN_0}(J_0(\pi^*)-J_0(\pi_{w_t}))\nonumber\\
	&\leq \sE_{s\sim\nu^*} D_{\text{KL}}(\pi^*||\pi_{w_0}) + C_3\left(\frac{\alpha T}{m^{1/4}}\right) + C_4(\alpha^2mT)  \nonumber\\
	&\quad  + C_5\left(\frac{\alpha T}{(1-\gamma)^{1.5}m^{1/8}} \log^{\frac{1}{4}}\left(\frac{(1-\gamma)^2T\sqrt{m}}{\delta}\right) \right) + C_6\left(\alpha(1-\gamma)\sqrt{T}\right).\nonumber
	\end{flalign}
	If $\sum_{t\in\gN_0}(J_0(\pi^*)-J_0(\pi_{w_t}))\leq 0$, then we have $J_0(\pi^*)-J_0(\pi_{w_{\text{out}}})\leq 0$. If $\sum_{t\in\gN_0}(J_0(\pi^*)-J_0(\pi_{w_t}))\geq 0$, we have $\lone{\gN_0}\geq T/2$, which implies the following convergence rate
	\begin{flalign*}
	J_0(\pi^*)-\sE[J_0(\pi_{w_{\text{out}}})]&=\frac{1}{\lone{\gN_0}}\sum_{t\in\gN_0}(J_0(\pi^*)-J_0(\pi_{w_t}))\nonumber\\
	&\leq \frac{2\sE_{s\sim\nu^*} D_{\text{KL}}(\pi^*||\pi_{w_0})}{\alpha(1-\gamma)T} + \frac{2C_3}{(1-\gamma)m^{1/4}} + \frac{2C_4\alpha m}{1-\gamma} \nonumber\\
	&\quad  + \frac{2C_5}{(1-\gamma)^{2.5}m^{1/8}} \log^{\frac{1}{4}}\left(\frac{(1-\gamma)^2T\sqrt{m}}{\delta}\right)  + \frac{2C_6}{\sqrt{T}}.
	\end{flalign*}
	Letting $\alpha = \frac{1}{2C_4\sqrt{T}}$, we can obtain the following convergence rate
	\begin{flalign*}
		J_0(\pi^*)-\sE[J_0(\pi_{w_{\text{out}}})]\leq \frac{C_7m}{(1-\gamma)\sqrt{T}} + \frac{C_8}{(1-\gamma)^{2.5}m^{1/8}} \log^{\frac{1}{4}}\left(\frac{(1-\gamma)^2T\sqrt{m}}{\delta}\right),
	\end{flalign*}
	where
	\begin{flalign*}
		C_7 = \frac{4C_4 D_{\text{KL}}(\pi^*||\pi_{w_0})}{m} + \frac{2(1-\gamma)C_6}{m} + 1,
	\end{flalign*}
    and 
    \begin{flalign*}
    	C_8 = 2C_5 + \frac{2C_3(1-\gamma)^{1.5}}{m^{1/8}}.
    \end{flalign*}
	We then proceed to bound the constraint violation. For any $i\in\{1,\cdots,p\}$, we have
	\begin{flalign*}
	\sE[J_i(\pi_{w_{\text{out}}})]-d_i&=\frac{1}{\lone{\gN_0}}\sum_{t\in\gN_0}J_i(\pi_{w_t}) - d_i\nonumber\\
	&\leq \frac{1}{\lone{\gN_0}}\sum_{t\in\gN_0}(\bar{J}_i(\theta^i_t) - d_i) + \frac{1}{\lone{\gN_0}}\sum_{t\in\gN_0} \lone{J_i(\pi_{w_t})-\bar{J}_i(\theta^i_t)} \nonumber\\
	&\leq \eta + \frac{1}{\lone{\gN_0}}\sum_{t=0}^{T-1} \lone{J_i(\pi_{w_t})-\bar{J}_i(\theta^i_t)} \nonumber\\
	&\leq \eta + \frac{1}{\lone{\gN_0}}\sum_{t=0}^{T-1}\lone{\bar{J}_{i}(\theta^{i}_t) - \sE_{\nu_{\pi_{\tau_{t} W_{t}}}}[f_i((s,a),\bar{\theta}_t)]} \nonumber\\
	&\quad + \frac{C_{RN}}{\gN_0}\sum_{t=0}^{T-1}\lmutauWt{f_i((s,a),\bar{\theta}_t) - Q^i_{\pi_{\tau_{t} W_{t}}}(s,a)}.\nonumber
	\end{flalign*}
	Recalling \cref{eq: a9} and \cref{eq: a10}, under the event defined in \cref{eq: a13}, we have 
	\begin{flalign}
		\sum_{t=0}^{T-1}\lone{\bar{J}_{i}(\theta^{i}_t) - \sE_{\nu_{\pi_{\tau_{t} W_{t}}}}[f_i((s,a),\bar{\theta}_t)]}\leq C_6\sqrt{T},\label{eq: a14}
	\end{flalign}
	and
	\begin{flalign}
	&\sum_{t=0}^{T-1}\lmutauWt{f_i((s,a),\bar{\theta}_t) - Q^i_{\pi_{\tau_{t} W_{t}}}(s,a)}\nonumber\\
	& \leq C_2\left(\frac{T}{(1-\gamma)^{1.5}m^{1/8}} \log^{\frac{1}{4}}\left(\frac{2(1-\gamma)^2T\sqrt{m}}{\delta}\right) \right).\label{eq: a15}
	\end{flalign}
	Let the value of the tolerance $\eta$ be
	\begin{flalign}
	\eta&= \frac{4C_4\sE_{s\sim\nu^*} D_{\text{KL}}(\pi^*||\pi_{w_0})}{(1-\gamma)\sqrt{T}} + \frac{2C_3}{(1-\gamma)m^{1/4}} + \frac{m}{(1-\gamma)\sqrt{T}}  \nonumber\\
	&\quad  + 2C_5\left(\frac{\alpha T}{(1-\gamma)^{2.5}m^{1/8}} \log^{\frac{1}{4}}\left(\frac{(1-\gamma)^2T\sqrt{m}}{\delta}\right) \right) + \frac{2C_6}{\sqrt{T}},\label{eq: a16}
	\end{flalign}
    We have
    \begin{flalign*}
    	\frac{1}{2}\alpha(1-\gamma)\eta T&\geq \sE_{s\sim\nu^*} D_{\text{KL}}(\pi^*||\pi_{w_0}) + C_3\left(\frac{\alpha T}{m^{1/4}}\right) + C_4(\alpha^2mT)  \nonumber\\
    	&\quad  + C_5\left(\frac{\alpha T}{(1-\gamma)^{1.5}m^{1/8}} \log^{\frac{1}{4}}\left(\frac{(1-\gamma)^2T\sqrt{m}}{\delta}\right) \right) + C_6\left(\alpha(1-\gamma)\sqrt{T}\right),
    \end{flalign*}
which satisfies the requirement specified in \Cref{lemma_n2}.
	Combining \cref{eq: a14}, \cref{eq: a15} and \cref{eq: a16}, and using \Cref{lemma_n2}, we have with probability at least $1-\delta$ at least one of the following holds:
	\begin{flalign*}
		\sE[J_i(\pi_{w_{\text{out}}})]-d_i\leq 0,
	\end{flalign*}
or $\lone{\gN_0}\geq T/2$, which further implies
	\begin{flalign*}
	\sE[J_i(\pi_{w_{\text{out}}})]-d_i&\leq \frac{4C_4\sE_{s\sim\nu^*} D_{\text{KL}}(\pi^*||\pi_{w_0})}{(1-\gamma)\sqrt{T}} + \frac{2C_3}{(1-\gamma)m^{1/4}} + \frac{m}{(1-\gamma)\sqrt{T}}  \nonumber\\
	&\quad  + 2(C_2+C_5)\left(\frac{\alpha T}{(1-\gamma)^{2.5}m^{1/8}} \log^{\frac{1}{4}}\left(\frac{(1-\gamma)^2T\sqrt{m}}{\delta}\right) \right) + \frac{4C_6}{\sqrt{T}}.
	\end{flalign*}

\end{document}